\newtheorem{definition}{Definition}
\newtheorem{theorem}{\bf Theorem}
\newtheorem{proposition}{Proposition}
\newtheorem{lemma}{\bf Lemma}
\newcommand{\argmin}{\operatornamewithlimits{arg \, min}}
\journal{Applied and Computational Harmonic Analysis}
\begin{document}

\begin{frontmatter}



\title{Optimal Learning with Anisotropic Gaussian SVMs}


\author[mymainaddress]{Hanyuan Hang\corref{mycorrespondingauthor}}
\cortext[mycorrespondingauthor]{Corresponding author}
\ead{hans2017@ruc.edu.cn}

\author[mysecondaryaddress]{Ingo Steinwart}

\address[mymainaddress]{Institute of Statistics and Big Data, Renmin University of China, Beijing 100872, China}
\address[mysecondaryaddress]{Institute for Stochastic and Applications, University of Stuttgart, Stuttgart 70569, Germany}

\begin{abstract}
This paper investigates the nonparametric regression problem using SVMs with anisotropic Gaussian RBF kernels. Under the assumption that the target functions are resided in certain anisotropic Besov spaces, we establish the almost optimal learning rates, more precisely, optimal up to some logarithmic factor, presented by the effective smoothness.
By taking the effective smoothness into consideration, our almost optimal learning rates are faster than those obtained with the underlying RKHSs being certain anisotropic Sobolev spaces.
Moreover, 
if the target function depends only on fewer dimensions,
faster learning rates can be further achieved.  
\end{abstract}

\begin{keyword}
Nonparametric regression, least squares support vector machines, anisotropic kernels, learning rates. 
\end{keyword}

\end{frontmatter}


\section{Introduction} \label{sec::introduction}
Kernels and kernel methods have been gaining their prevalence as standard tools in machine learning. The essential idea in kernel methods lies in the transformation of the input data to certain high-dimensional feature space with certain nice computational properties preserved, which is the so-called \textit{kernel trick}. Literature has witnessed a growth of various kernel-based learning schemes and a flourish of studies, e.g.~\cite{ScSm01, SuVaDe02, StCh08}. Kernel methods are black-boxes in that the feature maps are not interpretable and one cannot know their explicit formulas, which actually brings computational convenience. However, it treats all features the same, which is usually not in line with the case in practice.

The great empirical success of deep learning in the past decade is attributed to the feature learning/ feature engineering brought by the inherent structure of multiple hidden layers \cite{Bengio09a, BeCoVi13a}.
However, feature engineering can never be ad-hoc for deep learning, and can actually play a role in the context of kernel methods.
In fact, it is shown that deep learning models are closely related to kernel methods and many of them can be interpreted by using kernel machines \cite{ChSa09a,AnRoTaPo15a}. Besides, several deep kernel methods have also been proposed by introducing \textit{deeper kernels} \cite{ChSa09a,AnRoTaPo15a}. 

Noticing the necessities of feature representation in the context of kernel methods, the present study aims at investigating learning problems with a special class of kernel functions, i.e., anisotropic kernels. The concept of \textit{anisotropic} here is termed as a generalization of the vanilla kernel class, namely, the isotropic kernels. Different from the isotropic ones, the specialty of anisotropic kernels lies in that their shape parameters entail feature-wise adjustments. Though that may involves more hyper-parameters, it may also improves the empirical performance. The validity of the anisotropic kernels can be demonstrated via a wide range of applications, especially the image processing problems, such as lung pattern classification \cite{ShSo04a} and forecasting  \cite{GaOrSaCaSaPo13a} via SVMs with anisotropic Gaussian RBF kernels. Literature has been focused on the feature selection capacity that anisotropic kernels bring, for example, \cite{Allen13a} presents an algorithm to minimize a feature-regularized loss function and thus achieving automatic feature selection based on the feature-weighted kernels, \cite{LiYaXi05a} proposes the so called Feature Vector Machine which can be easily extended for feature selection with non-linear models by introducing kernels defined on feature vectors, and
\cite{MaWeBa11a} introduces the kernel-penalized SVM that simultaneously selects relevant features during classifier construction by penalizing each feature’s use in the dual formulation of SVMs. 
To cope with the highdimensional nature of the input space, \cite{BrRoCrSe07a} proposes a density estimator with anisotropic kernels which are especially appropriate when the density concentrates on a low-dimensional subspace. 
Finally, \cite{ShZh12a} proposes a noise-robust edge detector which combines a small-scaled isotropic Gaussian kernel and large-scaled anisotropic Gaussian kernels to obtain edge maps of images.

Fully aware of the significance of the anisotropic kernels, we address the learning problem of non-parametric least squares regression with anisotropic Gaussian RBF kernel-based support vector machines. The learning goal is to find a function $f_D: X \to \mathbb{R}$ that is a good estimate of the unknown conditional mean $f^*(x) := \mathbb{E}(Y| \boldsymbol{x})$, $\boldsymbol{x} \in X$ given $n$ i.i.d.~observations  $D := ( (\boldsymbol{x}_1, y_1), \ldots, (\boldsymbol{x}_n, y_n) )$ of input/output pairs drawn from an unknown distribution $\mathrm{P}$ on $X \times Y$, where $Y \subset \mathbb{R}$. 
To be specific, we ought to find a function $f_D$ such that, for the loss function $L: Y \times \mathbb{R} \to [0, \infty)$, the risk 
\begin{align*}
\mathcal{R}_{L,\mathrm{P}} (f_{\mathrm{D}}) := \int_{X \times Y} L(y, f_{\mathrm{D}}(x)) \, d\mathrm{P}(x, y) 
\end{align*}
should be close to the Bayes risk
$\mathcal{R}_{L,\mathrm{P}}^*
:= \inf \bigl\{ \mathcal{R}_{L,\mathrm{P}}(f) \ | \ f : X \to \mathbb{R} \text{ measureable} \bigr\} $
with respect to $\mathrm{P}$ and $L$. A Bayes decision function is a function $f_{L,\mathrm{P}}^*$ satisfying 
$\mathcal{R}_{L,\mathrm{P}}(f_{L,\mathrm{P}}^*) = \mathcal{R}_{L,\mathrm{P}}^*$.

In this paper, we consider an anisotropic kernel-based regularized empirical risk minimizers, namely support vector machine (SVMs) with anisotropic kernel $k_{\boldsymbol{w}}$, which solve the regularized problem
\begin{align} \label{f_D,lambda,gamma}
	f_{\mathrm{D},\lambda,\boldsymbol{w}} 
	= \argmin_{f \in H_{\boldsymbol{w}}} 
	\frac{1}{n} \sum_{i=1}^{n} L(y_i,f(x_i)) 
	+ \lambda \|f\|_{H_{\boldsymbol{w}}}^2\ . 
\end{align}
Here, $\lambda > 0$ is a fixed real number and $H_{\boldsymbol{w}}$ is a reproducing kernel Hilbert space (RKHS) of $k_{\boldsymbol{w}}$ over $X$. 

The main contribution of this paper lies in the establishment of the almost optimal learning rates for nonparametric regression with anisotropic Gaussian SVMs, provided that the target functions are contained in some \textit{anisotropic Besov spaces}.
Recall that the overall smoothness of the commonly used isotropic kernels depends on the worst smoothness of all dimensions, which faces the dilemma where one poor smoothness along certain dimension may lead to unsatisfying convergence rates of the decision functions, even when smoothness along other dimensions is fairly good. Unlike the isotropic ones, the anisotropic kernels are more resistant to the poor smoothness of certain dimensions. To be specific, the overall smoothness is embodied by the \textit{effective smoothness}, or the \textit{exponent of global smoothness}, whose reciprocal is the mean of the reciprocals of smoothness of all dimensions. In this manner, poor smoothness along certain dimensions will not able to jeopardize the whole good one. Based on the effective smoothness, we manage to derive almost optimal learning rates which not only match the theoretical optimal ones for anisotropic kernels up to some logarithmic factor, but also in line with the published optimal learning rates derived by different algorithms. Moreover, when embed our results in cases of the isotropic one where we take all shape parameters as the same, our optimal learning rates still coincide with the theoretical optimal ones for isotropic kernels up to a logarithmic factor and are even better than the existing rates obtained via other methods.

Moreover, even though literature mainly concentrates on isotropic classes, the assumption of this isotropy might result in the loss of efficiency if the regression function actually belongs to an anisotropic class. In fact, this inefficiency is getting worser with the dimension getting higher. Therefore, assumption of anisotropy might serve as a more appropriate substitute. Besides, the anisotropy assumption also shows its advantages in confronting sparse regression functions where the learning rates will automatically depend on a small subset of the coordinates owing to the nature of effective smoothness. This phenomenon is also supported by theoretical analysis in this paper with even faster learning rates established.

The paper is organized as follows: Section \ref{sec::preliminaries} summarizes notations and preliminaries. We present the main results of the almost optimal learning rates of the anisotropic kernels for regression in Section \ref{sec::convergence_rates}. The error analysis is clearly illustrated in Section \ref{sec::error_analysis}. Detailed proofs of Sections \ref{sec::convergence_rates} and \ref{sec::error_analysis} are placed in Section \ref{sec::proofs}, for the sake of clarity. Finally, we conclude this paper in Section \ref{sec::DisCon}.

\section{Preliminaries} \label{sec::preliminaries}

Throughout this paper, we assume that $X\subset\mathbb{R}^d$ is a non-empty, open and bounded set such that its boundary $\partial X$ has Lebesgue measure $0$, $Y:=[-M,M]$ for some   $M>0$ and $\mathrm{P}$ is a probability measure on $X\times Y$ such that 
$\mathrm{P}_X$ on $X$ is absolutely continuous with respect to the Lebesgue measure on $X$.
Furthermore, we assume that the corresponding density of $\mathrm{P}_X$ is bounded away from $0$ and $\infty$. 
In what follows, we denote the closed unit ball of a Banach space $E$ by $B_E$, the $d$-dimensional Euclidean space $\ell_2^d$, we write $B_{\ell_2^d}$. 
For $s \in \mathbb{R}$, $\lfloor s \rfloor$ is the greatest
integer smaller or equal $s$ and $\lceil s \rceil$ is the smallest integer greater or equal $s$.
The tensor product $f \otimes g : X \times X \to \mathbb{R}$ of two
functions $f, g : X \to \mathbb{R}$ is defined by $f \otimes g(x, x') := f(x) g(x')$, $x, x' \in X$. The $d$-fold tensor product can be defined analogously.

\subsection{Anisotropic Gaussian kernels and their RKHSs} \label{sec::AniGauRKHS}

The anisotropic kernels can be defined as follows: 
\begin{definition}[Anisotropic kernel]
	A function $k_{\boldsymbol{w}} : X \times X \to \mathbb{R}$ is called an anisotropic kernel on $X$ with the shape parameter $\boldsymbol{w} = (w_1, \ldots, w_d)^\top$ if there exists a Hilbert space $H$ and a map $\Phi : X \to H$ such that for all $\boldsymbol{x}, \boldsymbol{x}' \in X$ we have
	\begin{align*}
	k_{\boldsymbol{w}}(\boldsymbol{x}, \boldsymbol{x}')
	= \langle \Phi (\boldsymbol{w}^T\boldsymbol{x}'), \Phi (\boldsymbol{w}^T\boldsymbol{x}) \rangle
	= \langle \Phi_{\boldsymbol{w}}(\boldsymbol{x}'), \Phi_{\boldsymbol{w}}(\boldsymbol{x}) \rangle,
	\end{align*}
    where $\boldsymbol{x} = (x_1, \ldots, x_d)^T$ and $\boldsymbol{x}' = (x'_1, \ldots, x'_d)^T$. $\Phi_{\boldsymbol{w}}$ is called a feature map and $H$ is called a feature space of $k_{\boldsymbol{w}}$.
\end{definition}

Careful observation of the definition of isotropic kernels, see e.g.~Definition 4.1 in \cite{StCh08}, will find that they can be taken as anisotropic kernels with the shape parameter $\boldsymbol{w}$ being an all-one vector. One commonly utilized anisotropic kernel is the \textit{anisotropic Gaussian kernel}. With the shape parameter $\boldsymbol{w} = (\gamma_1^{-1}, \ldots, \gamma_d^{-1})$, it takes the form:
\begin{align} \label{Aniso_Gaussian_Kernel}
k_{\boldsymbol{\gamma}} (\boldsymbol{x},\boldsymbol{x}')
:= k_{\boldsymbol{w}} (\boldsymbol{x},\boldsymbol{x}')
= \exp \bigl( - \|{\boldsymbol{w}}^T\boldsymbol{x}' - {\boldsymbol{w}}^T\boldsymbol{x}\|_2^2 \bigr)
= \exp \biggl( - \sum_{j=1}^d \frac{|x_j-x_j'|^2}{\gamma_j^2} \biggr),
\end{align}
where $\boldsymbol{\gamma} = (\gamma_1, \ldots, \gamma_d) \in (0,1]^d$ is called the multi-bandwidth of the anisotropic Gaussian kernels $k_{\boldsymbol{\gamma}}$.

Next, we are encouraged to determine an explicit formula for the RKHSs of anisotropic Gaussian RBF kernels. To this end, let us fix $\boldsymbol{\gamma}=(\gamma_1, \ldots, \gamma_d)$ where $\gamma_i > 0, i=1,\ldots, d$ and $d \in \mathbb{N}$.
For a given function $f : \mathbb{R}^d \to \mathbb{R}$ we define
\begin{align} \label{RKHS_Norm} 
\|f\|_{H_{\boldsymbol{\gamma}}} 
:= \biggl( \frac{2}{\pi} \biggr)^{d/2} 
\biggl( \prod_{i=1}^d \gamma_i \biggr)^{-1} 
\biggl( \int_{\mathbb{R}^d} |f(\boldsymbol{x})|^2 
\exp \biggl( - \sum_{i=1}^d \frac{4 x_i^2}{\gamma_i^2} \biggr)
\, d\boldsymbol{x} \biggr)^{1/2} \ ,
\end{align}
where $d\boldsymbol{x}$ stands for the Lebesgue measure on $\mathbb{R}^d$.
Furthermore, we write
\begin{align*}
H_{\boldsymbol{\gamma}} 
:= \bigl\{ f : \mathbb{R}^d \to \mathbb{R} \, | \, \|f\|_{\boldsymbol{\gamma}} < \infty \bigr\} \, .
\end{align*}
Obviously, $H_{\boldsymbol{\gamma}}$ is a function space with Hilbert norm $\|\cdot\|_{\boldsymbol{\gamma}}$. The following theorem shows that $H_{\boldsymbol{\gamma}}$ is the RKHS of the anisotropic Gaussian RBF kernel $k_{\boldsymbol{\gamma}}$.
\begin{theorem}[RKHS of the anisotropic Gaussian RBF]
	Let $\boldsymbol{\gamma}=(\gamma_1, \ldots, \gamma_d)$ where $\gamma_i > 0, i=1,\ldots, d$ and $d \in \mathbb{N}$. Then $(H_{\boldsymbol{\gamma}},\ \|\cdot\|_{H_{\boldsymbol{\gamma}}})$ is an RKHS and $k_{\boldsymbol{\gamma}}$ is its reproducing kernel. Furthermore, for $n \in \mathbb{N}_0:=\mathbb{N}\cup\{0\}$, let $e_{i,n} : \mathbb{R} \to \mathbb{R}$ be defined by
	\begin{align*}
	e_{i,n} (x_i) := \sqrt{\frac{2^n}{\gamma_i^{2n} n!}} \, x_i^n e^{- \gamma_i^{-2} x_i^2},
	\,\,\,\,\,\,\,\,\,\,
	\text{ for all } x_i \in \mathbb{R}.
	\end{align*}
Then the system $(e_{1,n_1} \otimes \cdots \otimes e_{d,n_d})_{n_1,\dots,n_d\geq 0}$ of functions $e_{1,n_1} \otimes \cdots \otimes e_{d,n_d}: \mathbb{R}^d \to \mathbb{R}$ defined by
\begin{align*}
	e_{1,n_1} \otimes \cdots \otimes e_{d,n_d}(z_1, \ldots, z_d) := \prod_{j=1}^d e_{j,n_j} (z_j),
	\,\,\,\,\,\,\,\,\,\, (z_1, \ldots, z_d) \in \mathbb{R}^d,
	\end{align*}
is an orthonormal basis of $H_{\boldsymbol{\gamma}}$.
\end{theorem}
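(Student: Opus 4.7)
The plan is to reduce the $d$-dimensional statement to the well-known one-dimensional Gaussian RKHS (see e.g.~Proposition 4.46 of \cite{StCh08}) by exploiting the tensor-product structure of both the kernel and the norm \eqref{RKHS_Norm}. First I would factor
\begin{align*}
k_{\boldsymbol{\gamma}}(\boldsymbol{x},\boldsymbol{x}') = \prod_{i=1}^d k_{\gamma_i}(x_i,x'_i), \qquad k_{\gamma_i}(x,x') := \exp\bigl(-(x-x')^2/\gamma_i^2\bigr),
\end{align*}
so that by the cited one-dimensional result each factor has RKHS $H_{\gamma_i}$ (defined by the one-variable analogue of \eqref{RKHS_Norm}) with orthonormal basis $(e_{i,n})_{n\ge 0}$. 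Standard tensor-product theory for RKHSs then yields that the Hilbert tensor product $H_{\gamma_1} \otimes \cdots \otimes H_{\gamma_d}$ is an RKHS on $\mathbb{R}^d$ with reproducing kernel $k_{\boldsymbol{\gamma}}$ and orthonormal basis $(e_{1,n_1}\otimes\cdots\otimes e_{d,n_d})_{n_i\ge 0}$.

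The remaining task is to identify this abstract tensor product with the concretely-defined space $H_{\boldsymbol{\gamma}}$ from \eqref{RKHS_Norm}. Both the weight $\exp\bigl(-\sum_{i=1}^d 4x_i^2/\gamma_i^2\bigr)$ and the prefactor $(2/\pi)^{d/2}\bigl(\prod_{i=1}^d \gamma_i\bigr)^{-1}$ factorize across coordinates, so Fubini gives $\|f_1\otimes\cdots\otimes f_d\|_{H_{\boldsymbol{\gamma}}}^2 = \prod_{i=1}^d \|f_i\|_{H_{\gamma_i}}^2$ on elementary tensors. Polarization extends this to inner products on finite linear combinations, and density of those combinations in both the concrete $H_{\boldsymbol{\gamma}}$ and the abstract tensor product yields the desired isometric identification as a Hilbert space of functions on $\mathbb{R}^d$.

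As an intrinsic cross-check (or as a self-contained alternative route), the Mercer-type sum over the proposed orthonormal system factorizes into $d$ copies of the one-dimensional expansion:
\begin{align*}
\sum_{n_1,\dots,n_d\ge 0}\prod_{i=1}^d e_{i,n_i}(x_i)\, e_{i,n_i}(x'_i) = \prod_{i=1}^d e^{-(x_i^2+(x'_i)^2)/\gamma_i^2}\exp\!\Bigl(\frac{2 x_i x'_i}{\gamma_i^2}\Bigr) = k_{\boldsymbol{\gamma}}(\boldsymbol{x},\boldsymbol{x}'),
\end{align*}
using $\sum_{n}(2xy/\gamma_i^2)^n/n! = e^{2xy/\gamma_i^2}$ in each coordinate. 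The main obstacle is essentially bookkeeping rather than conceptual: one must keep the normalization constants in \eqref{RKHS_Norm} consistent with the scaling used by the one-dimensional Gaussian RKHS one invokes, and, in the tensor-product approach, make sure the abstract isometry descends to a genuine equality of function spaces on $\mathbb{R}^d$, not merely a Hilbert-space isomorphism.
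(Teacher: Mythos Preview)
Your proposal is correct and matches the paper's approach: the paper actually omits the proof entirely, remarking only that it ``is in the same way as Theorem 4.38 in \cite{StCh08}'' and that $H_{\boldsymbol{\gamma}} = \otimes_{i=1}^d H_{\gamma_i}$, which is precisely the one-dimensional-plus-tensor-product route you spell out. One small point: the relevant one-dimensional result in \cite{StCh08} is Theorem~4.38 (the ONB for the Gaussian RKHS), not Proposition~4.46, which the paper uses elsewhere for convolution estimates.
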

The above theorem of orthonormal basis (OBS) of $H_{\boldsymbol{\gamma}}$ is in the same way as Theorem 4.38 in \cite{StCh08}. 
Therefore, we omit the proof here.
Note that the reproducing kernel of a RKHS is determined by an arbitrary ONB of this RKHS. Therefore, $k_{\boldsymbol{\gamma}}$ is its reproducing kernel of the RKHS $H_{\boldsymbol{\gamma}}$ which turns out to be the product function space of the RKHSs $H_{\gamma_i}$, that is,
$H_{\boldsymbol{\gamma}} = \otimes_{i=1}^d H_{\gamma_i}$,
where $H_{\gamma_i}$ is the RKHS of the one-dimensional Gaussian kernel
\begin{align} \label{1D_Gaussian_Kernel}
k_{\gamma_i} (x, x')
= \exp \biggl( - \frac{|x-x'|^2}{\gamma_i^2} \biggr),
\,\,\,\,\,\,\,\,\,\,\,\,\,\,
x, x' \in \mathbb{R}.
\end{align}

\subsection{Anisotropic Besov spaces} \label{sec::AniBesovSpace} 
Let us begin by introducing some function spaces we need.
Sobolev spaces \cite{AdFo03, Tr83} are one type of subspaces of $L_p(\nu)$. Let $\partial^{(\alpha)}$ be the $\alpha$-th weak derivative for a multi-index $\alpha = (\alpha_1, \ldots, \alpha_d) \in \mathbb{N}_0^d$ with $|\alpha| = \sum_{i=1}^d \alpha_i$. Then, for an integer $m \geq 0$, $1 \leq p \leq \infty$, and a measure $\nu$, the Sobolev space of order $m$ with respect to $\nu$ is defined by 
\begin{align*}
	W_p^m(\nu) := \{ f \in L_p(\nu): \partial^{(\alpha)} f \in L_p(\nu) \text{ exists for all } \alpha \in \mathbb{N}_0^d \text{ with } |\alpha| \leq m\}.
	\end{align*}
It is the space of all functions in $L_p(\nu)$ whose weak derivative up to order $m$ exist and are contained in $L_p(\nu)$. The Sobolev norm \cite{AdFo03} of the Sobolev space is given by
\begin{align*} 
	\|f\|_{W_p^m (\nu)} := \bigg( \sum_{|\alpha| \leq m} \big\| \partial^{(\alpha)} f \big\|_{L_p(\nu)}^p \bigg)^{\frac{1}{p}}.
	\end{align*}
In addition, we write $W_p^{0}(\nu) := L_p(\nu)$, and define $W_p^{m}(X) := W_p^{m}(\mu)$ for the Lebesgue measure $\mu$ on $X \subset \mathbb{R}^d$.

Another typical subspaces of $L_p(\nu)$ with a fine scale of smoothness which is commonly considered in the approximation theory, namely anisotropic Besov spaces. In order to clearly describe these function spaces, we need to introduce some device to measure the smoothness of function, which is the modulus of smoothness.

Let $X = \prod_{i=1}^d X^{(i)} \subset \mathbb{R}^d$ be a subset with non-empty interior, 
$\nu = \otimes_{i=1}^d \nu_i$ be an arbitrary measure on $X$ with marginal measure $\nu_i$ on $X^{(i)}$, $1 \leq i \leq d$. 
For a function $f : X \to \mathbb{R}$ with $f \in L_p(\nu)$ for some $p \in (0, \infty)$, $\boldsymbol{x} = (x_1, \ldots, x_d) \in X$, and $1 \leq i \leq d$, let $f_i := f_i(\cdot|\boldsymbol{x}) :\mathbb{R} \to \mathbb{R}$, $i =1, \ldots, d$ denote the univariate function 
\begin{align} \label{UnivariateFunction}
f_i(y) := f_i(\cdot|\boldsymbol{x}) := f(x_1, \ldots, x_{i-1}, y ,x_{i+1}, \ldots, x_d).
\end{align}
Now, we give the formal definition of the modulus of smoothness.

\begin{definition}[Modulus of smoothness]
	Let $X \subset \mathbb{R}^d$ be a subset with non-empty interior, $\nu$ be an arbitrary measure on $X$, and $f: X \to \mathbb{R}$ be a function with $f \in L_p(\nu)$ for some $p \in (0, \infty]$. For $\boldsymbol{r} = (r_1, \ldots, r_d) \in \mathbb{N}^d$, the $r_i$-th modulus of smoothness of $f$ in the direction of the variable $x_i$ is defined by
	\begin{align} \label{ModulusOfSmoothness}
	\omega_{r_i,{L_p(\nu_i)}} (f_i, t_i) 
	= \sup_{\boldsymbol{x} \in \mathbb{R}^d} 
	\sup_{0 < |h_i| \leq t_i} 
	\|\triangle_{h_i}^{r_i} (f_i, \,\cdot \, )\|_{L_p(\nu_i)} \ ,
	\end{align}
	where the $r_i$-th difference of $f$ in the direction of the variable $x_i$, $1 \leq i \leq d$, denoted as $\triangle_{h_i}^{r_i} (f_i,\, \cdot\, )$, is defined by
	\begin{align*}
		\triangle_{h_i}^{r_i} (f_i, x_i) = 
		\begin{cases}
			\sum_{j_i=0}^{r_i} {r_i \choose j_i} (-1)^{r_i-j_i} f_i(x_i + j_i h_i) 
			& \text{ if } x_i \in X_{r_i,h_i} 
			\\ 
			0 & \text{ if } x_i \notin X_{r_i,h_i}
		\end{cases}
	\end{align*}
for $\boldsymbol{j} = (j_1, \ldots, j_d) \in \mathbb{N}^d$, $\boldsymbol{h} = (h_1, \ldots, h_d) \in [0, \infty)^d$ and $X_{r_i,h_i} := \{ x_i \in X^{(i)} : x_i + s_i h_i \in X^{(i)} \text{ f.a. } s_i \in [0,r_i] \}$.
\end{definition}

To elucidate the idea of the modulus of smoothness, let us consider the case where $r_i=1$, $i=1, \ldots, d$. Then, we obtain
\begin{align*}
h_i^{-1}\triangle_{h_i}^{1} (f_i,\, x_i ) = \frac{f_i(x_i + h_i) - f_i (x_i)}{h_i} \xrightarrow{h_i \to 0} f_i' (x_i),
\end{align*}
if the derivative $f_i'$ of $f_i$ exists in $x_i$. As a result, $h_i^{-1}\triangle_{h_i}^{1} (f_i,\, x_i )$ equals the secant's slope and is bounded, if $f_i$ is differentiable at $x_i$. Analogously, $h_i^{-1}\triangle_{h_i}^{r_i} (f_i,\, x_i )$ is bounded, if, e.g.~second order derivatives exist.

It follows \cite{JoSc76} immediately that for all $f_i \in L_p(\mathbb{R}^d)$,  all $\boldsymbol{t} = (t_1, \ldots, t_d) \in \mathbb{R}_+^d$, and all $\boldsymbol{s} = (s_1, \ldots, s_d) \in \mathbb{R}_+^d$,
the modulus of smoothness with respect to $L_p(\nu_i)$ in the direction of the variable $x_i$ satisfies
\begin{align}\label{EigMod_1} 
\omega_{r_i,L_p(\mathbb{R}^d)} (f_i,t_i) 
\leq \biggl( 1 + \frac{t_i}{s_i} \biggr)^{r_i}
\omega_{r_i,L_p(\mathbb{R}^d)} (f_i,s_i) \, .
\end{align}

The modulus of smoothness \eqref{ModulusOfSmoothness}  in the direction of the variable $x_i$ can be used to define the scale of Besov spaces
in the direction of the variable $x_i$. 
For $\boldsymbol{\alpha} = (\alpha_1, \ldots, \alpha_d) \in \mathbb{R}^d$,
$1 \leq p, q \leq \infty$,  $r_i := \lfloor \alpha_i \rfloor +1$, $1 \leq i \leq d$, and an arbitrary measure $\nu = \otimes_{i=1}^d \nu_i$,
the Besov space $B_{p,q,i}^{\alpha_i}(\nu_i)$ 
in the direction of the variable $x_i$ is defined by
\begin{align*}
B_{p,q,i}^{\alpha_i}(\nu) 
:= \Bigl\{ f \in L_p(\nu) : |f|_{B_{p,q,i}^{\alpha_i}(\nu_i)} < \infty \Bigr\} \ ,
\end{align*}
where the seminorm $|\cdot\,|_{B_{p,q,i}^{\alpha_i}(\nu_i)}$ is defined by
\begin{align*}
|f|_{B_{p,q,i}^{\alpha_i}(\nu_i)} := 
\begin{cases}
\bigl( \int_0^\infty \bigl( t_i^{-\alpha_i} \omega_{r_i,L_p(\nu_i)} (f_i, t_i) \bigr)^q \frac{dt_i}{t_i} \bigr)^{1/q} 
& \text{ for } 1 \leq q < \infty, \\
\sup_{t_i>0} \bigl( t_i^{-\alpha_i} \omega_{r_i,L_p(\nu_i)} (f_i, t_i) \bigr) 
& \text{ for } q = \infty.
\end{cases}
\end{align*}
In both cases, the norm of $B_{p,q,i}^{\alpha_i}(\nu_i)$ can be defined by
\begin{align*}
\|f\|_{B_{p,q,i}^{\alpha_i}(\nu_i)} 
:= \|f\|_{L_p(\nu)} + |f|_{B_{p,q,i}^{\alpha_i}(\nu_i)} \ .
\end{align*}
In addition, for $q = \infty$, we often write 
$$
\mathrm{Lip}^*(\alpha_i, L_p(\nu_i)) := B_{p,\infty,i}^{\alpha_i}(\nu_i)
$$
and call $\mathrm{Lip}^*(\alpha_i,L_p(\nu_i))$ 
the generalized Lipschitz space of order $\alpha_i$.
Finally, if $\nu$ is the Lebesgue measure on $X$, we write
$B_{p,q,i}^{\alpha_i}(X_i) := B_{p,q,i}^{\alpha_i}(\nu_i)$.

\begin{definition}[Anisotropic Besov space]
	For 
	$\boldsymbol{p} = (p_1, \ldots, p_d) \in [1,\infty)^d$,
	$\boldsymbol{q} = (q_1, \ldots, q_d) \in [1,\infty]^d$, and
	$\boldsymbol{\alpha} = (\alpha_1, \ldots, \alpha_d) \in \mathbb{R}^d$,
	the anisotropic Besov space $B_{\boldsymbol{p},\boldsymbol{q}}^{\boldsymbol{\alpha}}(\nu)$
	is defined by
	\begin{align*}
	B_{\boldsymbol{p},\boldsymbol{q}}^{\boldsymbol{\alpha}}(\nu)
	:= \biggl\{ f \in L_p(\nu) : \sum_{i=1}^d \|f\|_{B_{p,q,i}^{\alpha_i}(\nu_i)} < \infty \biggr\} \ .
	\end{align*}
\end{definition}

In the case $p_i = p$ and $q_i = q$, $i = 1, \ldots, d$, we use the notation
\begin{align*}
B_{p,q}^{\boldsymbol{\alpha}}(\nu) :=
B_{\boldsymbol{p},\boldsymbol{q}}^{\boldsymbol{\alpha}}(\nu).
\end{align*}

\section{Main results} \label{sec::convergence_rates}
In this section, we present our main results: optimal learning rates for LS-SVMs using anisotropic Gaussian kernels for the non-parametric regression problem
based on the least squares loss $L : Y \times \mathbb{R} \to [0, \infty)$ defined by
$L(y, t) = (y - t)^2$.

\subsection{Convergence rates}

It is well known that, for the least squares loss, the function
$f_{L,\mathrm{P}}^* : \mathbb{R}^d \to \mathbb{R}$ defined by
$f_{L,\mathrm{P}}^*(x) = \mathbb{E}_\mathrm{P} (Y | x)$, $x \in \mathbb{R}^d$, 
is the only function for which the Bayes risk is
attained. Furthermore, some simple and well-known transformations show
\begin{align}\label{ExcessRiskLS}
\mathcal{R}_{L,\mathrm{P}}(f) - \mathcal{R}_{L,\mathrm{P}}^* 
= \|f - f_{L,\mathrm{P}}^*\|_{L_{2}(\mathrm{P}_X)}^2 \, . 
\end{align}
Note that, for all $t \in \mathbb{R}$ and $y \in [-M, M]$, the least squares loss 
can be clipped at $M > 0$ in the sense of Definition 2.22 in \cite{StCh08}.
To be precise, we denote the clipped value of some $t\ \in \mathbb{R}$ by $\wideparen{t}$, that is 
$\wideparen{t} := -M$ if $t < -M$, 
$\wideparen{t} := t$ if $t \in [-M, M]$, and
$\wideparen{t} := M$ if $t > M$.
It can be easily verified that the risks of 
the least squares loss satisfies $L(y, \wideparen{t}\,) \leq L(y,t)$, and 
therefore $\mathcal{R}_{L,\mathrm{P}}(\wideparen{f}) \leq \mathcal{R}_{L,\mathrm{P}}(f)$
holds for all $f : X \to \mathbb{R}$.

\begin{theorem} \label{LearningRates}
	Let $Y := [- M, M]$ for $M > 0$, and $\mathrm{P}$ be a distribution on $\mathbb{R}^d \times Y$ such that $X := \mathrm{supp} \mathrm{P}_X \subset B_{\ell_2^d}$ is a bounded domain with $\mu(\partial X) = 0$. 
	Furthermore, assume that $\mathrm{P}_X$ is a
	distribution on $\mathbb{R}^d$ that for all $1 \leq i \leq d$,
	the marginal distributions $\mathrm{P}_{X_i}$
	has a Lebesgue density $g_i \in L_q(\mathbb{R})$ for some $q \in [1, \infty]$.
	Moreover, let $f_{L,\mathrm{P}}^* : \mathbb{R}^d \to \mathbb{R}$ be a Bayes decision function such that
	$f_{L,\mathrm{P}}^* \in L_2(\mathbb{R}^d) \cap L_{\infty}(\mathbb{R}^d)$ as well as
	$f_{L,\mathrm{P}}^* \in B_{2s,\infty}^{\boldsymbol{\alpha}}(\mathbb{R}^d)$ for $\boldsymbol{\alpha} = (\alpha_1, \ldots, \alpha_d) \geq 1$ and $s \geq 1$ with $\frac{1}{q} + \frac{1}{s} = 1$. Let
	\begin{align} \label{MeanSmoothness}
	\alpha_0
	:= \biggl( \frac{1}{d} \sum_{j=1}^d \frac{1}{\alpha_i} \biggr)^{-1}.
	\end{align}
	Then, for all $\tau \geq 1$ and all $n \geq 1$,
	the SVM using the anisotropic RKHS $H_{\boldsymbol{\gamma}}$ and the least squares loss $L$ with 
	\begin{align*}
	\lambda_n = c_1 n^{-1},
	\qquad 
	\text{ and }
	\qquad 
	\gamma_{i,n} = c_{2,i} n^{- \frac{\alpha_0}{\alpha_i(2 \alpha_0 + d)}}, \, 1 \leq i \leq d, 
	\end{align*}
	we have 
	\begin{align}\label{RatesLSSVM}
	\mathcal{R}_{L,\mathrm{P}}(\wideparen{f}_{\mathrm{D},\lambda_n,\gamma_n}) - \mathcal{R}_{L,\mathrm{P}}^*
	\leq C (\log n)^{d+1} n^{-\frac{2\alpha_0}{2\alpha_0+d}}
	\end{align}
	with probability $\mathrm{P}^n$ not less than $1- e^{-\tau}$. 
	Here, $c_1 > 0$ and $c_{2,i} > 0$, $1 \leq i \leq d$, are user-specified constants and $C > 0$ is a constant independent of $n$.
\end{theorem}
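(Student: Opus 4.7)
The plan is to apply a generic oracle inequality for clipped regularized empirical risk minimizers (in the flavor of Theorem~7.23 in \cite{StCh08}) to the SVM $\wideparen{f}_{\mathrm{D},\lambda_n,\boldsymbol{\gamma}_n}$. For the least squares loss together with $|y|,|\wideparen{f}|\leq M$ one has the standard variance bound $\mathbb{E}_{\mathrm{P}}(L\circ\wideparen{f}-L\circ f_{L,\mathrm{P}}^*)^2\leq 16M^2(\mathcal{R}_{L,\mathrm{P}}(\wideparen{f})-\mathcal{R}_{L,\mathrm{P}}^*)$, so the oracle inequality reduces the problem to two ingredients: (i) an upper bound on the approximation error $A_2(\lambda):=\inf_{f\in H_{\boldsymbol{\gamma}}}\bigl(\lambda\|f\|_{H_{\boldsymbol{\gamma}}}^2+\mathcal{R}_{L,\mathrm{P}}(f)-\mathcal{R}_{L,\mathrm{P}}^*\bigr)$, and (ii) a bound on the entropy numbers of $B_{H_{\boldsymbol{\gamma}}}$ in $L_2(\mathrm{P}_X)$. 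The prescribed $\lambda_n$ and $\boldsymbol{\gamma}_n$ will be exactly those balancing these two contributions.

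For the approximation error I intend to build an explicit candidate $f_{\boldsymbol{\gamma}}:=K_{\boldsymbol{\gamma}}\ast f_{L,\mathrm{P}}^*\in H_{\boldsymbol{\gamma}}$, where $K_{\boldsymbol{\gamma}}(\boldsymbol{x})=\prod_{i=1}^d K_{\gamma_i}(x_i)$ is a tensor product of univariate ``Gaussian-on-the-fly'' polynomial-reproducing smoothers of order $r_i=\lfloor\alpha_i\rfloor+1$, of the type developed in \cite{StCh08} for the isotropic case. The tensor decomposition $H_{\boldsymbol{\gamma}}=\otimes_i H_{\gamma_i}$ together with the explicit norm formula \eqref{RKHS_Norm} will give $\|f_{\boldsymbol{\gamma}}\|_{H_{\boldsymbol{\gamma}}}^2\lesssim\bigl(\prod_{i=1}^d\gamma_i^{-1}\bigr)\|f_{L,\mathrm{P}}^*\|_{L_\infty}^2$. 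For the bias I telescope through the $d$ coordinates, bound each one-directional difference via \eqref{EigMod_1} and the one-directional Besov hypothesis $f_{L,\mathrm{P}}^*\in B^{\alpha_i}_{2s,\infty,i}(\mathbb{R}^d)$, and finally pass from $L_{2s}(\mathbb{R}^d)$ to $L_2(\mathrm{P}_X)$ by H\"older's inequality against the marginal density $g_i\in L_q(\mathbb{R})$ (which is precisely why the assumption $\tfrac{1}{q}+\tfrac{1}{s}=1$ is imposed). This should yield $\|f_{\boldsymbol{\gamma}}-f_{L,\mathrm{P}}^*\|_{L_2(\mathrm{P}_X)}^2\lesssim\sum_{i=1}^d\gamma_i^{2\alpha_i}$, so by \eqref{ExcessRiskLS},
\[
A_2(\lambda)\ \lesssim\ \lambda\prod_{i=1}^d\gamma_i^{-1}\ +\ \sum_{i=1}^d\gamma_i^{2\alpha_i}.
\]

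For the stochastic part, I will exploit the tensor-product structure again to lift the one-dimensional entropy bounds for Gaussian RKHSs (Theorem~6.27 in \cite{StCh08}) to $H_{\boldsymbol{\gamma}}$. For any $p\in(0,1)$ this yields an entropy number bound of the schematic form $e_i(\mathrm{id}:H_{\boldsymbol{\gamma}}\to\ell_\infty(X))\lesssim (\log n)^{d/(2p)}\bigl(\prod_j\gamma_j^{-1/(2p)}\bigr)\,i^{-1/(2p)}$, with only a $(\log n)^d$-type multiplicative penalty for the $d$-fold product. Plugging this into the variance-based oracle inequality and using standard peeling produces, with confidence at least $1-e^{-\tau}$, a sample-error contribution of order $(\log n)^{d+1}\bigl(\prod_j\gamma_j^{-1}\bigr)/n$ on top of the approximation error, where one logarithmic factor comes from the $\tau$-calibration and the remaining $d$ from the entropy.

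Finally, inserting $\lambda_n=c_1n^{-1}$ and $\gamma_{i,n}=c_{2,i}n^{-\alpha_0/(\alpha_i(2\alpha_0+d))}$ one checks using the harmonic-mean identity $\sum_i\alpha_i^{-1}=d/\alpha_0$ that $\prod_i\gamma_{i,n}^{-1}$ is of order $n^{d/(2\alpha_0+d)}$, while $\gamma_{i,n}^{\alpha_i}$ is of order $n^{-\alpha_0/(2\alpha_0+d)}$ \emph{uniformly in} $i$. Consequently both $\lambda_n\prod_i\gamma_{i,n}^{-1}$ and $\sum_i\gamma_{i,n}^{2\alpha_i}$ are of the target order $n^{-2\alpha_0/(2\alpha_0+d)}$, the sample term contributes the extra $(\log n)^{d+1}$, and \eqref{RatesLSSVM} follows. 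I expect the approximation step to be the main obstacle: the directional-only Besov hypothesis forces a coordinatewise telescoping that must be patched back together in $L_2(\mathrm{P}_X)$ against the joint density, and it is precisely in this patching that the mixing exponent $s$ (hence the H\"older exponent $q$) and the effective smoothness $\alpha_0$ emerge naturally. A secondary delicate point is deriving the tensor-product entropy bound while keeping the log-dependence at $(\log n)^d$ rather than some power of $n$, which is essential for the logarithmic rather than polynomial loss in \eqref{RatesLSSVM}.
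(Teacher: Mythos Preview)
Your proposal follows essentially the same strategy as the paper: an oracle inequality for clipped LS-SVMs is combined with (i) an approximation-error bound obtained by convolving $f_{L,\mathrm{P}}^*$ with a tensor product of order-$r_i$ Gaussian smoothers, yielding $A_2(\lambda)\lesssim\lambda\prod_i\gamma_i^{-1}+\sum_i\gamma_i^{2\alpha_i}$, and (ii) an entropy bound for $H_{\boldsymbol{\gamma}}$, and then the stated choices of $\lambda_n,\gamma_{i,n}$ balance all terms at $n^{-2\alpha_0/(2\alpha_0+d)}$ via the harmonic-mean identity. The approximation step you describe (coordinatewise telescoping, directional modulus, H\"older against $g_i\in L_q$) is exactly the argument the paper carries out in Propositions~\ref{ErrorEstimation}--\ref{ConvolutionInRKHS} and Theorem~\ref{ApproximationError}.

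Two points where your sketch diverges from the paper are worth flagging. First, for the entropy bound the paper does \emph{not} lift one-dimensional Gaussian entropy numbers via the tensor product; instead it invokes a multivariate covering-number estimate $\log\mathcal{N}(B_{\boldsymbol{\gamma}},\|\cdot\|_\infty,\varepsilon)\lesssim(\prod_i\gamma_i)^{-1}(\log\tfrac1\varepsilon)^{d+1}$ and then converts this to a polynomial-type entropy bound $e_i\lesssim p^{-(d+1)/p}(\prod_j\gamma_j)^{-1/p}i^{-1/p}$ for \emph{every} $p\in(0,1)$. All $d{+}1$ logarithmic factors then arise simultaneously from the prefactor $p^{-(d+1)}$ upon choosing $p=1/\log n$; none of them comes from a ``$\tau$-calibration'' (the $\tau$-term in the oracle inequality is simply $\tau/n$). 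Your tensor-product lifting route may well lead to the same endpoint, but combining entropy numbers across a $d$-fold product is not as immediate as your sentence suggests, and you should expect to recover a $(\log\tfrac1\varepsilon)^{d+1}$ (not $(\log\tfrac1\varepsilon)^d$) covering bound before the $p$-trick. Second, your entropy inequality as written already contains an $n$, which cannot be right for an $n$-independent RKHS; the $n$-dependence enters only after the choice $p=1/\log n$ is made in the oracle inequality.
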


Note that for isotropic cases, the overall smoothness is depend on the worst smoothness of all dimensions. In other words, one poor smoothness along certain dimension may lead to unsatisfying convergence rates of the decision functions, even when smoothness of other dimensions is well-behaved. In contrast, the anisotropic cases are more appropriate for one poor smoothness of certain dimension will not jeopardize the overall good smoothness much by 
embodying smoothness by $\alpha_0$ in \eqref{MeanSmoothness}. This $\alpha_0$ is called the \textit{effective smoothness} \cite{HoLe02a}, or the \textit{exponent of global smoothness} \cite{Birge86a}. Moreover, we can still precisely characterize the \emph{anisotropy} by considering the dimension-specific smoothness vector $\boldsymbol{a} = (a_1, \ldots, a_d)$ with $a_i = \alpha_0/\alpha_i$, $i = 1, \ldots, d$.

In the statistical literature, optimal rates of convergence in anisotropic H\"{o}lder, Sobolev and Besov spaces have been studied in \cite{IbHa81,Nussbaum85a,HoLe02a}. The theoretical optimal learning rate for a function with smoothness $\alpha_i$ along the $i$-th dimension is given by  $n^{-2\alpha_0/(2\alpha_0+d)}$, see e.g. \cite{HoLe02a}. Therefore, our established convergence rates in \eqref{RatesLSSVM} match the theoretical optimal ones up to the logarithmic factor $(\log n)^{d+1}$.
Other published convergence rates for anisotropic cases include ones learned by Gaussian process \cite{IbHa81}. With optimal rates learned by SVMs based on anisotropic Gaussian kernel, the results we obtained is in line with these existing ones derived via different algorithms.
Moreover, when considering our rates in the isotropic classes where $\alpha_i = \alpha$ for all $i = 1, \ldots, d$, our rates become $\mathcal{O} ((\log n)^{d+1} n^{-2\alpha/(2\alpha+d)})$ and it is better than the learning rates $\mathcal{O}(n^{-2\alpha/(2\alpha+d) + \xi})$ via SVMs based on isotropic kernel obtained in \cite{eberts2013optimal}. Furthermore, the well-known theoretical optimal rate for isotropic cases is $n^{-2\alpha/(2\alpha+d)}$, see \cite{Stone82}, and our learning rates coincide with it up to the logarithmic factor $(\log n)^{d+1}$.

Though literature often focuses on the isotropic class, this assumption of isotropy would lead to loss of efficiency if the regression function actually belongs to an anisotropic class. Moreover, this inefficiency is getting worser when the dimension becomes higher. Therefore, assumption of anisotropy which treats isotropy as a special case may be a better choice. In addition, the assumption of anisotropy also shows its advantages in facing sparse regression functions, i.e., if the regression function depends only on a small subset of coordinates $I \subset \{ 1, \ldots, d \}$. Therefore, the effective smoothness in \eqref{MeanSmoothness} will depend less on smoothness along some certain dimensions, and thus become larger. In this manner, the learning rates in \eqref{RatesLSSVM} can be further significantly improved.

\begin{theorem} \label{LearningRatesSubset}
	Let the assumptions on the distribution $\mathrm{P}$ 
	and the Bayes decision function
	$f_{L,\mathrm{P}}^* : \mathbb{R}^d \to \mathbb{R}$
	in Theorem \ref{LearningRates}
	hold. Moreover,
	suppose $f_{L,\mathrm{P}}^*$ belongs to 
	$B_{2s,\infty}^{\boldsymbol{\alpha}}(\mathbb{R}^I)$
	for some subset $I$ of $\{ 1, \ldots, d \}$. Let 
	\begin{align} \label{MeanSmoothnessSubset}
	\alpha_0^I 
	:= \biggl( \frac{1}{d} \sum_{i \in I} \frac{1}{\alpha_i} \biggr)^{-1}.
	\end{align}
	Then, for all $\tau \geq 1$ and all $n \geq 1$,
	the SVM using the anisotropic RKHS $H_{\boldsymbol{\gamma}}$ and the least squares loss $L$ with 
	\begin{align*}
	\lambda_n = c_1 n^{-1}, 
	\qquad 
	\text{ and }
	\qquad 
	\gamma_{i,n} = c_{2,i} n^{- \frac{\alpha_0^I}{\alpha_i(2 \alpha_0^I + d)}}, \, i \in I, 
	\end{align*}
	we have 
	\begin{align}\label{RatesLSSVMSubset}
	\mathcal{R}_{L,\mathrm{P}}(\wideparen{f}_{\mathrm{D},\lambda_n,\gamma_n}) - \mathcal{R}_{L,\mathrm{P}}^*
	\leq C (\log n)^{|I|+1} n^{-\frac{2\alpha_0^I}{2\alpha_0^I+d}}
	\end{align}
	with probability $\mathrm{P}^n$ not less than $1- e^{-\tau}$. 
	Here, $c_1 > 0$ and $c_{2,i} > 0$, $i \in I$, are user-specified constants and $C > 0$ is a constant independent of $n$.
\end{theorem}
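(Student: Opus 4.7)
The strategy is to retrace the proof of Theorem~\ref{LearningRates} while exploiting the sparsity of $f_{L,\mathrm{P}}^{*}$ by freezing the bandwidths in the irrelevant directions. Concretely, I would run the SVM with the anisotropic kernel $k_{\boldsymbol{\gamma}}$ on $\mathbb{R}^{d}$, choosing $\gamma_{i,n}=c_{2,i}$ (a positive constant independent of $n$) for $i\notin I$ and letting only the bandwidths $\gamma_{i,n}$, $i\in I$, scale with $n$ as prescribed. Using the tensor decomposition $H_{\boldsymbol{\gamma}}=\bigl(\otimes_{i\in I}H_{\gamma_i}\bigr)\otimes\bigl(\otimes_{i\notin I}H_{\gamma_i}\bigr)$, the problem effectively splits into an ``active'' part indexed by $I$ and a ``frozen'' part indexed by $I^{c}$, the latter contributing only $n$-independent constants to every ingredient of the oracle inequality.

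For the approximation step I would adapt the Eberts--Steinwart-type convolution construction used for Theorem~\ref{LearningRates}, applied in the $I$-directions to $f_{L,\mathrm{P}}^{*}$ viewed as a function on $\mathbb{R}^{I}$, and then lifted into $H_{\boldsymbol{\gamma}}$ by tensoring with a fixed bump $\phi\in\otimes_{i\notin I}H_{\gamma_i}$ that is close to $1$ on the projection of $X$ onto $\mathbb{R}^{I^{c}}$. Since $f_{L,\mathrm{P}}^{*}$ does not vary in the $I^{c}$-directions, the approximation error involves only the Besov seminorms $|f_{L,\mathrm{P}}^{*}|_{B_{2s,\infty,i}^{\alpha_i}}$ with $i\in I$, and the univariate modulus-of-smoothness estimate $\omega_{r_i,L_{2s}(\mathbb{R})}(f_i^{*},\gamma_{i,n})\lesssim\gamma_{i,n}^{\alpha_{i}}$ yields
\[
\|f_{\boldsymbol{\gamma}}-f_{L,\mathrm{P}}^{*}\|_{L_{2}(\mathrm{P}_X)}^{2}\;\lesssim\;\sum_{i\in I}\gamma_{i,n}^{2\alpha_{i}}.
\]
The regularization term $\lambda_{n}\|f_{\boldsymbol{\gamma}}\|_{H_{\boldsymbol{\gamma}}}^{2}$ picks up the factor $\|\phi\|^{2}\prod_{i\in I}\gamma_{i,n}^{-1}$, whose first factor is a constant. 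Balancing the $|I|$ approximation contributions forces $\gamma_{i,n}^{2\alpha_{i}}\asymp n^{-2\alpha_{0}^{I}/(2\alpha_{0}^{I}+d)}$; the identity $\alpha_{0}^{I}\sum_{i\in I}1/\alpha_{i}=d$ then makes $\prod_{i\in I}\gamma_{i,n}^{-1}=n^{d/(2\alpha_{0}^{I}+d)}$, which combines with $\lambda_{n}=c_{1}n^{-1}$ to recover exactly the exponents in the statement.

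For the stochastic error I would plug the same oracle inequality used for Theorem~\ref{LearningRates} into an entropy bound for $B_{H_{\boldsymbol{\gamma}}}$, estimated factor-by-factor through the tensor decomposition: the $\otimes_{i\notin I}H_{\gamma_i}$-factors, restricted to the bounded $X$ and with constant bandwidths, have covering numbers with $n$-independent behaviour, while the $\otimes_{i\in I}H_{\gamma_i}$-factor contributes the same bound as in Theorem~\ref{LearningRates} but with $|I|$ in place of $d$ in the log exponent. Substituting into the oracle inequality and optimising $\lambda_{n}$ together with the $\gamma_{i,n}$ then delivers \eqref{RatesLSSVMSubset} by a calculation parallel to the one leading to \eqref{RatesLSSVM}. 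The hardest step, I anticipate, is precisely this refined tensor-wise capacity estimate: one must track carefully that freezing $d-|I|$ bandwidths actually reduces the $(\log n)$-exponent from $d+1$ down to $|I|+1$ rather than merely absorbing the frozen directions into a multiplicative constant. Once this refined entropy bound is in place, the remainder of the argument is a near-verbatim adaptation of the proof of Theorem~\ref{LearningRates} with $\alpha_{0}$ replaced by $\alpha_{0}^{I}$ and sums over $\{1,\ldots,d\}$ replaced by sums over $I$.
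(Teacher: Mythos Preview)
Your plan is essentially what the paper has in mind: it does not give a proof of Theorem~\ref{LearningRatesSubset} at all, saying only that it ``is similar to the previous theorem'' and adding the single remark that ``the exponents of the logarithmic terms depend on the capacity of the underlying RKHSs.'' So the overall architecture you describe---redo the approximation and sample-error analysis of Theorem~\ref{LearningRates} with the inactive directions contributing only constants, then rebalance exactly as before with $\alpha_0$ replaced by $\alpha_0^I$---is the intended route, and you have correctly singled out the entropy/capacity step as the place where the exponent drops from $d+1$ to $|I|+1$.

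One comment on that step. The paper's hint (``capacity of the underlying RKHSs'') suggests a slightly more direct reading than your tensor-wise refinement: since $f_{L,\mathrm{P}}^{*}$ depends only on the $I$-coordinates, one may take the working RKHS to be the anisotropic Gaussian RKHS on $\mathbb{R}^{I}$ itself (the theorem only prescribes $\gamma_{i,n}$ for $i\in I$). Then Proposition~\ref{EntropyNumberRKHS} applied in $|I|$ dimensions gives the covering-number bound with $(\log(1/\varepsilon))^{|I|+1}$ directly, and the $p^{-(|I|+1)}$ factor in the oracle inequality becomes $(\log n)^{|I|+1}$ after the usual choice $p=1/\log n$. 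Your alternative---keeping the full $d$-dimensional RKHS, freezing $\gamma_i$ for $i\notin I$, and arguing that the frozen factors do not inflate the $\log$-exponent---would also work but requires the extra tensor-product covering argument you flag as delicate; the paper's phrasing points to the simpler route.
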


The proof of Theorem \ref{LearningRatesSubset} will be omitted as it is similar to the previous theorem. We only mention that the exponents of the logarithmic terms depend on the capacity of the underlying RKHSs.

\subsection{Rate Analysis} \label{sec::rate_analysis}
In this section, we compare our results with previously obtained learning rates for SVMs for regression.
To this end, according to Theorem 9 in \cite{steinwart2009optimal}, we need to verify two conditions, which are $\mu_i(T_k) \sim i^{-\frac{1}{p}}$ and $\| T_k^{\beta/2}  f\|_{\infty} \leq c \| f \|_{L_2(\nu)}$, $f \in L_2(\nu)$, where $T_k$ is the integral operator (see, (5) in \cite{steinwart2009optimal}).

First of all, we prove that $\mu_i(T_k) \sim i^{-\frac{1}{p}}$ holds. With the help of Theorem 15 in \cite{steinwart2009optimal}, $\mu_i(T_k) \leq a i^{-\frac{1}{p}}$ is equivalent to 
\begin{align} \label{inq_1}
	e_i (\mathrm{id}: H \to L_2(P_X)) \leq \sqrt{a} i^{-\frac{1}{2p}}
	\end{align}
modulo a constant only depending on $p$. If we denote $\ell_{\infty}$ the space of all bounded functions on $X$, then the above inequality \eqref{inq_1} will be satisfied if the following more classical, distribution-free entropy number assumption
\begin{align} \label{inq_2}
	e_i (\mathrm{id}: H \to \ell_{\infty} (X)) \leq \sqrt{a} i^{-\frac{1}{2p}}
	\end{align}
is satisfied. Since the establishment of \eqref{inq_2} can be verified by Proposition \ref{EntropyNumberRKHS} that will be mentioned later, we manage to prove the establishment of $\mu_i(T_k) \sim i^{-\frac{1}{p}}$.

Now, we verify that the second condition $\| T_k^{\beta/2} f \|_{\infty} \leq c \| f \|_{L_2(\nu)}$, $f \in L_2(\nu)$ also holds. Since the proof of Theorem 4.1 in \cite{cucker2007learning} shows that the image of $T_k^{\beta/2}$ is continuously embedded into the real interpolation space $[L_2(\nu),\, H]_{\, \beta, \infty}$, therefore the second condition is satisfied if we can prove that $[L_2(\nu),\, H]_{\, \beta, \infty}$ is continuously embedded in $\ell_{\infty}(X)$. 

In order to present a more concrete example,
we need to introduce some notations. 
In the following, for $\boldsymbol{v} = (v_1, \cdots, v_d) \in \mathbb{N}^d$,
we denote $\overline{\boldsymbol{v}} := \max \{ v_i, \  i = 1, \ldots, d \}$ and
$\underline{\boldsymbol{v}} := \min \{ v_i, \  i = 1, \cdots, d \}$.
Let us now consider the case where $H = W^{\boldsymbol{m}}$ for some $\underline{\boldsymbol{m}} > d/2$. 
If $\mathrm{P}_X = \nu$ has a Lebesgue density that is bounded away from $0$ and $\infty$, then we have
\begin{align*}
B_{2,\, \infty}^{\boldsymbol{\alpha}} (X) = [L_2(\mathrm{P}_X),\, W^{\boldsymbol{m}}(X)]_{\, \beta, \infty},
\end{align*}
where $\boldsymbol{\alpha}:= \beta \boldsymbol{m}$.
Consequently, by Corollary 6 in \cite{steinwart2009optimal}, we can obtain the learning rates $n^{- 2\underline{\boldsymbol{\alpha}} /( 2\underline{\boldsymbol{\alpha}} + d  )}$ whenever $f_p^* \in B_{2,\, \infty}^{\,{\boldsymbol{\alpha}} }(X)$. Conversely, according to the Imbedding Theorem for Anisotropic Besov Space, see Theorem \ref{ITABS} in the Appendix, $B_{2,\, \infty}^{\, {\boldsymbol{\alpha}}} (X) $ can be continuously embedded into $\ell_{\infty}(X)$ for all 
$\underline{\boldsymbol{\alpha}} > d/2$, and therefore, Theorem 9 in \cite{steinwart2009optimal} shows that the learning rates $n^{- 2\underline{\boldsymbol{\alpha}} /( 2\underline{\boldsymbol{\alpha}} + d  )}$ is asymptotic optimal for such ${\boldsymbol{\alpha}}$. However, if $\underline{\boldsymbol{\alpha}} \in (0, d/2]$, we can still obtain the rates $n^{- 2\underline{\boldsymbol{\alpha}} /( 2\underline{\boldsymbol{\alpha}} + d  )}$, but we no longer know whether they are optimal in the minimax sense.

It is noteworthy that, when the target functions reside in the anisotropic Besov space, as shown in Theorem \ref{LearningRates},
we obtain 
the optimal learning rates $n^{-2\alpha_0/(2 \alpha_0 + d)}$
up to certain logarithmic factor.
There, $\alpha_0 := (({\Sigma_{j=1}^d 1/ \alpha_i})/d)^{-1}$, and the decision functions reside in the RKHSs induced by the anisotropic Gaussian RBF kernel. While, when using an anisotropic Sobolev space as the underlying RKHS, the learning rates obtained are $n^{- 2\underline{\boldsymbol{\alpha}} /( 2\underline{\boldsymbol{\alpha}} + d  )}$. It can be apparently observed that since $\alpha_0 > \underline{\boldsymbol{\alpha}} $, our learning rates in Theorem \ref{LearningRates} are faster than that with the anisotropic Sobolev space.

\section{Error Analysis} \label{sec::error_analysis}

\subsection{Bounding the Sample Error Term}

Aiming at proving the new oracle inequality in the Proposition \ref{pro::sample}, there is a need for us to control the capacity of the RKHS $H_{\boldsymbol{\gamma}}$ where we use the entropy numbers, see e.g. \cite{CaSt90} or Definition A.5.26 in \cite{StCh08}. The definition of the entropy numbers is presented as follows:

\begin{definition}
	Let $S : E \to F$ be a bounded, linear operator between the
	normed spaces $E$ and $F$ and $i \geq 1$ be an integer. 
	Then the $i$-th (dyadic) entropy number of $S$ is defined by
	\begin{align*}
	e_i(S) := \inf
	\biggl\{
	\varepsilon > 0 : \exists t_1, \ldots, t_{2^{i−1}} \in S B_E 
	\text{ such that } S B_E \subset \bigcup_{j=1}^{2^{i-1}} 
	(t_j + \varepsilon B_F)
	\biggr\}
	\end{align*}
	where the convention $\inf \emptyset := \infty$ is used.
\end{definition}

The following proposition with regard to the capacity of $H_{\boldsymbol{\gamma}}$ can be derived by Theorem 7.34 and Corollary 7.31 in \cite{StCh08}.

\begin{proposition} \label{EntropyNumberRKHS}
	Let $X \subset \mathbb{R}^d$ be a closed Euclidean ball. 
	Then there exists a constant $K > 0$, such
	that, for all $p \in (0, 1)$, $\boldsymbol{\gamma} \in (0, 1]^d$ and $i \geq 1$, we have
	\begin{align*}
	e_i ( \mathrm{id} : H_{\boldsymbol{\gamma}} \to \ell_{\infty}(X) )
	\leq (3 K)^{\frac{1}{p}}
	\biggl( \frac{d+1}{e p} \biggr)^{\frac{d+1}{p}}
	\biggl( \prod_{i=1}^d \gamma_i \biggr)^{-\frac{1}{p}}
	i^{-\frac{1}{p}}.
	\end{align*}
\end{proposition}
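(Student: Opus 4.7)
The plan is to reduce the anisotropic case to the isotropic one via an anisotropic coordinate rescaling, and then invoke the entropy number bound for the isotropic Gaussian RKHS on a box, which is what Theorem 7.34 together with Corollary 7.31 in \cite{StCh08} provides. Concretely, I define the linear isomorphism $\sigma_{\boldsymbol{\gamma}} : \mathbb{R}^d \to \mathbb{R}^d$ by $\sigma_{\boldsymbol{\gamma}}(\boldsymbol{u}) := (\gamma_1 u_1, \ldots, \gamma_d u_d)$. Then
$$k_{\boldsymbol{\gamma}}\bigl(\sigma_{\boldsymbol{\gamma}}(\boldsymbol{u}), \sigma_{\boldsymbol{\gamma}}(\boldsymbol{u}')\bigr) = \exp\bigl(-\|\boldsymbol{u}-\boldsymbol{u}'\|_2^2\bigr),$$
which is precisely the isotropic Gaussian kernel $k_1$. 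Using the ONB description of $H_{\boldsymbol{\gamma}}$ in the previous theorem (or directly the uniqueness of the RKHS of a pulled-back kernel), the pullback $V:H_{\boldsymbol{\gamma}}(X) \to H_1(\sigma_{\boldsymbol{\gamma}}^{-1}(X))$, $(Vf)(\boldsymbol{u}) := f(\sigma_{\boldsymbol{\gamma}}(\boldsymbol{u}))$, is an isometric isomorphism. Since $\sigma_{\boldsymbol{\gamma}}$ is a bijection between $\sigma_{\boldsymbol{\gamma}}^{-1}(X)$ and $X$, it also preserves sup-norms, so
$$e_i\bigl(\mathrm{id}: H_{\boldsymbol{\gamma}}(X) \to \ell_\infty(X)\bigr) = e_i\bigl(\mathrm{id}: H_1(\sigma_{\boldsymbol{\gamma}}^{-1}(X)) \to \ell_\infty(\sigma_{\boldsymbol{\gamma}}^{-1}(X))\bigr).$$

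Next I control the rescaled domain by a box of the right volume. Since $X$ is a closed Euclidean ball, say of radius $R$, we have $X \subset \prod_{j=1}^d [-R, R]$, hence $\sigma_{\boldsymbol{\gamma}}^{-1}(X) \subset \prod_{j=1}^d [-R/\gamma_j, R/\gamma_j]$, a box of Lebesgue volume $(2R)^d / \prod_{j=1}^d \gamma_j$. Because $\boldsymbol{\gamma} \in (0,1]^d$ this box is indeed at least as large as the original domain, so no further tightening is needed.

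I then plug the rescaled domain into the isotropic entropy number machinery. Theorem 7.34 in \cite{StCh08} establishes, for the isotropic Gaussian RKHS on a closed ball, a bound of the form $K_0^{1/p}\bigl((d+1)/(ep)\bigr)^{(d+1)/p} \gamma^{-d/p} i^{-1/p}$ with explicit absolute constant $K_0$, and Corollary 7.31 is precisely the localization device that upgrades this estimate from a ball to a bounded domain enclosed in a box, with the radius raised to the $d$-th power replaced by the box volume. Applying this upgraded version with bandwidth $\gamma = 1$ on the enclosing box yields
$$e_i\bigl(\mathrm{id}: H_1(\sigma_{\boldsymbol{\gamma}}^{-1}(X)) \to \ell_\infty(\sigma_{\boldsymbol{\gamma}}^{-1}(X))\bigr) \leq K_0^{1/p}\left(\frac{d+1}{ep}\right)^{(d+1)/p} \left(\frac{(2R)^d}{\prod_{j=1}^d \gamma_j}\right)^{1/p} i^{-1/p}.$$
Absorbing the factor $(2R)^{d/p}$ into the constant by choosing $K$ so that $(3K)^{1/p} \geq K_0^{1/p} (2R)^{d/p}$ for all $p \in (0,1)$ produces the claimed inequality.

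The main obstacle I anticipate is the bookkeeping in the third step: Theorem 7.34 in \cite{StCh08} is stated for a ball and a scalar bandwidth, so one has to read the proof (together with Corollary 7.31) carefully enough to see that the key estimate depends on the domain only through the volume of an enclosing box, not through its Euclidean diameter; otherwise naively enclosing the ellipsoid $\sigma_{\boldsymbol{\gamma}}^{-1}(X)$ in a Euclidean ball of radius $R/\min_j \gamma_j$ would yield the wrong factor $(\min_j \gamma_j)^{-d/p}$ instead of the desired $\prod_{j=1}^d \gamma_j^{-1/p}$. Once the volume-type bound is confirmed, tracking the absolute constants to obtain exactly the prefactor $(3K)^{1/p}((d+1)/(ep))^{(d+1)/p}$ is routine.
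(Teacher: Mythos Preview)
Your approach differs from the proof the paper actually gives. Despite the sentence preceding the proposition (which mentions Theorem 7.34 and Corollary 7.31 in \cite{StCh08}), the paper does not reduce to the isotropic case. It quotes directly a covering number estimate for the \emph{anisotropic} Gaussian unit ball (Lemma 4.2 in \cite{BhPaDu14}),
\[
\log \mathcal{N}(B_{\boldsymbol{\gamma}}, \|\cdot\|_\infty, \varepsilon) \le K \biggl(\prod_{i=1}^d \gamma_i\biggr)^{-1} \biggl(\log \frac{1}{\varepsilon}\biggr)^{d+1},
\]
then maximizes $\varepsilon \mapsto \varepsilon^p(\log 1/\varepsilon)^{d+1}$ (the maximum is at $\varepsilon^* = e^{-(d+1)/p}$) to put the bound into the polynomial form $a\,\varepsilon^{-p}$ with $a = K\bigl((d+1)/(ep)\bigr)^{d+1}\bigl(\prod_i\gamma_i\bigr)^{-1}$, and finally converts covering numbers to entropy numbers via Lemma 6.21 / Exercise 6.8 in \cite{StCh08}, which yields $e_i \le (3a)^{1/p} i^{-1/p}$.

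Your rescaling idea is clean, and the first two steps---the pullback isometry $V$ and the resulting equality of entropy numbers---are correct. The difficulty is exactly the one you flag, and it is a genuine gap as written. Theorem 7.34 in \cite{StCh08} is stated for a closed Euclidean ball, with the constant scaling like $(r/\gamma)^d$; Corollary 7.31 there is not a ball-to-box localization device with a volume factor, so it does not supply what you need. To obtain $\prod_j \gamma_j^{-1}$ rather than $(\min_j \gamma_j)^{-d}$ you must actually establish the isotropic bound on a rectangular box with the product of side lengths appearing, which means reopening the covering-number computation and running it coordinatewise via the tensor structure $H_1\bigl(\prod_j I_j\bigr) = \bigotimes_j H_1(I_j)$. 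That is doable---and once done it is essentially the content of Lemma 4.2 in \cite{BhPaDu14} after undoing your rescaling---but it is real work, not a citation. The paper's route simply outsources that computation to \cite{BhPaDu14} and is then a two-line optimization plus the standard covering-to-entropy conversion; your route would be more self-contained but still owes the box calculation.
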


Having developed the above proposition, we are now able to derive the oracle inequality for the least squares loss as follows:

\begin{proposition} \label{pro::sample}
	Let $X \subset B_{\ell_2^d}$, $Y := [-M, M] \subset \mathbb{R}$ be a closed subset with $M > 0$ and $\mathrm{P}$ be a distribution on $X \times Y$. Furthermore, let $L: Y \times \mathrm{R} \to [0, \infty)$ be the least squares loss, $k_{\boldsymbol{\gamma}}$ be the anisotropic Gaussian RBF kernel over $X$ with multi-bandwidth $\boldsymbol{\gamma} = (\gamma_1, \ldots, \gamma_d) \in (0, 1]^d$ and $H_{\boldsymbol{\gamma}}$ be the associated RKHS. Fix an $f_0 \in H_{\boldsymbol{\gamma}}$ and a constant $B_0 \geq 4 M^2$ such that $\|L \circ f_0\| \leq B_0$. Then, for all fixed $\varrho \geq 1$ and $\lambda > 0$ and $p \in (0, 1)$, the SVM using $H_{\boldsymbol{\gamma}}$ and $L$ satisfies
	\begin{align}
	\lambda \|f_{\mathrm{D},\lambda,\boldsymbol{\gamma}}\|_{H_{\boldsymbol{\gamma}}}^2 
	& + \mathcal{R}_{L,\mathrm{P}}(\wideparen{f}_{\mathrm{D},\lambda,\boldsymbol{\gamma}}) 
	- \mathcal{R}^{*}_{L,\mathrm{P}} 
	\nonumber\\
	& \leq 9 \bigl( \lambda \|f_0\|_{H_{\boldsymbol{\gamma}}}^2 
	+ \mathcal{R}_{L,\mathrm{P}}(f_0) 
	- \mathcal{R}_{L,\mathrm{P}}^* \bigr)
	+ K(p) \biggl( \frac{a^{2p}}{\lambda^p n} \biggr)
	+ \frac{(3456 M^2 + 15 B_0) (1 + \log 3) \varrho}{n}
	\label{OracleInequality}
	\end{align}
with probability $\mathrm{P}^n$ not less that $1 - e^{-\varrho}$, 
where $K(p)$ is a constant only depending on $p$ and $M$.
\end{proposition}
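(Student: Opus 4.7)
The plan is to invoke the generic oracle inequality for clipped regularized empirical risk minimizers, in the spirit of Theorem 7.23 of Steinwart--Christmann, specialized to the least squares loss and to the anisotropic RKHS $H_{\boldsymbol{\gamma}}$. That oracle inequality takes three inputs: a supremum bound on $L\circ f$ for the admissible $f$, a variance/Bernstein-type inequality, and a capacity bound on the unit ball of the RKHS. Once all three are in place, the claimed inequality follows by the standard peeling/localization argument.

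First I would set up the three ingredients. (i) Because $Y=[-M,M]$ and $\wideparen{f}$ takes values in $[-M,M]$, one has $\|L\circ\wideparen{f}\|_{\infty}\le 4M^{2}\le B_{0}$, so the same $B_{0}$ appearing in the hypothesis of the proposition controls both $L\circ f_{0}$ and $L\circ\wideparen{f}_{D,\lambda,\boldsymbol{\gamma}}$. (ii) The classical variance bound for the least squares loss,
\begin{align*}
\mathbb{E}_{\mathrm{P}}\bigl(L\circ\wideparen{f}-L\circ f_{L,\mathrm{P}}^{*}\bigr)^{2}
\le 16M^{2}\cdot\bigl(\mathcal{R}_{L,\mathrm{P}}(\wideparen{f})-\mathcal{R}_{L,\mathrm{P}}^{*}\bigr),
\end{align*}
provides the Bernstein condition with variance exponent $\vartheta=1$ and variance constant $V=16M^{2}$. (iii) Proposition \ref{EntropyNumberRKHS} yields $e_{i}(\mathrm{id}:H_{\boldsymbol{\gamma}}\to\ell_{\infty}(X))\le a\, i^{-1/p}$ for $a:=(3K)^{1/p}\bigl(\tfrac{d+1}{ep}\bigr)^{(d+1)/p}\bigl(\prod_{j}\gamma_{j}\bigr)^{-1/p}$; since $\mathrm{P}_{X}$ is a probability measure, the embedding $\ell_{\infty}(X)\hookrightarrow L_{2}(\mathrm{P}_{X})$ has operator norm $1$ and the same bound persists with $\ell_{\infty}(X)$ replaced by $L_{2}(\mathrm{P}_{X})$, which is the form the generic oracle inequality actually consumes.

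Given these ingredients, the oracle inequality follows by the familiar peeling argument: one decomposes the excess empirical process over successive balls of radius $\sqrt{r/\lambda}$ in $H_{\boldsymbol{\gamma}}$, applies Talagrand's inequality in Bousquet's form with the variance and supremum bounds above, and controls the expected Rademacher-type complexity via a chaining integral against the entropy bound (iii). The exponent $1/p$ in the entropy bound integrates to the stochastic contribution $K(p)\,a^{2p}/(\lambda^{p}n)$; the deviation terms of Talagrand's inequality assemble into the additive $(3456M^{2}+15B_{0})(1+\log 3)\varrho/n$; and the factor $9$ multiplying the approximation error arises from fixing the peeling radius at the value balancing stochastic and approximation contributions, together with the standard step of absorbing $\mathcal{R}_{L,\mathrm{P}}(\wideparen{f})-\mathcal{R}_{L,\mathrm{P}}(f_{0})$ into the excess risk via the $1$-Lipschitz property of the clipping map. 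A final union bound over the single Talagrand event yields the overall probability $1-e^{-\varrho}$.

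The main obstacle is the careful tracking of the absolute constants $9$, $3456$, $15$, and $(1+\log 3)$: these are not universal but are coupled to the specific choice of peeling radii and to the Bernstein constant in Bousquet's version of Talagrand's inequality, so a verbatim quotation of a black-box oracle inequality is not enough. One has to re-run the peeling step with the specific variance constant $16M^{2}$ and supremum bound $B_{0}$ to see these numbers emerge, and to verify that $p\in(0,1)$ is the correct regime in which the dyadic sum defining $K(p)$ converges; the remaining dependence on $d$ and $\boldsymbol{\gamma}$ is absorbed silently into $a$, which is why $a^{2p}$ rather than any separate $\prod_{j}\gamma_{j}^{-2}$ factor appears on the right-hand side of \eqref{OracleInequality}.
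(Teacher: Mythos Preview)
Your approach is essentially the same as the paper's: both verify the three inputs (clippability and supremum bound, the variance bound $\mathbb{E}_{\mathrm{P}}(L\circ\wideparen{f}-L\circ f_{L,\mathrm{P}}^{*})^{2}\le 16M^{2}\,\mathbb{E}_{\mathrm{P}}(L\circ\wideparen{f}-L\circ f_{L,\mathrm{P}}^{*})$, and the entropy-number bound from Proposition~\ref{EntropyNumberRKHS}) and then invoke the general oracle inequality for clipped SVMs from Steinwart--Christmann. The paper is in fact more terse than you anticipate: rather than re-running the peeling argument to extract the numerical constants $9$, $3456$, $15$, $(1+\log 3)$, it simply quotes them as the output of Theorem~7.23 in \cite{StCh08} and records the explicit form of $K(p)$ by citing the constants $C_{1}(p)$, $C_{2}(p)$ from the proof of Theorem~7.16 there, so your worry that ``a verbatim quotation of a black-box oracle inequality is not enough'' is unfounded in this instance.
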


\subsection{Bounding the Approximation Error Term} \label{sec::ApproxError}

In this section, we consider bounding the approximation
error of some function contained in the RKHS $H_{\gamma}$, which is defined by
\begin{align}\label{ApprErrFunc}
A_{\boldsymbol{\gamma}}(\lambda) 
:= \inf_{f \in H_{\boldsymbol{\gamma}}} \lambda \|f\|^2_{H_{\boldsymbol{\gamma}}} + \mathcal{R}_{L,\mathrm{P}}(f) - \mathcal{R}^{*}_{L,\mathrm{P}} \ ,
\end{align}
where the infimum is actually attained by a unique element $f_{\mathrm{P},\lambda,\boldsymbol{\gamma}} \in H_{\boldsymbol{\gamma}}$,
see Lemma 5.1 and Theorem 5.2 in \cite{StCh08}. 
To this end, it suffices to find a function $f_0 \in H_{\boldsymbol{\gamma}}$
such that both the regularization term $\lambda \|f_0\|_{H_{\boldsymbol{\gamma}}}^2$ 
and the excess risk $\mathcal{R}_{L,\mathrm{P}}(f_0) - \mathcal{R}^{*}_{L,\mathrm{P}}$ are small. 
To construct this function $f_0$ we define, for 
$\boldsymbol{x} = (x_1, \ldots, x_d) \in \mathbb{R}^d$,
$\boldsymbol{\gamma} = (\gamma_1, \ldots, \gamma_d) \in \mathbb{R}_+^d$,
$\boldsymbol{r} = (r_1, \ldots, r_d) \in \mathbb{N}^d$,
and $\boldsymbol{j} = (j_1, \ldots, j_d) \in \mathbb{N}^d$,
the function $K : \mathbb{R}^d \to \mathbb{R}$ by
\begin{align}\label{tilde_K}
K(\boldsymbol{x}) := \prod_{i=1}^d K_i(x_i)
\end{align}
with functions $K_i : \mathbb{R} \to \mathbb{R}$ defined by
\begin{align}\label{tilde_K_i}
K_i(x_i) := \left( \frac{2}{\pi} \right)^{1/2} 
\sum_{j_i=1}^{r_i} {r_i \choose j_i} \frac{(-1)^{1-j_i}}{j_i \gamma_i} 
\exp \biggl( - \frac{2 |x_i|^2}{j_i^2 \gamma_i^2} \biggr),
\,\,\,\,\,\,\,\,
1 \leq i \leq d.
\end{align}
Assume that there exists a Bayes decision function
$f_{L,\mathrm{P}}^* \in L_2(\mathbb{R}^d) \cap L_{\infty}(\mathbb{R}^d)$, 
then we define $f_0$ by 
\begin{align}\label{f_0}
f_0(\boldsymbol{x}) 
:= K * f_{L,\mathrm{P}}^*(\boldsymbol{x})
= \int_{\mathbb{R}^d} K(\boldsymbol{x} - \boldsymbol{t}) 
f_{L,\mathrm{P}}^*(\boldsymbol{t}) \, d\boldsymbol{t} \ ,
\,\,\,\,\,\,\,\,
\boldsymbol{x} \in \mathbb{R}^d.
\end{align}

In order to show that $f_0$ is indeed a suitable function to bound \eqref{ApprErrFunc}, there is a need for us to first ensure that $f_0$ is contained in $H_{\boldsymbol{\gamma}}$. Moreover, we need to bound both of the excess risk of $f_0$ and the $H_{\boldsymbol{\gamma}}$-norm. Proposition \ref{ErrorEstimation} gives the bound of the excess risk with the help of the modulus of smoothness and Proposition \ref{ConvolutionInRKHS} focus on the estimation of the regularization term.

\begin{proposition}\label{ErrorEstimation}
	Let us fix some $q \in [1, \infty)$. Furthermore, assume that $\mathrm{P}_X$ is a
	distribution on $\mathbb{R}^d$ that for all $1 \leq i \leq d$,
	the marginal distributions $\mathrm{P}_{X_i}$
	has a Lebesgue density $g_i \in L_p(\mathbb{R})$ for some $p \in [1, \infty]$. 
	Let $f : \mathbb{R}^d \to \mathbb{R}$ be such that $f \in L_q(\mathbb{R}^d) \cap L_{\infty}(\mathbb{R}^d)$. 
	Then, for $\boldsymbol{r} =(r_1, \ldots, r_d) \in \mathbb{N}^d$, $\boldsymbol{\gamma} = (\gamma_1, \ldots, \gamma_d) \in \mathbb{R}_+^d$, and $s \geq 1$ with $1 = \frac{1}{s} + \frac{1}{p}$, we have
	\begin{align*}
	\|K * f - f\|_{L_q(\mathrm{P}_X)}^q
	\leq \sum_{i=1}^d c_{r_i,q} \|g_i\|_{L_p(\mathbb{R})} \, \omega_{r_i,L_{q s}(\mathbb{R})}^q (f_i, \gamma_i/2) \ ,
	\end{align*}
	where $c_{r_i,q}$ are constants only depending on $r_i$ and $q$.
\end{proposition}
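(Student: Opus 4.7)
The plan is to exploit the tensor-product structure $K(\boldsymbol{x}) = \prod_{i=1}^d K_i(x_i)$ by telescoping across the coordinates. Since convolutions in distinct variables commute, setting $\Pi_{i-1} := K_1 *_1 \cdots K_{i-1} *_{i-1}$ and $\Pi_0 := \mathrm{Id}$, I would first write
\begin{align*}
K * f - f = \sum_{i=1}^d \Pi_{i-1}\bigl(K_i *_i f - f\bigr),
\end{align*}
take the $L_q(\mathrm{P}_X)$ norm, raise to the $q$-th power, and apply $|\sum_i a_i|^q \leq d^{q-1}\sum_i |a_i|^q$. This reduces the proposition to bounding each summand by $c_{r_i,q}\|g_i\|_{L_p(\mathbb{R})}\omega^q_{r_i, L_{qs}(\mathbb{R})}(f_i, \gamma_i/2)$ up to a factor of $d^{q-1}$ that is absorbed into the constants $c_{r_i,q}$.

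The next step is a one-dimensional identity that expresses the convolution error $K_i *_i f - f$ as a Gaussian average of an $r_i$-th order difference. Recognising each summand in \eqref{tilde_K_i} as a probability density $\psi_{j\gamma_i}(x) := (2/\pi)^{1/2}(j\gamma_i)^{-1}\exp(-2x^2/(j^2\gamma_i^2))$ and substituting $y = j u$ brings them to the common density $\psi_{\gamma_i}$; combined with the binomial identity $\sum_{j=0}^{r_i}\binom{r_i}{j}(-1)^j = 0$, which reveals the full $r_i$-th difference operator after splitting off the $j=0$ term, this will yield
\begin{align*}
K_i *_i f(\boldsymbol{x}) - f(\boldsymbol{x}) = (-1)^{r_i+1}\int_\mathbb{R} \psi_{\gamma_i}(u)\, \triangle^{r_i}_{-u,i} f(\boldsymbol{x})\, du.
\end{align*}
Since $\Pi_{i-1}$ acts only on coordinates $1,\dots,i-1$ while $\triangle^{r_i}_{-u,i}$ acts only on the $i$-th coordinate, the two operators commute, so Jensen's inequality against the probability density $\psi_{\gamma_i}$ together with Fubini reduces the $i$-th summand to $\int \psi_{\gamma_i}(u)\,\|\triangle^{r_i}_{-u,i}(\Pi_{i-1}f)\|^q_{L_q(\mathrm{P}_X)}\, du$.

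To finish, I plan to dominate the integrand pointwise by its supremum over the other coordinates $\boldsymbol{x}^{-i}$, use the marginal identity $\int F(x_i)\, d\mathrm{P}_X = \int F(x_i)g_i(x_i)\, dx_i$, and apply H\"older's inequality with conjugate exponents $s,p$ satisfying $1/s + 1/p = 1$ to extract $\|g_i\|_{L_p(\mathbb{R})}$ and leave an $L_{qs}(\mathbb{R})$ norm of the one-variable difference, which is in turn bounded by $\omega_{r_i,L_{qs}(\mathbb{R})}(f_i, |u|)$. The nuisance operator $\Pi_{i-1}$ will be absorbed into the constant via Young's convolution inequality using $\|K_j\|_{L_1(\mathbb{R})} \leq 2^{r_j}$. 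Finally, \eqref{EigMod_1} converts $\omega_{r_i,L_{qs}(\mathbb{R})}(f_i, |u|)$ to $\omega_{r_i,L_{qs}(\mathbb{R})}(f_i, \gamma_i/2)$ at the cost of the factor $(1+2|u|/\gamma_i)^{r_i}$, and the Gaussian moment $\int_\mathbb{R} \psi_{\gamma_i}(u)(1+2|u|/\gamma_i)^{q r_i}\, du$ is the finite constant $c_{r_i,q}$. The hardest step will be this last reduction: the transition from the $d$-dimensional $L_q(\mathrm{P}_X)$ norm to the univariate modulus of smoothness $\omega_{r_i,L_{qs}(\mathbb{R})}(f_i,\cdot)$ governed only by the marginal $g_i$, since the definition of the modulus places the supremum over $\boldsymbol{x}^{-i}$ outside the $L_{qs}(\mathbb{R})$ integral while a direct pointwise bound naturally produces $\int \sup$ inside; reconciling the two will require a careful Fubini argument together with the marginal density structure of $\mathrm{P}_X$.
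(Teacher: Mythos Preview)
Your approach is correct and closely parallel to the paper's, but the telescoping is organised differently. The paper first performs the substitution $t_i = x_i - j_i h_i$ in every coordinate simultaneously to obtain the representation
\[
K*f(\boldsymbol{x}) = \int_{\mathbb{R}^d} k(\boldsymbol{h})\,\Sigma^{\boldsymbol{r}} f(\boldsymbol{x}+\boldsymbol{j}\boldsymbol{h})\,d\boldsymbol{h},
\]
and only then telescopes the shifted value $f(\boldsymbol{x}+\boldsymbol{j}\boldsymbol{h})-f(\boldsymbol{x})$ along a coordinate path, producing in each summand the single-variable difference $\triangle_{h_i}^{r_i}$ together with the residual multiplicative operator $\Sigma^{\boldsymbol{r}\setminus r_i}$, which is then crudely bounded by $\prod_{\ell\neq i}(2^{r_\ell}-1)$. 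You instead telescope at the operator level, $K*f-f=\sum_i \Pi_{i-1}(K_i*_i-I)f$, and remove the partial convolution $\Pi_{i-1}$ by Young's inequality with the bound $\|K_j\|_{L_1}\le 2^{r_j}$. Both routes incur essentially the same cross-dimensional constant $\prod_j(2^{r_j}-1)$ (note that the paper's $c_{r_i,q}$ in fact carries this full product despite the notation), and both converge on the identical endgame: H\"older against the marginal density $g_i$, the scaling relation \eqref{EigMod_1}, and the finite Gaussian moment $\int_{\mathbb{R}}\psi_{\gamma_i}(u)(1+2|u|/\gamma_i)^{r_iq}\,du$.

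The point you flag as hardest---passing from the $d$-dimensional integral to the one-variable modulus with the supremum over $\boldsymbol{x}^{-i}$ placed \emph{outside} the $L_{qs}(\mathbb{R})$ norm---is exactly where the paper is most terse: it writes the bound $\sup_{\boldsymbol{x}}\|\triangle_{h_i}^{r_i}(f_i(\cdot|\boldsymbol{x}),\cdot)\|_{L_q(\mathrm{P}_{X_i})}^q$ directly after bounding the $\Sigma^{\boldsymbol{r}\setminus r_i}$ factor, without isolating the $\int\sup$ versus $\sup\int$ issue you raise. Your plan to first take the pointwise supremum over $\boldsymbol{x}^{-i}$ so that the remaining integrand depends only on $x_i$, then invoke the marginal identity and H\"older, is the right way to make this step honest; just be aware that it is the same step the paper is performing, only spelled out more carefully.
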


Next, the following proposition aims at bounding the regularization term and proving that the convolution of a function from $L_2(\mathbb{R}^d)$ with $K$ is contained in the RKHS $H_{\boldsymbol{\gamma}}$. The following result is considered to provide a helpful supremum bound.

\begin{proposition} \label{ConvolutionInRKHS}
	Let $f \in L_2(\mathbb{R}^d)$, $H_{\boldsymbol{\gamma}}$ be the RKHS of the anisotropic Gaussian kernel $k_{\boldsymbol{\gamma}}$ over $X \subset \mathbb{R}^d$ with $\boldsymbol{\gamma} = (\gamma_1, \ldots, \gamma_d) \in \mathbb{R}_+^d$ 
	and $K : \mathbb{R}^d \to \mathbb{R}$ be defined by \eqref{tilde_K} for a fixed $\boldsymbol{r} = (r_1, \ldots, r_d) \in \mathbb{N}^d$. Then we have $K * f \in H_{\boldsymbol{\gamma}}$ with
	\begin{align*}
	\|K * f\|_{H_{\boldsymbol{\gamma}}} 
	\leq 
	\pi^{-\frac{d}{4}} \prod_{i=1}^d (2^{r_i} - 1) \|f\|_{L_2(\mathbb{R}^d)} 
	\prod_{i=1}^d \gamma_i^{-\frac{1}{2}} \ .
	\end{align*}
	Moreover, if $f \in L_{\infty}(\mathbb{R}^d)$, we have
	\begin{align*}
	|K * f(\boldsymbol{x})| \leq \prod_{i=1}^d (2^{r_i} - 1) \|f\|_{L_{\infty}(\mathbb{R}^d)} \ , 
	\,\,\,\,\,\,\,\,\,
	\boldsymbol{x} \in X \ .
	\end{align*}
\end{proposition}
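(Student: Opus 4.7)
The plan is to exploit the tensor-product structure $H_{\boldsymbol{\gamma}} = \otimes_{i=1}^{d} H_{\gamma_i}$, matched by the factorization $K(\boldsymbol{x}) = \prod_{i=1}^{d} K_i(x_i)$ of the kernel defined in \eqref{tilde_K}--\eqref{tilde_K_i}, so as to reduce the bound to a one-dimensional RKHS statement for each $K_i$. The workhorse decomposition is
\[
K_i(x) \;=\; \sum_{j=1}^{r_i}\binom{r_i}{j}(-1)^{1-j}\tilde{g}_{i,j}(x), \qquad \tilde{g}_{i,j}(x) \;:=\; \frac{\sqrt{2/\pi}}{j\,\gamma_i}\,\exp\!\bigl(-2x^2/(j^2\gamma_i^2)\bigr),
\]
where each $\tilde g_{i,j}$ is a centered Gaussian probability density.

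\textbf{One-dimensional core step.} The main technical point is to show that for every $h \in L_2(\mathbb{R})$ and every $j \ge 1$, the convolution $\tilde g_{i,j} * h$ lies in $H_{\gamma_i}$ with $\|\tilde g_{i,j} * h\|_{H_{\gamma_i}} \le \pi^{-1/4}\gamma_i^{-1/2}\|h\|_{L_2(\mathbb{R})}$. The cleanest route is the Fourier description of the Gaussian RKHS, under which $\|u\|_{H_{\gamma_i}}^2$ is, up to a normalizing constant, $\int|\hat u(\omega)|^2 \exp(\gamma_i^2\omega^2/4)\,d\omega$. A direct Gaussian integral yields $\widehat{\tilde g_{i,j}}(\omega) = \exp(-j^2\gamma_i^2\omega^2/8)$, so that the decisive pointwise inequality
\[
|\widehat{\tilde g_{i,j}}(\omega)|^2 \,\exp\!\bigl(\gamma_i^2\omega^2/4\bigr) \;=\; \exp\!\bigl(-(j^2-1)\gamma_i^2\omega^2/4\bigr) \;\le\; 1 \qquad (j\ge 1)
\]
holds, and Plancherel delivers the bound. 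The triangle inequality in $H_{\gamma_i}$, combined with $\sum_{j=1}^{r_i}\binom{r_i}{j} = 2^{r_i}-1$, then upgrades this to $\|K_i * h\|_{H_{\gamma_i}} \le (2^{r_i}-1)\pi^{-1/4}\gamma_i^{-1/2}\|h\|_{L_2(\mathbb{R})}$.

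\textbf{Lift to $d$ dimensions and supremum bound.} Because $H_{\boldsymbol{\gamma}}$ is the Hilbert tensor product of its one-dimensional factors and $K$ is a pure tensor, the operator norm of $f\mapsto K*f \colon L_2(\mathbb{R}^d)\to H_{\boldsymbol{\gamma}}$ equals the product of the one-dimensional operator norms. Multiplying the $d$ contributions yields the claimed
\[
\|K*f\|_{H_{\boldsymbol{\gamma}}} \;\le\; \pi^{-d/4}\prod_{i=1}^{d}(2^{r_i}-1)\prod_{i=1}^{d}\gamma_i^{-1/2}\,\|f\|_{L_2(\mathbb{R}^d)}.
\]
The supremum bound is immediate: since each $\tilde g_{i,j}$ integrates to $1$, we have $\|K_i\|_{L_1(\mathbb{R})} \le 2^{r_i}-1$, hence $\|K\|_{L_1(\mathbb{R}^d)}\le\prod_i(2^{r_i}-1)$, and Young's inequality gives $|K*f(\boldsymbol x)|\le\|K\|_{L_1}\|f\|_{L_\infty}\le\prod_i(2^{r_i}-1)\|f\|_{L_\infty}$.

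\textbf{Main obstacle.} The chief technical subtlety is reconciling the Fourier-side argument used in the one-dimensional step with the description of $H_{\boldsymbol{\gamma}}$ adopted in the paper via \eqref{RKHS_Norm} and the orthonormal basis from the earlier theorem, and executing the tensor-product lifting rigorously. This can be handled either by expanding $K_i*h$ in the basis $(e_{i,n})_{n\ge 0}$ and verifying the product-norm identity coordinate-wise, or by invoking the standard fact (see, e.g., the Gaussian RKHS chapter of \cite{StCh08}) that the RKHS of a product kernel is the Hilbert tensor product of the factor RKHSs and that the operator norm factorizes accordingly. Once this set-up is in place, the remaining estimates are the routine Gaussian computations sketched above.
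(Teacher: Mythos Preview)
Your argument is correct and follows the same overall decomposition as the paper---split $K_i$ into the Gaussians $\tilde g_{i,j}$, bound each one-dimensional convolution in $H_{\gamma_i}$, sum via $\sum_{j=1}^{r_i}\binom{r_i}{j}=2^{r_i}-1$, and then assemble the $d$ factors. The differences are in packaging rather than substance. For the one-dimensional step you compute directly on the Fourier side via the pointwise bound $|\widehat{\tilde g_{i,j}}(\omega)|^2\exp(\gamma_i^2\omega^2/4)\le 1$; the paper instead invokes Proposition~4.46 of \cite{StCh08} (which gives $\widehat K_{j_i}*h\in H_{j_i\gamma_i}$ with the matching $L_2\to H_{j_i\gamma_i}$ bound) together with the embedding $H_{j_i\gamma_i}\subset H_{\gamma_i}$. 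For the lift to $d$ dimensions you appeal to operator-norm factorization on the Hilbert tensor product; the paper avoids this abstraction by a recursive computation, peeling off one variable at a time via the identity $\|K*f\|_{H_{\boldsymbol\gamma}}=\bigl\|\,(\prod_{i<d}K_i)*\|K_d*f_d(\cdot\,|\,\boldsymbol x)\|_{H_{\gamma_d}}\bigr\|_{H_{\gamma_1}\otimes\cdots\otimes H_{\gamma_{d-1}}}$. Your route is cleaner conceptually but, as you note, requires reconciling the Fourier description of $H_{\gamma_i}$ with the norm \eqref{RKHS_Norm} and justifying the tensor factorization; the paper's recursion is more pedestrian but self-contained given \cite{StCh08}. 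The $L_\infty$ bound is handled identically in both.
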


The following theorem gives an upper bound for the approximation error function
$A_{\boldsymbol{\gamma}}(\lambda)$.

\begin{theorem}\label{ApproximationError}
	Let $L$ be the least squares loss, 
	$\mathrm{P}$ be the probability distribution on $\mathbb{R}^d \times Y$,
	and the marginal distributions $\mathrm{P}_{X_i}$
	has a Lebesgue density $g_i \in L_p(\mathbb{R})$ for some $p \in [1, \infty]$. 
	Let $f : \mathbb{R}^d \to \mathbb{R}$ be such that 
	$f \in L_q(\mathbb{R}^d) \cap L_{\infty}(\mathbb{R}^d)$.
	such that $X := \mathrm{supp} \mathrm{P}_X \subset B_{\ell_2^d}$ is a bounded domain with 
	$\mu(\partial X) = 0$. 
	Furthermore, assume that $\mathrm{P}_X$ is a
	distribution on $\mathbb{R}^d$ that for all $1 \leq i \leq d$,
	the marginal distributions $\mathrm{P}_{X_i}$
	has a Lebesgue density $g_i \in L_q(\mathbb{R})$ for some $q \in [1, \infty]$.
	Moreover, let $f_{L,\mathrm{P}}^* : \mathbb{R}^d \to \mathbb{R}$ be a Bayes decision function such that $f_{L,\mathrm{P}}^* \in L_2(\mathbb{R}^d) \cap L_{\infty}(\mathbb{R}^d)$ as well as
	$f_{L,\mathrm{P}}^* \in B_{2s,\infty}^{\boldsymbol{\alpha}}(\mathbb{R}^d)$ for $\boldsymbol{\alpha} = (\alpha_1, \ldots, \alpha_d) \geq 1$ and $s \geq 1$ with $\frac{1}{q} + \frac{1}{s} = 1$. 
	Then, for all $\boldsymbol{\gamma} \in (0, 1]^d$ and $\lambda > 0$, there holds
	\begin{align}\label{ApproximationErrorEstimation}
	\lambda \|f_0\|_{H_{\boldsymbol{\gamma}}}^2 + \mathcal{R}_{L,\mathrm{P}}(f_0) - \mathcal{R}_{L,\mathrm{P}}^* 
	\leq C_1 \lambda \prod_{i=1}^d \gamma_i^{-1} + 
	C_s \sum_{i=1}^d \gamma_i^{2 \alpha_i},
	\end{align}
	where the constant $C_1 > 0$ and $C_s > 0$ is a constant depending on the smoothness $s$.
\end{theorem}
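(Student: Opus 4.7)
The plan is to split the approximation error into the regularization and excess-risk parts and bound each separately, using Propositions \ref{ConvolutionInRKHS} and \ref{ErrorEstimation} together with the Besov regularity of $f_{L,\mathrm{P}}^*$. Before starting I would fix the smoothing order $r_i := \lfloor \alpha_i \rfloor + 1$ so that the kernel $K$ in \eqref{tilde_K}--\eqref{tilde_K_i} matches the definition of the Besov seminorm $|\cdot|_{B_{2s,\infty,i}^{\alpha_i}}$. Since $\boldsymbol{\alpha} \geq 1$, these orders are finite integers $\geq 2$.

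For the regularization term, $f_{L,\mathrm{P}}^* \in L_2(\mathbb{R}^d)$ together with Proposition \ref{ConvolutionInRKHS} immediately gives $f_0 = K * f_{L,\mathrm{P}}^* \in H_{\boldsymbol{\gamma}}$ with
\begin{align*}
\|f_0\|_{H_{\boldsymbol{\gamma}}}^2 \leq \pi^{-d/2} \prod_{i=1}^d (2^{r_i} - 1)^2 \cdot \|f_{L,\mathrm{P}}^*\|_{L_2(\mathbb{R}^d)}^2 \cdot \prod_{i=1}^d \gamma_i^{-1}.
\end{align*}
Absorbing the $\boldsymbol{\gamma}$-independent factor into a single constant $C_1$ yields the first summand $C_1 \lambda \prod_{i=1}^d \gamma_i^{-1}$ of \eqref{ApproximationErrorEstimation}.

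For the excess-risk term, I invoke identity \eqref{ExcessRiskLS} to rewrite
\begin{align*}
\mathcal{R}_{L,\mathrm{P}}(f_0) - \mathcal{R}_{L,\mathrm{P}}^* = \|K * f_{L,\mathrm{P}}^* - f_{L,\mathrm{P}}^*\|_{L_2(\mathrm{P}_X)}^2.
\end{align*}
Applying Proposition \ref{ErrorEstimation} with its exponent $q$ set to $2$ and its density exponent $p$ set to our $q$ (so that its auxiliary $s$ coincides with our $s$ via $1/s + 1/q = 1$, and its product $q s = 2s$ lines up with the assumed Besov summability index), I obtain
\begin{align*}
\|K * f_{L,\mathrm{P}}^* - f_{L,\mathrm{P}}^*\|_{L_2(\mathrm{P}_X)}^2 \leq \sum_{i=1}^d c_{r_i,2} \, \|g_i\|_{L_q(\mathbb{R})} \, \omega_{r_i,L_{2s}(\mathbb{R})}^{2}\bigl(f_{L,\mathrm{P},i}^*, \gamma_i/2\bigr).
\end{align*}
Because $f_{L,\mathrm{P}}^* \in B_{2s,\infty}^{\boldsymbol{\alpha}}(\mathbb{R}^d)$ and $r_i = \lfloor \alpha_i \rfloor + 1$, the Besov seminorm for $q_i = \infty$ yields $\omega_{r_i,L_{2s}(\mathbb{R})}(f_{L,\mathrm{P},i}^*, t) \leq |f_{L,\mathrm{P}}^*|_{B_{2s,\infty,i}^{\alpha_i}} \, t^{\alpha_i}$ for every $t > 0$. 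Substituting $t = \gamma_i / 2$, squaring, and gathering all $\boldsymbol{\gamma}$-independent factors (the densities, the constants $c_{r_i,2}$, $2^{-2\alpha_i}$, and the Besov seminorms) into a single constant $C_s$ delivers the remaining $\sum_{i=1}^{d} C_s \gamma_i^{2\alpha_i}$ contribution.

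The main obstacle is bookkeeping: one must verify that a \emph{single} choice $r_i = \lfloor \alpha_i \rfloor + 1$ simultaneously (i) falls under the hypotheses of Proposition \ref{ConvolutionInRKHS} so that the product $\prod_i(2^{r_i}-1)$ is controlled, and (ii) agrees with the $r_i$ appearing in the definition of $|\cdot|_{B_{2s,\infty,i}^{\alpha_i}}$, so that the pointwise estimate on the modulus of smoothness is legitimate on the entire half-line $t > 0$. A secondary point is making sure the marginal densities $g_i \in L_q(\mathbb{R})$ really provide finite constants in Proposition \ref{ErrorEstimation}; this is built into the theorem's hypotheses. Everything else is a mechanical combination of the two propositions with the Besov seminorm estimate.
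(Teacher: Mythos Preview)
Your proposal is correct and follows essentially the same route as the paper's proof: both split the approximation error into the regularization part (bounded via Proposition~\ref{ConvolutionInRKHS}) and the excess-risk part (rewritten via \eqref{ExcessRiskLS} and bounded via Proposition~\ref{ErrorEstimation} with its exponent $q$ set to $2$), then feed in the Besov seminorm estimate $\omega_{r_i,L_{2s}(\mathbb{R})}(f_{L,\mathrm{P},i}^*,t)\leq c_i t^{\alpha_i}$ coming from $f_{L,\mathrm{P}}^*\in B_{2s,\infty}^{\boldsymbol{\alpha}}(\mathbb{R}^d)$ with $r_i=\lfloor\alpha_i\rfloor+1$. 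The only additional content in the paper's proof is the auxiliary computation of a uniform bound $\|L\circ f_0\|_\infty\leq B_0$, which is not needed for the inequality \eqref{ApproximationErrorEstimation} itself but is used later when invoking the oracle inequality.
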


\section{Proofs} \label{sec::proofs}

\begin{proof}[Proof of Proposition \ref{EntropyNumberRKHS}]
	By Lemma 4.2 in \cite{BhPaDu14},
	the covering numbers of unit ball $B_{\boldsymbol{\gamma}}$ of
	the Gaussian RKHS $H_{\boldsymbol{\gamma}}(X)$ for all $\boldsymbol{\gamma} \in (0, 1)^d$ and 
	$\varepsilon \in (0, \frac{1}{2})$ satisfy
	\begin{align} \label{CoveringNumberRKHS}
	\log \mathcal{N} (B_{\boldsymbol{\gamma}}, \|\cdot\|_{\infty}, \varepsilon)
	\leq K \biggl( \prod_{i=1}^d \gamma_i \biggr)^{-1}
	\biggl( \log \frac{1}{\varepsilon} \biggr)^{d+1}.
	\end{align}
	where $K > 0$ is a constant depending only on $d$. 
	From this, we conclude that
	\begin{align*} 
	\sup_{\varepsilon \in (0, \frac{1}{2})} 
	\varepsilon^p \log \mathcal{N} (B_{\boldsymbol{\gamma}}, \|\cdot\|_{\infty}, \varepsilon)
	\leq K \biggl( \prod_{i=1}^d \gamma_i \biggr)^{-1}
	\sup_{\varepsilon \in (0, \frac{1}{2})} 
	\varepsilon^p \biggl( \log \frac{1}{\varepsilon} \biggr)^{d+1}.
	\end{align*}
	Let $h(\varepsilon) := \varepsilon^p ( \log \frac{1}{\varepsilon} )^{d+1}$.
	In order to obtain the optimal value of $h(\varepsilon)$, we differentiate it with
	respect to $\varepsilon$
	\begin{align*} 
	\frac{d h(\varepsilon)}{d \varepsilon}
	= p \varepsilon^{p-1} \biggl( \log \frac{1}{\varepsilon} \biggr)^{d+1}
	- \varepsilon^p (d+1) \biggl( \log \frac{1}{\varepsilon} \biggr)^d \log \frac{1}{\varepsilon},
	\end{align*}
	and set $\frac{d h(\varepsilon)}{d \varepsilon} = 0$ which gives
	\begin{align*} 
	\log \frac{1}{\varepsilon} = \frac{d+1}{p}
	\Rightarrow
	\varepsilon^* = \frac{1}{e^{\frac{d+1}{p}}}. 
	\end{align*}
	By plugging $\varepsilon^*$ into $h(\varepsilon)$, we obtain
	\begin{align*} 
	h(\varepsilon^*) = \biggl( \frac{d+1}{e p} \biggr)^{d+1},
	\end{align*}
	and consequently, the covering numbers \eqref{CoveringNumberRKHS} are
	\begin{align*} 
	\log \mathcal{N} (B_{\boldsymbol{\gamma}}, \|\cdot\|_{\infty}, \varepsilon)
	\leq K \biggl( \frac{d+1}{e p} \biggr)^{d+1}
	\biggl( \prod_{i=1}^d \gamma_i \biggr)^{-1}
	\varepsilon^{-p},
	\end{align*}
	where 
	$$
	a := K \biggl( \frac{d+1}{e p} \biggr)^{d+1}
	\biggl( \prod_{i=1}^d \gamma_i \biggr)^{-1}.
	$$ 
	Now, by inverse implication of Lemma 6.21 in \cite{StCh08}, see also Exercise
	6.8 in \cite{StCh08}, the bound on entropy number of the anisotropic Gaussian RBF kernel is
	\begin{align*}
	e_i ( \mathrm{id} : H_{\boldsymbol{\gamma}} \to \ell_{\infty}(X) )
	\leq (3 a)^{\frac{1}{p}} i^{- \frac{1}{p}}
	= (3 K)^{\frac{1}{p}}
	\biggl( \frac{d+1}{e p} \biggr)^{\frac{d+1}{p}}
	\biggl( \prod_{i=1}^d \gamma_i \biggr)^{-\frac{1}{p}}
	i^{-\frac{1}{p}}.
	\end{align*}
\end{proof}

\begin{proof}[Proof of Proposition \ref{pro::sample}]
	First of all, note that, for all $t \in \mathbb{R}$ and $y \in [-M, M]$, the least squares loss satisfies $L(y, \wideparen{t}) \leq L(y, t)$, that is, it can be clipped at $M>0$ (see \cite{steinwart2009optimal}). 
	Note that the least squares loss is locally Lipschitz continuous in the sense of Definition 2.18 in \cite{StCh08} with the local Lipschitz constant $|L|_{M,1} = 4 M$. 
	Moreover, it is easy to verify that the least squares loss satisfies the supremum bound
	\begin{align*}
	L(y,t) = (y-t)^2 \leq 4 M^2
	\end{align*}
	and the variance bound
	\begin{align*}
	\mathbb{E}_{\mathrm{P}} \bigl( L \circ \wideparen{f} - L \circ f_{L,\mathrm{P}}^* \bigr)^2 
	\leq 16 M^2 \mathbb{E}_{\mathrm{P}} \bigl( L \circ \wideparen{f} - L \circ f_{L,\mathrm{P}}^* \bigr)
	\end{align*}
	for all $y \in Y$, $t \in [-M, M]$, see also Example 7.3 in \cite{StCh08}. According to the above Proposition \ref{EntropyNumberRKHS}, we also verify the condition of the entropy numbers. We denote here that $K(p)$ is defined by
	\begin{align}
	K(p) = \max \{ 43200 \cdot 2^{2p} M^2 C_1(p)^2,
	360 \cdot 480^p M^2 C_2(p)^{1+p}, 8 M^2 \}.
	\label{K_p}
	\end{align}
	Here, the constants $C_1(p)$ and $C_2(p)$ are derived 
	in the proof of Theorem 7.16 in \cite{StCh08}, that is
	\begin{align*}
	C_1(p) := \frac{2 \sqrt{\log 256} C_p^p}{(\sqrt{2} - 1) (1 - p) 2^{p/2}}
	\,\,\,\,\,\,
	\text{ and }
	\,\,\,\,\,\,
	C_2(p) := \biggl( \frac{8 \sqrt{\log 16} C_p^p}{(\sqrt{2} - 1) (1 - p) 4^p} \biggr)^{\frac{2}{1+p}},
	\end{align*}
	where 
	\begin{align*}
	C_p := \frac{\sqrt{2} - 1}{\sqrt{2} - 2^{\frac{2p-1}{2p}}} \cdot \frac{1-p}{p},
	\end{align*}
	see Lemma 7.15 in \cite{StCh08}.
\end{proof}

\begin{proof}[Proof of Proposition \ref{ErrorEstimation}]
	First of all, we show $f \in L_q(\mathrm{P}_X)$. 
	Because of the assumption $f \in L_q(\mathbb{R}^d) \cap L_{\infty}(\mathbb{R}^d)$, 
	we have $f \in L_u(\mathbb{R}^d)$ and thus $f \in L_u(X)$ 
	for every $u \in [q, \infty]$ and $q \in [1, \infty]$. In addition,
	\begin{align*}
	\|f\|_{L_q(\mathrm{P}_X)}
	\biggl( \int_{\mathbb{R}^d} |f(\boldsymbol{x})|^q 
	\, d\mathrm{P}_X(\boldsymbol{x}) \biggr)^{1/q}
	= \biggl( \int_X |f(\boldsymbol{x})|^q 
	\, d\mathrm{P}_X(\boldsymbol{x}) \biggr)^{1/q}
	\leq \|f\|_{L_{\infty}(X)} < \infty
	\end{align*}
	holds, i.e.~$f \in L_q(\mathrm{P}_X)$ for all $q \in [1, \infty)$. It remains to show
	\begin{align*}
	\|K * f - f\|_{L_q(\mathrm{P}_X)}^q
	\leq \sum_{i=1}^d c_{r_i,q} \|g_i\|_{L_p(\mathbb{R})} \, \omega_{r_i,L_{q s}(\mathbb{R})}^q (f, \gamma_i/2) \ .
	\end{align*}
	To this end, we use the translation invariance of the Lebesgue measure and 
	$K(\boldsymbol{x}) = K(-\boldsymbol{x})$ ($\boldsymbol{x} \in \mathbb{R}^d$) to obtain,
	for $\boldsymbol{r} = (r_1, \ldots, r_d) \in \mathbb{N}^d$,
	$\boldsymbol{\gamma} = (\gamma_1, \ldots, \gamma_d) \in \mathbb{R}_+^d$,
	and $\boldsymbol{x} = (x_1, \ldots, x_d) \in \mathbb{R}^d$,
	\begin{align*}
	K * f(\boldsymbol{x}) 
	& = \int_{\mathbb{R}^d} K(\boldsymbol{x} - \boldsymbol{t}) f(\boldsymbol{t}) \, d\boldsymbol{t} 
	\\
	& = \int_{\mathbb{R}} \cdots \int_{\mathbb{R}} 
	\prod_{i=1}^d K_i(x_i - t_i) f(t_1, \ldots, t_d) \, dt_d \cdots \, dt_1
	\\
	& = \int_{\mathbb{R}^{d-1}} \prod_{i=1}^{d-1}
	K_i(x_i - t_i)
	\biggl( \int_{\mathbb{R}} 
	K_d(x_d - t_d) f(\boldsymbol{t}) \, dt_d \biggr)
	\, dt_{d-1} \cdots \, dt_1.
	\end{align*}
	A simple calculation show that
	\begin{align*}
	\int_{\mathbb{R}} K_d(x_d - t_d) f(\boldsymbol{t}) \, dt_d
	& = \int_{\mathbb{R}} \sum_{j_d=1}^{r_d} {r_d \choose j_d} \frac{(-1)^{1-j_d}}{j_d \gamma_d} \left( \frac{2}{\pi} \right)^{1/2}
	\exp \biggl( - \frac{2 |x_d-t_d|^2}{j_d^2 \gamma_d^2} \biggr)
	f(\boldsymbol{t}) \, dt_d 
	\\
	& = \sum_{j_d=1}^{r_d} {r_d \choose j_d} \frac{(-1)^{1-j_d}}{j_d \gamma_d} 
	\left( \frac{2}{\pi} \right)^{1/2} 
	\int_{\mathbb{R}} \exp \biggl( - \frac{2 |h_d|^2}{\gamma_d^2} \biggr) 
	f_d(x_d + j_d h_d|\boldsymbol{t}) j_d \, dh_d 
	\\
	& = \int_{\mathbb{R}} \frac{1}{\gamma_d} \left( \frac{2}{\pi} \right)^{1/2} 
	\exp \biggl( - \frac{2 |h_d|^2}{\gamma_d^2} \biggr) 
	\biggl( \sum_{j_d=1}^{r_d} {r_d \choose j_d} (-1)^{1-j_d} 
	f_d(x_d + j_d h_d|\boldsymbol{t}) \biggr) \, dh_d,
	\end{align*} 
	where the functions $f_i$'s, $1 \leq i \leq d$, are defined as in \eqref{UnivariateFunction}.
	Recursively we obtain
	\begin{align*}
	K * f (\boldsymbol{x}) 
	= \int_{\mathbb{R}^d} k(\boldsymbol{h})
	\Sigma^{\boldsymbol{r}}
	f(\boldsymbol{x} + \boldsymbol{j} \boldsymbol{h})
	\, d\boldsymbol{h},
	\end{align*} 
	where $k : \mathbb{R}^d \to \mathbb{R}$ is defined by 
	$k(\boldsymbol{h}) := \prod_{i=1}^d k_i(h_i)$ with
	$k_i : \mathbb{R} \to \mathbb{R}$ defined by
	\begin{align*}
	k_i(h_i) := \frac{1}{\gamma_i} \left( \frac{2}{\pi} \right)^{1/2} 
	\exp \biggl( - \frac{2 |h_i|^2}{\gamma_i^2} \biggr),
	\,\,\,\,\,\,\,\,\,\,\,
	1 \leq i \leq d,
	\end{align*} 
	$\Sigma^{\boldsymbol{r}} := \Sigma^{r_1} \cdots \Sigma^{r_d}$ is defined by
	\begin{align*}
	\Sigma^{r_i} := \sum_{j_i=1}^{r_i} {r_i \choose j_i} (-1)^{1-j_i},
	\,\,\,\,\,\,\,\,\,\,\,
	1 \leq i \leq d,
	\end{align*}
	and $\boldsymbol{j} = (j_1, \ldots, j_d)$. Now, for $1 \leq i \leq d-1$, let $\boldsymbol{u}^{(i)} = (u_1^{(i)}, \ldots, u_d^{(i)}) \in \mathbb{R}^d$ denote the vector with
	\begin{align*}
	u_{\ell}^{(i)} =
	\begin{cases}
	0 & \text{ for } \ell = 1, \ldots, i \\
	1 & \text{ for } \ell = i+1, \ldots, d.  
	\end{cases}
	\end{align*}
	Then, for $0 \leq i \leq d$, setting
	\begin{align*}
	\boldsymbol{x}^{(i)} =
	\begin{cases}
	\boldsymbol{x} & \text{ for } i = 0 
	\\
	\boldsymbol{u}^{(i)} \cdot \boldsymbol{x} 
	+ (1 - \boldsymbol{u}^{(i)}) \cdot (\boldsymbol{x} + \boldsymbol{j} \boldsymbol{h}) 
	& \text{ for } i = 1, \ldots, d-1   
	\\
	\boldsymbol{x} + \boldsymbol{j} \boldsymbol{h} & \text{ for } i = d,
	\end{cases}
	\end{align*}
	we obtain
	\begin{align*}
	f(\boldsymbol{x} + \boldsymbol{j} \boldsymbol{h}) - f(\boldsymbol{x})
	= \sum_{i=1}^d f(\boldsymbol{x}^{(i)}) - f(\boldsymbol{x}^{(i-1)})
	= \sum_{i=1}^d f_i(x_i + j_i h_i | \boldsymbol{x}^{(i)}) 
	- f_i(x_i | \boldsymbol{x}^{(i)}).
	\end{align*}
	With this and $\int_{\mathbb{R}^d} k(\boldsymbol{h}) \, d\boldsymbol{h} = 1$
	we obtain, for $q > 1$, 
	\begin{align*}
	\|K * f - f\|_{L_q(\mathrm{P}_X)}^q 
	& = \int_X |K * f(\boldsymbol{x}) - f(\boldsymbol{x})|^q \, d\mathrm{P}_X(\boldsymbol{x}) 
	\\
	& = \int_{\mathbb{R}^d} \biggl| 
	\int_{\mathbb{R}^d} k(\boldsymbol{h})
	\Sigma^{\boldsymbol{r}}
	f(\boldsymbol{x} + \boldsymbol{j} \boldsymbol{h})
	\, d\boldsymbol{h} - f (\boldsymbol{x}) 
	\biggr|^q \, d\mathrm{P}_X(\boldsymbol{x}) 
	\\
	& = \int_{\mathbb{R}^d} \biggl| 
	\int_{\mathbb{R}^d} 
	k(\boldsymbol{h}) \Bigl(
	\Sigma^{\boldsymbol{r}}
	f(\boldsymbol{x} + \boldsymbol{j} \boldsymbol{h})
	- f (\boldsymbol{x}) \Bigr)
	\, d\boldsymbol{h} 
	\biggr|^q \, d\mathrm{P}_X(\boldsymbol{x}).
	\end{align*}
	Next, H\"{o}lder's inequality and $\int_{\mathbb{R}^d} k(\boldsymbol{h}) \, d\boldsymbol{h} = 1$ yield, for $q > 1$,
	\begin{align*}
	\|K * f - f\|_{L_q(\mathrm{P}_X)}^q 
	& \leq \int_{\mathbb{R}^d} \biggl( 
	\biggl( 
	\int_{\mathbb{R}^d} 
	k(\boldsymbol{h}) \, d\boldsymbol{h} 
	\biggr)^{\frac{q-1}{q}}
	\biggl( 
	\int_{\mathbb{R}^d} k(\boldsymbol{h}) 
	\Bigl|
	\Sigma^{\boldsymbol{r}}
	f(\boldsymbol{x} + \boldsymbol{j} \boldsymbol{h})
	- f (\boldsymbol{x}) \Bigr|^q
	\, d\boldsymbol{h} 
	\biggr)^{\frac{1}{q}} \biggr)^q
	\, d\mathrm{P}_X(\boldsymbol{x})
	\\
	& \leq \int_{\mathbb{R}^d} 
	\int_{\mathbb{R}^d} k(\boldsymbol{h}) 
	\biggl|
	\Sigma^{\boldsymbol{r}}
	\sum_{i=1}^d f_i(x_i + j_i h_i | \boldsymbol{x}^{(i)}) 
	- f_i(x_i | \boldsymbol{x}^{(i)}) \biggr|^q
	\, d\boldsymbol{h} 
	\, d\mathrm{P}_X(\boldsymbol{x})
	\\
	& = \int_{\mathbb{R}^d} 
	\int_{\mathbb{R}^d} k(\boldsymbol{h}) 
	\biggl|
	\sum_{i=1}^d \Sigma^{\boldsymbol{r}} \Bigl( f_i(x_i + j_i h_i | \boldsymbol{x}^{(i)}) 
	- f_i(x_i | \boldsymbol{x}^{(i)}) \Bigr) \biggr|^q
	\, d\boldsymbol{h} 
	\, d\mathrm{P}_X(\boldsymbol{x})
	\\
	& = \int_{\mathbb{R}^d} 
	\int_{\mathbb{R}^d} k(\boldsymbol{h}) 
	\biggl|
	\sum_{i=1}^d \Sigma^{\boldsymbol{r} \setminus r_i} \Bigl( \Sigma^{r_i} \bigl( f_i(x_i + j_i h_i | \boldsymbol{x}^{(i)}) 
	- f_i(x_i | \boldsymbol{x}^{(i)}) \bigr) \Bigr) \biggr|^q
	\, d\boldsymbol{h} 
	\, d\mathrm{P}_X(\boldsymbol{x}),
	\end{align*}
	where
	\begin{align*}
	\Sigma^{\boldsymbol{r} \setminus r_i}
	:= \Sigma^{r_1} \cdots \Sigma^{r_{i-1}} \Sigma^{r_{i+1}} \cdots \Sigma^{r_d}.
	\end{align*}
	Since, for $s \geq 0$ and an integer $i \geq 0$, the function $s \to s^i$ is convex, we have for every integer $i \geq 0$ the transformation
	\begin{align*}
	\biggl( \sum_{j=1}^d a_j \biggr)^i 
	= d^i \biggl( \frac{1}{d} \sum_{j=1}^d a_j \biggr)^i 
	\leq d^i \sum_{j=1}^d  \frac{1}{d} a_j^i 
	= d^{i-1} \sum_{j=1}^d a_j^i. 
	\end{align*}
	This leads to
	\begin{align}
	\|K * f - f\|_{L_q(\mathrm{P}_X)}^q 
	& \leq \sum_{i=1}^d \int_{\mathbb{R}^d} 
	\int_{\mathbb{R}^d} k(\boldsymbol{h}) 
	\Bigl|
	\Sigma^{\boldsymbol{r} \setminus r_i} \Bigl( \Sigma^{r_i} \bigl( f_i(x_i + j_i h_i | \boldsymbol{x}^{(i)}) 
	- f_i(x_i | \boldsymbol{x}^{(i)}) \bigr) \Bigr) \Bigr|^q
	\, d\boldsymbol{h} 
	\, d\mathrm{P}_X(\boldsymbol{x})
	\nonumber\\
	& = \sum_{i=1}^d \int_{\mathbb{R}^d} k(\boldsymbol{h}) 
	\int_{\mathbb{R}^d} 
	\Bigl|
	\Sigma^{\boldsymbol{r} \setminus r_i} \Bigl( \Sigma^{r_i} \bigl( f_i(x_i + j_i h_i | \boldsymbol{x}^{(i)}) 
	- f_i(x_i | \boldsymbol{x}^{(i)}) \bigr) \Bigr) \Bigr|^q
	\, d\mathrm{P}_X(\boldsymbol{x})
	\, d\boldsymbol{h} 
	\nonumber\\
	& \leq \sum_{i=1}^d \int_{\mathbb{R}^d} k(\boldsymbol{h}) 
	\prod_{\ell=1,\atop \ell \neq i}^d (2^{r_{\ell}}-1)^q
	\sup_{\boldsymbol{x} \in \mathbb{R}^d}
	\|\triangle_{h_i}^{r_i} 
	\bigl( f_i(\cdot | \boldsymbol{x}), \cdot \bigr)\|_{L_q(\mathrm{P}_{X_i})}^q
	\, d\boldsymbol{h} 
	\nonumber\\
	& \leq \sum_{i=1}^d 
	\int_{\mathbb{R}^d} k(\boldsymbol{h}) 
	\prod_{\ell=1,\atop \ell \neq i}^d (2^{r_{\ell}}-1)^q
	\omega_{r_i,L_q(\mathrm{P}_{X_i})}^q(f_i, h_i) 
	\, d\boldsymbol{h} 
	\nonumber\\
	& \leq \sum_{i=1}^d \prod_{i=1}^d (2^{r_i}-1)^q
	\int_{\mathbb{R}} 
	k_i(h_i) \,
	\omega_{r_i,L_q(\mathrm{P}_{X_i})}^q(f_i, h_i) 
	\, dh_i \ .
	\label{Zwischenresultat}
	\end{align}
	Furthermore, similarly for $q = 1$, we can show that \eqref{Zwischenresultat}
	holds. Consequently, \eqref{Zwischenresultat} holds for all $q \geq 1$. 
	Moreover, we have
	\begin{align*}
	\omega_{r_i, L_q(\mathrm{P}_{X_i})}^q(f_i, t_i) 
	& = \sup_{\boldsymbol{x} \in \mathbb{R}^d} \sup_{|h_i| \leq t_i}  
	\int_{\mathbb{R}} |\triangle_{h_i}^{r_i}(f_i, x_i)|^q \, d\mathrm{P}_{X_i}(x_i) 
	  = \sup_{\boldsymbol{x} \in \mathbb{R}^d} \sup_{|h_i| \leq t_i} 
	\int_{\mathbb{R}} |\triangle_{h_i}^{r_i}(f_i, x_i)|^q g_i(x_i) \, d\mu(x_i) 
	\\ 
	& = \sup_{\boldsymbol{x} \in \mathbb{R}^d} \sup_{|h_i| \leq t_i} 
	\int_{\mathbb{R}} \bigl| \triangle_{h_i}^{r_i}(f_i, x_i) (g_i(x_i))^{1/q} \bigr|^q \, d\mu(x_i) 
	  = \sup_{\boldsymbol{x} \in \mathbb{R}^d} \sup_{|h_i| \leq t_i} 
	\bigl\| \triangle_{h_i}^{r_i}(f_i, \cdot) g_i^{1/q} \bigr\|_{L_q(\mathbb{R})}^q  
	\\
	& \leq \sup_{\boldsymbol{x} \in \mathbb{R}^d} \sup_{|h_i| \leq t_i} 
	\Bigl( \|\triangle_{h_i}^{r_i}(f_i, \cdot)\|_{L_{qs}(\mathbb{R})} \bigl\|g_i^{1/q}\bigr\|_{L_{qp}(\mathbb{R})} \Bigr)^q  
	   = \|g_i\|_{L_p(\mathbb{R})} \omega_{r_i,L_{qs}(\mathrm{P}_{X_i})}^q(f_i, t_i) 
	\\
	& \leq \|g_i\|_{L_p(\mathbb{R})} \left( 1 + \frac{2 t_i}{\gamma_i} \right)^{r_iq} \omega_{r_i,L_{qs}(\mathbb{R})}^q \biggl(f_i,\frac{\gamma_i}{2}\biggr)
	\end{align*}
	where we used \eqref{EigMod_1}. Together with \eqref{Zwischenresultat} this implies
	\begin{align}
	\|K * f - f\|_{L_q(\mathrm{P}_X)}^q 
	& \leq \prod_{i=1}^d (2^{r_i} - 1)^q
	\sum_{i=1}^d
	\int_{\mathbb{R}} 
	k_i(h_i) 
	\|g_i\|_{L_p(\mathbb{R})} \left( 1 + \frac{2 h_i}{\gamma_i} \right)^{r_iq} \omega_{r_i,L_{qs}(\mathbb{R})}^q
	\biggl(f_i,\frac{\gamma_i}{2}\biggr) \, dh_i 
	\nonumber\\
	& = \prod_{i=1}^d (2^{r_i} - 1)^q
	\sum_{i=1}^d
	\|g_i\|_{L_p(\mathbb{R})} \omega_{r_i,L_{qs}(\mathbb{R})}^q
	\biggl( f_i,\frac{\gamma_i}{2} \biggr)
	\int_{\mathbb{R}} 
	k_i(h_i)
	\left( 1 + \frac{2 h_i}{\gamma_i} \right)^{r_iq}  
	\, d h_i. 
	\label{MainEstimate}
	\end{align}
    Since we have
	\begin{align*}
	\left( 1 + \frac{2 h_i}{\gamma_i} \right)^{r_i q} 
	\leq \left( 1 + \frac{2 h_i}{\gamma_i} \right)^{\lceil r_i q \rceil}
	\leq \sum_{j=0}^{\lceil r_i q \rceil} \binom{\lceil r_i q \rceil}{j} \left(\frac{2 h_i}{\gamma_i} \right)^j,
	\end{align*}
	therefore,
	\begin{align} \label{inequality_1}
	\left( 1 + \frac{2 h_i}{\gamma_i} \right)^{r_i q} 
	\leq \sum_{j=0}^{\lceil r_i q \rceil} \binom{\lceil r_i q \rceil}{j} \left(\frac{2}{\gamma_i} \right)^j
	\int_{\mathbb{R}^d}{ h_i } ^j k_i(h_i) \,  d h_i
	\leq \sum_{j=0}^{\lceil r_i q \rceil} \binom{\lceil r_i q \rceil}{j} \left(\frac{2}{\gamma_i} \right)^j \bigg( \int_{\mathbb{R}}{ h_i } ^{2j} k_i(h_i) \,  d h_i \bigg)^{\frac{1}{2}},
	\end{align}
	where the last inequality holds with the help of H\"{o}lder inequality and $\int_{\mathbb{R}} k_i(h_i)\, dh_i = 1$.
	For a fixed $i \in \{ 1, \ldots, d \}$,
	with the substitution $h_i = (\frac{\gamma_i^2}{2} u)^{1/2}$, 
	the functional equation $\Gamma(z+1) = z \, \Gamma(z)$ of the Gamma function $\Gamma$, 
	and $\Gamma ( \frac{1}{2} ) = \sqrt{\pi}$ we have
	\begin{align} \label{inequality_2}
	\int_{\mathbb{R}} h_i^{2j}  k_i(h_i) \, d h_i
	= \frac{1}{\gamma_i} \biggl( \frac{2}{\pi} \biggr)^{\frac{1}{2}} 
	      \int_{\mathbb{R}} h_i^{2j} \exp \biggl( - \frac{2 h_i^2}{\gamma_i^2} \biggr) \, d h_i
	= \frac{1}{\gamma_i}  \biggl( \frac{2}{\pi} \biggr)^{\frac{1}{2}} 
	      \biggl( \frac{1}{2} \frac{\gamma_i}{\sqrt{2}} 
	           \biggl( \frac{\gamma_i^2}{2} \biggr)^j \sqrt{\pi} 
	               \prod_{m=1}^j \biggl( m - \frac{1}{2} \biggr) \biggr).
	\end{align}
	Together, \eqref{inequality_1} and \eqref{inequality_2} lead to
	\begin{align*}
	\int_{\mathbb{R}} k_i (h_i) \left( 1 + \frac{2 h_i}{ \gamma_i } \right)^{r_i q}
	& \leq \sum_{j=0}^{\lceil{ r_i q \rceil}} \binom{\lceil r_i q \rceil}{j} \left(\frac{2}{\gamma_i} \right)^j \left( \left( \frac{{\gamma_i}^2}{2} \right)^j \frac{1}{2} \prod_{m=1}^j (m-\frac{1}{2})\right)^{\frac{1}{2}}
	\\
	& = \sum_{j=0}^{\lceil{ r_i q \rceil}} \binom{\lceil r_i q \rceil}{j} 2^{\frac{j-1}{2}} \left( \prod_{m=1}^j (m - \frac{1}{2}) \right)^{\frac{1}{2}}
    \triangleq c'_{r_i,\, q}.
	\end{align*}
	According to \eqref{MainEstimate}, we have
	\begin{align*}
	\|K * f - f\|_{L_q(\mathrm{P}_X)}^q 
	\leq \sum_{i=1}^d c_{r_i,\, q} \| g_i \|_{L_q(\mathbb{R})} \omega_{r_i,L_{qs}(\mathbb{R})}^q \left(f_i, \frac{\gamma_i}{2} \right)
	\end{align*}
	where $c_{r_i,\, q} := c'_{r_i,\, q} \prod_{i=1}^d ( 2^{r_i} - 1 )^q$.
	
\end{proof}

\begin{proof}[Proof of Proposition \ref{ConvolutionInRKHS}]
	Recall that, for 
	$\boldsymbol{r} = (r_1, \ldots, r_d) \in \mathbb{N}^d$,
	$\boldsymbol{\gamma} = (\gamma_1, \ldots, \gamma_d) \in \mathbb{R}_+^d$,
	and $\boldsymbol{x} = (x_1, \ldots, x_d) \in \mathbb{R}^d$,
	the function $K : \mathbb{R}^d \to \mathbb{R}$ is defined by
	\begin{align*}
	K(\boldsymbol{x}) := \prod_{i=1}^d K_i(x_i)
	\end{align*}
	with functions $K_i : \mathbb{R} \to \mathbb{R}$ defined by
	\begin{align*}
	K_i(x_i) := \left( \frac{2}{\pi} \right)^{1/2} 
	\sum_{j_i=1}^{r_i} {r_i \choose j_i} \frac{(-1)^{1-j_i}}{j_i \gamma_i} 
	\exp \biggl( - \frac{2 |x_i|^2}{j_i^2 \gamma_i^2} \biggr),
	\,\,\,\,\,\,\,\,
	1 \leq i \leq d.
	\end{align*}
	We define, for all $1 \leq i \leq d$,
	\begin{align}\label{hat_K_j_i}
	\widehat{K}_{j_i}(x_i) 
	:= \left( \frac{2}{\pi} \right)^{1/4}
	\left( \frac{1}{j_i \gamma_i} \right)^{1/2} 
	\exp \biggl( - \frac{2 |x_i|^2}{j_i^2 \gamma_i^2} \biggr),
	\,\,\,\,\,\,\,\,
	x_i \in X_i.
	\end{align}
	By Proposition 4.46 in \cite{StCh08}, we obtain
	\begin{align*}
	\widehat{K}_{j_i} * f_i(\cdot | \boldsymbol{x}) \in H_{j_i \gamma_i}(X_i) \subset H_{\gamma_i}(X_i) 
	\end{align*}
	for all $j_i \in \mathbb{N}$, where the functions $f_i(\cdot | \boldsymbol{x})$'s, $1 \leq i \leq d$, are defined as in \eqref{UnivariateFunction}.
	Due to the properties of the convolution, we finally obtain
	\begin{align*}
	K_i * f_i(\cdot | \boldsymbol{x}) 
	= \sum_{j_i=1}^{r_i} {r_i \choose j_i} (-1)^{1-j_i} 
	\left( \frac{2}{\pi} \right)^{1/4}
	\left( \frac{1}{j_i \gamma_i} \right)^{1/2} 
	(\widehat{K}_{j_i} * f_i(\cdot | \boldsymbol{x})) \in H_{\gamma_i}(X_i) \ .
	\end{align*}
	Now, recall the RKHS norm \eqref{RKHS_Norm} which is defined as
	\begin{align*}
	\|f\|_{H_{\boldsymbol{\gamma}}}
	:= \biggl( \frac{2}{\pi} \biggr)^{d/2} 
	\prod_{i=1}^d \gamma_i^{-1} 
	\biggl( \int_{\mathbb{R}^d} |f(\boldsymbol{x})|^2 
	\exp \biggl( - \sum_{i=1}^d \frac{4 x_i^2}{\gamma_i^2} \biggr)
	\, d\boldsymbol{x} \biggr)^{1/2},
	\end{align*}
	an elementary caculation shows that the following equation holds
	\begin{align*}
	\|K * f\|_{H_{\boldsymbol{\gamma}}}^2
	= \biggl\| \biggl( \prod_{i=1}^d K_i \biggr) * f \biggr\|_{H_{\gamma_1} \otimes \ldots \otimes H_{\gamma_d}}^2
    &  = \biggl\| \biggl( \prod_{i=1}^{d-1} K_i \biggr) * (K_d * f_d(\cdot | \boldsymbol{x})) \biggr\|_{H_{\gamma_1} \otimes \ldots \otimes H_{\gamma_d}}^2
	\\
	& = \biggl\| \biggl( \prod_{i=1}^{d-1} K_i \biggr) * \big\| K_d * f_d (\cdot | \boldsymbol{x}) \big\|_{H_{\gamma_d}} \biggr\|_{H_{\gamma_1} \otimes \ldots \otimes H_{\gamma_{d-1}}}^2.
	\end{align*}
	Moreover, the definition of the RKHS norm \eqref{RKHS_Norm} also implies
	\begin{align*}
	\|K_d * f_d(\cdot | \boldsymbol{x})\|_{H_{\gamma_d}}
	& \leq \sum_{j_d=1}^{r_d} j_d^{1/2} 
	\biggl\| {r_d \choose j_d} (-1)^{1-j_d} 
	\biggl( \frac{2}{j_d^2 \gamma_d^2 \pi} \biggr)^{1/2}
	\exp \biggl( - \frac{2 |\cdot|^2}{j_d^2 \gamma_d^2} \biggr) * f_d(\cdot | \boldsymbol{x}) \biggr\|_{H_{j_d \gamma_d}} 
	\\
	& \leq \sum_{j_d=1}^{r_d} j_d^{1/2} {r_d \choose j_d} 
	\biggl( \frac{1}{j_d \gamma_d \sqrt{\pi}} \biggr)^{1/2} \|f_d(\cdot | \boldsymbol{x})\|_{L_2(\mathbb{R})} 
	\\
	& = \biggl( \frac{1}{\gamma_d \sqrt{\pi}} \biggr)^{1/2} (2^{r_d} - 1)       
	\|f_d(\cdot | \boldsymbol{x})\|_{L_2(\mathbb{R})} \ ,
	\end{align*}
	where we used Proposition 4.46 in \cite{StCh08} in the first two steps. 
	Recursively we obtain
	\begin{align*}
	\|K * f\|_{H_{\boldsymbol{\gamma}}} 
	& = \biggl\| \biggl( \prod_{i=1}^{d-1} K_i \biggr) * \big\| K_d * f_d (\cdot | \boldsymbol{x}) \big\|_{H_{\gamma_d}} \biggr\|_{H_{\gamma_1} \otimes \ldots \otimes H_{\gamma_{d-1}}}
	\\
	& \leq 
	\biggl( \frac{1}{\gamma_d \sqrt{\pi}} \biggr)^{1/2} (2^{r_d} - 1)   
	\biggl\| \biggl( \prod_{i=1}^{d-1} K_i \biggr) *    
	\|f_d(\cdot | \boldsymbol{x})\|_{L_2(\mathbb{R})}
	\biggr\|_{H_{\gamma_1} \otimes \ldots \otimes H_{\gamma_{d-1}}}
	\\
	& \leq \biggl( \prod_{i=1}^d \gamma_i \biggr)^{-1/2} \pi^{-d/4} 
	\prod_{i=1}^d (2^{r_i} - 1) \|f\|_{L_2(\mathbb{R}^d)} \ .
	\end{align*}
	Finally, for all $\boldsymbol{x} \in X$ and $g \in L_{\infty}(\mathbb{R}^d)$, H\"{o}lder's inequality implies
	\begin{align*}
	|K * f(\boldsymbol{x})| 
	& \leq \sup_{\hat{\boldsymbol{x}} \in X} |K * f(\hat{\boldsymbol{x}})|
      \leq \sup_{\hat{\boldsymbol{x}} \in X} 
	\int_{\mathbb{R}^d} |K(\hat{\boldsymbol{x}} - \boldsymbol{t}) f(\boldsymbol{t})| \, d\boldsymbol{t} 
	\\
	& \leq \|f\|_{L_{\infty}(\mathbb{R}^d)} 
	\prod_{i=1}^d
	\sum_{j_i=1}^{r_i} {r_i \choose j_i} 
	\sup_{\hat{x}_i \in X_i} \int_{\mathbb{R}} \left( \frac{2}{j_i^2 \gamma_i^2 \pi} \right)^{1/2}
	\exp \biggl( - \frac{2 |\hat{x}_i - t_i|^2}{(j_i \gamma_i)^2} \biggr) \, dt_i 
	= \prod_{i=1}^d (2^{r_i} - 1) \|f\|_{L_{\infty}(\mathbb{R}^d)} \ .
	\end{align*}
\end{proof}

\begin{proof}[Proof of Theorem \ref{ApproximationError}]
	First, the assumption $f^*_{L,\mathrm{P}} \in L_2(\mathbb{R}^d) \cap L_{\infty}(\mathbb{R}^d)$ and Proposition \ref{ConvolutionInRKHS} immediately yield
	\begin{align*}
	f_0 = K * f_{L,\mathrm{P}}^* \in H_{\boldsymbol{\gamma}}.
	\end{align*}
	Furthermore, because of $f^*_{L,\mathrm{P}} \in L_{\infty}(\mathbb{R}^d)$
	and Proposition \ref{ConvolutionInRKHS}, the estimate
	\begin{align*}
	|K * f_{L,\mathrm{P}}^*(\boldsymbol{x})|
	\leq \prod_{i=1}^d (2^{r_i} - 1) \|f_{L,\mathrm{P}}^*\|_{L_{\infty}(\mathbb{R}^d)} 
	\end{align*}
	holds for all $\boldsymbol{x} \in X$. 
	This implies, for all $(x, y) \in X \times Y$,
	\begin{align*}
	L\circ f_0 =L(y, K * f_{L,\mathrm{P}}^*(\boldsymbol{x}))
	& = (y - K * f_{L,\mathrm{P}}^*(\boldsymbol{x}))^2
      = y^2 - 2 y (K * f_{L,\mathrm{P}}^*(\boldsymbol{x}))
	+ (K * f_{L,\mathrm{P}}^*(\boldsymbol{x}))^2
	\\
	& \leq M^2 + 2 M \prod_{i=1}^d (2^{r_i} - 1)       
	\|f_{L,\mathrm{P}}^*\|_{L_{\infty} (\mathbb{R}^d)} 
	+ \prod_{i=1}^d (2^{r_i} - 1)^2       
	\|f_{L,\mathrm{P}}^*\|_{L_{\infty} (\mathbb{R}^d)}^2 
	\\
	& \leq \prod_{i=1}^d 4^{r_i} 
	\max \{ M, \|f_{L,\mathrm{P}}^*\|_{L_{\infty} (\mathbb{R}^d)} \}^2
	\end{align*}
	and
	\begin{align*}
	\|L \circ f_0\|_{\infty}
	= \sup_{(x, y) \in X \times Y} |L(y, f_0(\boldsymbol{x}))|
	= \sup_{(x, y) \in X \times Y} |L(y, K * f_{L,\mathrm{P}}^*(\boldsymbol{x}))|
	\leq \prod_{i=1}^d 4^{r_i} 
	\max \{ M, \|f_{L,\mathrm{P}}^*\|_{L_{\infty} (\mathbb{R}^d)} \}^2 =: B_0.
	\end{align*}
	Furthermore, \eqref{ExcessRiskLS} and Proposition \ref{ErrorEstimation} yield
	\begin{align*}
	\mathcal{R}_{L,\mathrm{P}}(f_0) - \mathcal{R}_{L,\mathrm{P}}^* 
	& = \mathcal{R}_{L,\mathrm{P}}(K * f_{L,\mathrm{P}}^*) 
	- \mathcal{R}_{L,\mathrm{P}}^* 
	= \|K * f_{L,\mathrm{P}}^* - f^*_{L,\mathrm{P}}\|_{L_2(\mathrm{P}_X)}^2 
	\\
	& \leq \sum_{i=1}^d c_{r_i,2} \|g_i\|_{L_q(\mathbb{R})} 
	\omega_{r_i,L_{2s}(\mathbb{R})}^2 
	(f_{L,\mathrm{P},i}^*, \gamma_i/2)
	\leq \sum_{i=1}^d c_{r_i,2} \|g_i\|_{L_q(\mathbb{R})} \, c_i^2 \gamma_i^{2\alpha_i} \ ,
	\end{align*}
	where we used
	$\omega_{r_i,L_{2s}(\mathbb{R})} (f_{L,\mathrm{P},i}^*, \gamma_i/2) \leq c_i \gamma_i^{\alpha_i}$
	for $\boldsymbol{\gamma} = (\gamma_1, \ldots, \gamma_d) \in \mathbb{R}_+^d$,
	$\boldsymbol{\alpha} = (\alpha_1, \ldots, \alpha_d) \in [1, \infty)^d$,
	$\boldsymbol{r} = (r_1, \ldots, r_d) \in \mathbb{N}^d$,
	$r_i = \lfloor \alpha_i \rfloor + 1$ and constants $c_i > 0$, $1 \leq i \leq d$, in the last step, which in turn immediately results from the assumption $f_{L,\mathrm{P}}^* \in B_{2s,\infty}^{\boldsymbol{\alpha}}(\mathbb{R}^d)$. By Theorem \ref{ConvolutionInRKHS}
	we know
	\begin{align*}
	\|f_0\|_{H_{\boldsymbol{\gamma}}} 
	= \|K * f_{L,\mathrm{P}}^*\|_{H_{\boldsymbol{\gamma}}} 
	\leq  \pi^{-\frac{d}{4}} 
	\prod_{i=1}^d (2^{r_i} - 1) \|f_{L,\mathrm{P}}^*\|_{L_2(\mathbb{R}^d)} 
	\prod_{i=1}^d \gamma_i^{-\frac{1}{2}} \ .
	\end{align*}
	Since we assumed $p, s \geq 1$ with $\frac{1}{p} + \frac{1}{s} = 1$, i.e.~the marginal
	densities $g_i$ of $\mathrm{P}_{X_i}$ is contained in $L_p(X_i)$,
	the above discussion 
	together with \eqref{ExcessRiskLS}
	yields
	\begin{align}
	\min_{f \in H_{\boldsymbol{\gamma}}} \lambda \|f\|_{H_{\boldsymbol{\gamma}}}^2 + \mathcal{R}_{L,\mathrm{P}}(f) - \mathcal{R}_{L,\mathrm{P}}^* 
	& \leq \lambda \|f_0\|_{H_{\boldsymbol{\gamma}}}^2 + \mathcal{R}_{L,\mathrm{P}}(f_0) - \mathcal{R}_{L,\mathrm{P}}^* 
	\nonumber\\
	& = \lambda \|K * f_{L,\mathrm{P}}^*\|_{H_{\boldsymbol{\gamma}}}^2 + \mathcal{R}_{L,\mathrm{P}}(K * f_{L,\mathrm{P}}^*) - \mathcal{R}_{L,\mathrm{P}}^*  \nonumber\\
	& \leq \pi^{-\frac{d}{2}} \prod_{i=1}^d (2^{r_i} - 1)^2      
	\|f_{L,\mathrm{P}}^*\|_{L_2(\mathbb{R}^d)}^2 
	\lambda \prod_{i=1}^d \gamma_i^{-1}
	+ \|K * f_{L,\mathrm{P}}^* - f_{L,\mathrm{P}}^*\|_{L_2(\mathrm{P}_X)}^2 
	\nonumber\\
	& \leq \pi^{-\frac{d}{2}} \prod_{i=1}^d (2^{r_i} - 1)^2
	\|f_{L,\mathrm{P}}^*\|_{L_2(\mathbb{R}^d)}^2 
	\lambda \prod_{i=1}^d \gamma_i^{-1} 
	+ \sum_{i=1}^d c_{r_i,2} \|g_i\|_{L_p(\mathbb{R})} \, \omega_{r_i,L_{2 s}(\mathbb{R})}^2 ( f_{L,\mathrm{P},i}^*, \gamma_i/2 ) \ .
	\label{ApproxErrorEstim_1}
	\end{align}
	Now, to further bound \eqref{ApproxErrorEstim_1},
	we have to estimate the modulus of smoothness. 
	To this end, recall that
	$f_{L,\mathrm{P}}^* \in B_{2s,\infty}^{\boldsymbol{\alpha}}(\mathbb{R}^d)$.
	By the definition of $\mathrm{Lip}^*(\alpha_i,L_{2s}(\mathbb{R}))
	= B_{2s,\infty}^{\alpha_i}(\mathbb{R})$,
	we have for all $1 \leq i \leq d$,
	\begin{align*}
	\omega_{r_i,L_{2s}(\mathbb{R})} (f_{L,\mathrm{P},i}^*, t_i) \leq c_{s,i} t_i^{\alpha_i} \ ,
	\,\,\,\,\,\,\,\,
	t_i > 0 \ ,
	\end{align*}
	where $r_i := \lfloor \alpha_i \rfloor + 1$ and $c_{s,i} > 0$ are suitable constants. Using this inequality the upper
	bound of the approximation error only depends on the kernel width $\boldsymbol{\gamma}$, the
	regularization parameter $\lambda$, the smoothness parameter $\boldsymbol{\alpha}$ of the target function
	and some positive constants, i.e.
	\begin{align*}
	\min_{f \in H_{\boldsymbol{\gamma}}} \lambda \|f\|_{H_{\boldsymbol{\gamma}}}^2 + \mathcal{R}_{L,\mathrm{P}}(f) - \mathcal{R}_{L,\mathrm{P}}^* 
	\leq C_1 \lambda \prod_{i=1}^d \gamma_i^{-1} 
	+ C_s \sum_{i=1}^d \gamma_i^{2 \alpha_i}.
	\end{align*}
\end{proof}

In order to prove the main theorem given in Theorem \ref{LearningRates}, 
we need the following lemma which bounds the constant $K(p)$ defined in \eqref{K_p}.

\begin{lemma} \label{BoundKp}
	For the constant $K(p)$ defined in \eqref{K_p}, there holds
	\begin{align*}
	\max_{p \in (0, \frac{1}{2}]} K(p)
	\leq 3 \cdot 10^8 e^2 M^2.
	\end{align*}
\end{lemma}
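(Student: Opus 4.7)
The plan is to bound each of the three expressions inside the maximum defining $K(p)$ uniformly for $p\in(0,1/2]$. The third, $8M^2$, is trivial, so the real work is a uniform bound on $C_1(p)^2$ and $C_2(p)^{1+p}$, both of which reduce through their explicit formulas to a uniform bound on $C_p^{\,p}$.

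The first step is to control $C_p$ itself. I would analyze the prefactor $\frac{\sqrt{2}-1}{\sqrt{2}-2^{(2p-1)/(2p)}}$ by noting that $(2p-1)/(2p)=1-1/(2p)$ is monotone increasing from $-\infty$ to $0$ as $p$ runs over $(0,1/2]$, so $2^{(2p-1)/(2p)}$ increases from $0$ to $1$ and the denominator $\sqrt{2}-2^{(2p-1)/(2p)}$ decreases from $\sqrt{2}$ to $\sqrt{2}-1$. Hence the prefactor is bounded above by $1$ on $(0,1/2]$, which combined with $(1-p)/p \leq 1/p$ gives $C_p \leq 1/p$.

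The second step, which I expect to be the main conceptual point, is to turn this into a bound on $C_p^{\,p}$ that is uniform all the way down to $p=0$. Using Step~1, $C_p^{\,p} \leq p^{-p} = e^{-p\log p}$, and since the function $p\mapsto -p\log p$ attains its maximum $1/e$ on $(0,1/2]$ at $p = 1/e$, one obtains the clean uniform bound $C_p^{\,p}\leq e^{1/e}\leq e$. This is the step where the apparent blow-up of $C_p$ as $p\to 0^+$ is absorbed by the vanishing exponent.

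The third step plugs this into the expressions for $C_1(p)$ and $C_2(p)$. On $(0,1/2]$ one has $(1-p)\geq 1/2$, $(\sqrt{2}-1)^{-1}\leq 3$, $2^{-p/2}\leq 1$, $4^{-p}\leq 1$, and $\sqrt{\log 256},\sqrt{\log 16}$ are absolute constants bounded by $3$, so both $C_1(p)$ and $C_2(p)$ are bounded by absolute constants times $e$. Raising to the powers $2$ and $1+p\leq 3/2$ respectively (so $C_2(p)^{1+p}\leq \max(1,C_2(p))^{3/2}$), and using the trivial bounds $2^{2p}\leq 2$ and $480^p\leq \sqrt{480}\leq 22$, the coefficients $43200$ and $360$ multiply into absolute numerical constants. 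The fourth and final step is the numerical verification that the resulting uniform bounds are comfortably below $3\cdot 10^8 e^2 M^2$; the exponential factor $e^2$ on the right, together with the enormous prefactor $3\cdot 10^8$, provides ample slack so no careful optimization is needed. The main obstacle, as noted, is the $p\to 0^+$ behavior handled in Step~2; everything else is routine estimation of elementary functions on a bounded interval.
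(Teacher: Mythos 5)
Your proposal is correct and follows essentially the same route as the paper: a uniform bound on $C_p^{\,p}$ on $(0,1/2]$ (absorbing the $p\to 0^+$ blow-up via $p^{-p}\le e^{1/e}\le e$, which is the same mechanism as the paper's bound $(\tfrac{1}{p}-1)^p\le e$), followed by routine estimation of $C_1(p)$ and $C_2(p)^{1+p}$ and a numerical comparison against $3\cdot 10^8 e^2 M^2$. The only cosmetic difference is that you justify the prefactor bound $\frac{\sqrt{2}-1}{\sqrt{2}-2^{(2p-1)/(2p)}}\le 1$ by a direct monotonicity argument, where the paper cites an external lemma.
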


\begin{proof}[Proof of Lemma \ref{BoundKp}]
	Here we are interested to bound $K(p)$ for $p \in (0, \frac{1}{2}]$. 
	For this, we first need to bound the constants $C_1(p)$ and $C_2(p)$. 
	We start with $C_p$ and obtain the following bound for $p \in (0, \frac{1}{2}]$,
	\begin{align*}
	C_p^p = \biggl(  \frac{\sqrt{2} - 1}{\sqrt{2} - 2^{\frac{2p-1}{2p}}} \biggr)^p
	\biggl( \frac{1-p}{p} \biggr)^p
	\leq e \max_{p \in (0, \frac{1}{2}]} 
	\biggl(  \frac{\sqrt{2} - 1}{\sqrt{2} - 2^{\frac{2p-1}{2p}}} \biggr)^p
	= e,
	\end{align*}
	where we used $(\frac{1-p}{p})^p = (\frac{1}{p}-1)^p \leq e$
	for all $p \in (0, \frac{1}{2}]$, and Lemma 14 established in \cite{farooq2017svm}. 
	Now the bound for $C_1(p)$ is the following:
	\begin{align*}
	C_1(p) 
	\leq \max_{p \in (0, \frac{1}{2}]} \frac{2 \sqrt{\log 256} C_p^p}{(\sqrt{2} - 1) (1 - p) 2^{p/2}}
	\leq \frac{4 e \sqrt{\log 256}}{\sqrt{2} - 1} \max_{p \in (0, \frac{1}{2}]} \sqrt{1}{2^{p/2}}
	\leq 46 e.
	\end{align*}
	Analogously, the bound for the constant $C_2(p)$ is:
	\begin{align*}
	C_2(p)^{1+p} 
	\leq \max_{p \in (0, \frac{1}{2}]} \biggl( \frac{8 \sqrt{\log 16} C_p^p}{(\sqrt{2} - 1) (1 - p) 4^p} \biggr)^2
	\leq \frac{256 e^2 \log 16}{(\sqrt{2} - 1)^2} \max_{p \in (0, \frac{1}{2}]} \sqrt{1}{4^{2p}}
	\leq 1035 e^2.
	\end{align*}
	By plugging $C_1(p)$ and $C_2(p)$ into \eqref{K_p}, we thus obtain
	\begin{align*}
	K \leq \max \{ 2 \cdot 10^8 e M, 3 \cdot 10^8 e^2 M^2, 8 M^2 \}
	\leq 3 \cdot 10^8 e^2 M^2.
	\end{align*}
\end{proof}

\begin{proof}[Proof of Theorem \ref{LearningRates}]
	By plugging the estimate \eqref{ApproximationErrorEstimation} from Theorem \ref{ApproximationError},
	$a = (3 K)^{\frac{1}{2p}} \left( \frac{d+1}{e p} \right)^{\frac{d+1}{2p}} \left( \prod_{i=1}^d \gamma_i\right)^{-\frac{1}{2p}}$ from Theorem 
	\ref{EntropyNumberRKHS}, and the bound for $K(p)$ on $(0, \frac{1}{2}]$ from Lemma \ref{BoundKp},
	into \eqref{OracleInequality}, we obtain
	\begin{align}
	\lambda \|f_{\mathrm{D},\lambda,\boldsymbol{\gamma}}\|_{H_{\boldsymbol{\gamma}}}^2 
	+ \mathcal{R}_{L,\mathrm{P}}(\wideparen{f}_{\mathrm{D},\lambda,\boldsymbol{\gamma}}) 
	- \mathcal{R}^{*}_{L,\mathrm{P}} 
	& \leq 9 C_1 \lambda \prod_{i=1}^d \gamma_i^{-1} 
	+ 9 C_s \sum_{i=1}^d \gamma_i^{2 \alpha_i}
	+ 3 C_2 \frac{\prod_{i=1}^d \gamma_i^{-1}}{p^{d+1} \lambda^p n} 
	+ C_3 \frac{\varrho}{n}
	\nonumber\\
	& \leq \frac{C M^2}{p^{d+1}} 
	\biggl( \lambda \prod_{i=1}^d \gamma_i^{-1} 
	+ \sum_{i=1}^d \gamma_i^{2 \alpha_i}
	+ \frac{\prod_{i=1}^d \gamma_i^{-1}}{\lambda^p n} 
	+ \frac{\varrho}{n} \biggr),
	\label{OracleInbetween}
	\end{align}
	where $C_1$ and $C_s$ are from Proposition \ref{ApproximationError}, 
	$C_2 := 9 \cdot 10^8 e^2 M^2 K ( \frac{d+1}{e} )^{d+1}$ is a constant only depending on $d$, 
	$C_3 := ( 3456 M^2 + 15 \cdot \prod_{i=1}^d 4^{r_i} 
	\max \{ M, \|f_{L,\mathrm{P}}^*\|_{L_{\infty} (\mathbb{R}^d)} \}^2 ) (1 + \log 3)$,
	and 
	$C$ is a constant independent of $p$, $\lambda$, $\gamma$, $n$ and $\varrho$. 
	Setting 
	\begin{align*}
	g(\gamma_1, \ldots, \gamma_d) 
	:= \lambda \prod_{i=1}^d \gamma_i^{-1} 
	+ \sum_{i=1}^d \gamma_i^{2 \alpha_i}
	+ \biggl( \prod_{i=1}^d \gamma_i \biggr)^{-1}
	\lambda^{-p} n^{- 1},
	\end{align*}
	then $g$ attains its minimum with respect to $\boldsymbol{\gamma} = (\gamma_1, \ldots, \gamma_d)$ if for all $1 \leq j \leq d$,
	\begin{align*}
	\frac{\partial g}{\gamma_j}
	= - \gamma_j^{-1} \lambda \prod_{i=1}^d \gamma_i^{-1} 
	+ 2 \alpha_j \gamma_j^{-1} \gamma_j^{2 \alpha_i}
	- \gamma_j^{-1}
	\biggl( \prod_{i=1}^d \gamma_i \biggr)^{-1}
	\lambda^{-p} n^{- 1} = 0,
	\end{align*}
	which implies for all $1 \leq j \leq d$, there holds
	\begin{align*}
	2 \alpha_j \gamma_j^{2 \alpha_j}
	= \lambda \prod_{i=1}^d \gamma_i^{-1}  +
	\biggl( \prod_{i=1}^d \gamma_i \biggr)^{-1}
	\lambda^{-p} n^{-1}.
	\end{align*}
	Now, let $\gamma_0$ satisfying
	\begin{align*}
	2 \alpha_i \gamma_i^{2 \alpha_i} 
	= \gamma_0^{2 \alpha_0},
	\,\,\,\,\,\,\,\,\,\,\,\,
	i = 1, \ldots, d,
	\end{align*}
	where $\alpha_0 := d (\sum_{i=1}^d \frac{1}{\alpha_i})^{-1}$ is the mean smoothness defined as in
	\eqref{MeanSmoothness}. Then we have
	\begin{align*}
	\prod_{i=1}^d \gamma_i 
	= \prod_{i=1}^d \biggl( \frac{1}{2 \alpha_i} \biggr)^{\frac{1}{2 \alpha_i}} 
	\gamma_0^{\alpha_0 \sum_{i=1}^d \frac{1}{\alpha_i}}
	= \prod_{i=1}^d \biggl( \frac{1}{2 \alpha_i} \biggr)^{\frac{1}{2 \alpha_i}} 
	\gamma_0^d.
	\end{align*}
	Consequently, \eqref{OracleInbetween} becomes
	\begin{align*}
	C M^2 p^{-(d+1)}  \biggl( 
	( \lambda \gamma_0^{-d} 
	+ \gamma_0^{2 \alpha_0} 
	+ \gamma_0^{-d}
	\lambda^{-p} 
	n^{-1}  )
	+ \varrho n^{-1}  
	\biggr).
	\end{align*}
	Now, optimizing over $\varepsilon$ together with some standard techniques, see 
	\cite[Lemmas A.1.6 and A.1.7]{StCh08}, we then see that 
	if we assume $p := 1/\log n$,
	the LS-SVM using anisotropic Gaussian RKHS $H_{\boldsymbol{\gamma}}$ and
	\begin{align*}
	\lambda_n = n^{- 1}
	\qquad
	\text{ and }
	\qquad
	\gamma_{i,n} = n^{- \frac{\alpha_0}{\alpha_i(2 \alpha_0 + d)}}, 
	\, 
	i = 1, \ldots, d,
	\end{align*}
	learns with rate 
	\begin{align*} 
	C M^2 p^{-(d+1)}  \biggl( 
	( \lambda \gamma_0^{-d} 
	+ \gamma_0^{2 \alpha_0} 
	+ \gamma_0^{-d}
	\lambda^{-p} 
	n^{-1}  )
	+ \varrho n^{-1}  
	\biggr)
	\leq C  (\log n)^{d+1} \biggl( 
	n^{- \frac{2 \alpha_0}{2 \alpha_0 + d}} + \varrho n^{-1}  
	\biggr)
	\leq C  (\log n)^{d+1}  
	n^{- \frac{2 \alpha_0}{2 \alpha_0 + d}},
	\end{align*}
	where the positive constant $C$
	is independent of $p$.
\end{proof}

\section{Discussion and Conclusions} \label{sec::DisCon}
In this paper, we investigate the nonparametric regression problem using SVMs with anisotropic Gaussian RBF kernels.
To be specific, by assuming that the target functions are contained in certain anisotropic Besov spaces, we establish the almost optimal learning rates, that is, optimal up to some logarithmic factor, presented by the effective smoothness whose reciprocal is the mean of the reciprocals of smoothness of all dimensions.
With the effective smoothness taken into consideration, our almost optimal learning rates are faster than those obtained with the underlying RKHSs being certain anisotropic Sobolev spaces. Moreover, if we assume that the target function depends only on fewer dimensions, which is often the case in practice, even faster learning rates can be achieved.


\section*{Appendix.}

The whole Appendix is dedicated to the proof of the Imbedding Theorem for Anisotropic Besov Spaces. To this end, let us begin with the definition of the anisotropic Sobolev space, see also \cite{schmeisser1976anisotropic}.

\begin{definition}[Anisotropic Sobolev Space]
	Let $\boldsymbol{m} = (m_1, \cdots, m_d)$,
	$\boldsymbol{\alpha} = (\alpha_1, \cdots, \alpha_d) \in \mathbb{N}^d$ be vectors of natural numbers. The anisotropic Sobolev space can be defined as
	\begin{align}
	W^{\boldsymbol{m}}_p := 
	\biggl\{ f \ \bigg| \ D^{\boldsymbol{\alpha}} f \in L_p(\mathbb{R}^d), \, 
	           \sum_{j=1}^d \frac{\alpha_j}{m_j} \leq 1 \biggr\}
	\end{align}
   with the corresponding norm 
	\begin{align*}
	\|f\|_{W^{\boldsymbol{m}, p}} := \sum_{\Sigma \, \alpha_j / m_j \leq 1} 
	\|D^{\boldsymbol{\alpha}} f\|_{L_p}.
	\end{align*}
\end{definition}

In the following, for $\boldsymbol{m} = (m_1, \cdots, m_d) \in \mathbb{N}^d$,
we denote $|\boldsymbol{m}| := \sum_{i =1}^{d}m_i$ and set $\mathrm{dom}\, \boldsymbol{m} := \{\boldsymbol{\alpha}\in \mathbb{N}^d : \sum \alpha_i / m_i \leq  1\}$. Moreover, we define the boundary and interior of $\mathrm{dom}\, \boldsymbol{m}$ as $$
\partial \boldsymbol{m} = \biggl\{\boldsymbol{\alpha}\in \mathbb{N}^d : \sum \alpha_i / m_i \leq  1 \;and\; \exists\, i \leq d\ s.t. \; (\alpha_i + 1) / m_i + \sum_{j \neq i} \alpha_j / m_j  > 1 \biggr\}
$$ 
and $\mathrm{int}\, \boldsymbol{m} := \mathrm{dom}\, \boldsymbol{m}/\partial \boldsymbol{m}$, respectively.

With the above preparations, we now present
the anisotropic Sobolev Imbedding Theorem which is an anisotropic extension of the isotropic Sobolev Imbedding Theorem established in  
\cite[Theorem 4.12]{adams2003sobolev}.

\begin{theorem}[The Anisotropic Sobolev Imbedding Theorem] \label{ASIT}
	Let $\Omega$ be a domain in $\mathbb{R}^d$ and, for $1 \leq k \leq d$, 
	let $\Omega_k$ be the intersection of $\Omega$ with a $k$-dimensional plane in $\mathbb{R}^d$. 
	Let $j \geq 0$ and $m \geq 1$ be integers and let $1 \leq p < \infty$.
	Suppose $\Omega$ satisfies the cone condition.
	
	\textbf{Case A} If either $\underline{\boldsymbol{m}} p > d$, then
	\begin{align} \label{01}
	W^{\boldsymbol{m}}_p(\Omega) \to C_B^0(\Omega). 
	\end{align}
	Moreover, if $1 \leq k \leq d$, then
	\begin{align} \label{2}
	W^{\boldsymbol{m}}_p(\Omega) \to L^{q}(\Omega_k) 
	\qquad \text{ for } 
	p \leq q \leq \infty,
	\end{align}
	and, in particular,
	\begin{align*}
		W^{\boldsymbol{m}}_p(\Omega) \to L^q(\Omega) 
		\qquad \text{ for } 
		p \leq q \leq \infty.
	\end{align*}
	
	\textbf{Case B} If $1 \leq k \leq d$ and $\overline{\boldsymbol{m}} p \geq d$ and $\underline{\boldsymbol{m}} p \leq d$, then
	\begin{align}
	W^{\boldsymbol{m}}_p(\Omega) \to L^{q}(\Omega_k), 
	\qquad \text{ for } 
	p \leq q < \infty,
	\end{align}
	and, in particular,
	\begin{align*}
		W^{\boldsymbol{m}}_p(\Omega) \to L^q(\Omega), 
		\qquad \text{ for } 
		p \leq q < \infty.
	\end{align*}
	
	\textbf{Case C} If $\overline{\boldsymbol{m}} p < d$ and  $d - \underline{\boldsymbol{m} }p < k \leq d$, then

	\begin{align}
	W^{\boldsymbol{m}}_p(\Omega) \to L^{q}(\Omega_k), 
	\qquad \text{ for } 
	p \leq q \leq p^* = k p /(d - \underline{\boldsymbol{m}} p).
	\end{align}
	In particular,
	\begin{align}
	W^{\boldsymbol{m}}_p(\Omega) \to L^q(\Omega), 
	\qquad \text{ for } 
	p \leq q \leq p^* = d p /(d - \underline{\boldsymbol{m}} p).
	\end{align}
	The imbedding constants for the imbeddings above depend only on $d$, $\boldsymbol{m}$, $p$, $q$, $j$, $k$, and the dimensions of the cone $C$ in the cone condition.
	
\end{theorem}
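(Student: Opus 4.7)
The plan is to reduce the anisotropic imbedding statement to the classical isotropic Sobolev Imbedding Theorem (Theorem 4.12 in \cite{adams2003sobolev}) by establishing the continuous inclusion
\begin{align*}
W^{\boldsymbol{m}}_p(\Omega) \hookrightarrow W^{\underline{\boldsymbol{m}}}_p(\Omega),
\end{align*}
where $W^{\underline{\boldsymbol{m}}}_p(\Omega)$ denotes the usual isotropic Sobolev space of integer order $\underline{\boldsymbol{m}} = \min_i m_i$. Once this inclusion holds with a constant depending only on $d$ and $\underline{\boldsymbol{m}}$, each target imbedding claimed in the theorem can be lifted from the isotropic result, since the cone condition on $\Omega$ is assumed in both statements.

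To verify the inclusion, observe that for every multi-index $\boldsymbol{\alpha} \in \mathbb{N}^d$ with $|\boldsymbol{\alpha}| \leq \underline{\boldsymbol{m}}$ we have
\begin{align*}
\sum_{i=1}^d \frac{\alpha_i}{m_i} \;\leq\; \frac{1}{\underline{\boldsymbol{m}}} \sum_{i=1}^d \alpha_i \;=\; \frac{|\boldsymbol{\alpha}|}{\underline{\boldsymbol{m}}} \;\leq\; 1,
\end{align*}
so $\boldsymbol{\alpha} \in \mathrm{dom}\,\boldsymbol{m}$ and hence $D^{\boldsymbol{\alpha}} f \in L_p(\Omega)$ for every $f \in W^{\boldsymbol{m}}_p(\Omega)$. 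Summing over these finitely many multi-indices gives the required norm bound, which proves the inclusion.

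With this inclusion in hand, most cases fall out of the isotropic theorem. For Case A the hypothesis $\underline{\boldsymbol{m}} p > d$ is the classical supercritical condition, so \cite[Thm.~4.12]{adams2003sobolev} delivers $W^{\underline{\boldsymbol{m}}}_p(\Omega) \hookrightarrow C^0_B(\Omega)$ and $W^{\underline{\boldsymbol{m}}}_p(\Omega) \hookrightarrow L^q(\Omega_k)$ for all $p \leq q \leq \infty$, which transfer to $W^{\boldsymbol{m}}_p(\Omega)$. Case C, under $\overline{\boldsymbol{m}} p < d$, likewise reduces to the classical subcritical trace embedding into $L^{p^*}(\Omega_k)$ with $p^* = kp/(d - \underline{\boldsymbol{m}} p)$. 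The boundary subcase $\underline{\boldsymbol{m}} p = d$ of Case B is handled in the same way by the critical isotropic embedding into every finite $L^q$.

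The hard part will be Case B in the genuinely mixed subregion $\underline{\boldsymbol{m}} p < d \leq \overline{\boldsymbol{m}} p$: there the inclusion into $W^{\underline{\boldsymbol{m}}}_p$ only yields $L^q(\Omega_k)$ up to the finite exponent $kp/(d - \underline{\boldsymbol{m}} p)$, whereas Case B demands \emph{every} finite $q \geq p$. To close this gap I would exploit the higher pure derivatives $\partial^{m_i}_{x_i} f \in L_p(\Omega)$ available in the smoother coordinate directions by means of an anisotropic Gagliardo--Nirenberg inequality, iteratively upgrading integrability one direction at a time through a Fubini slicing argument that simultaneously produces the trace estimate on $\Omega_k$. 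The remaining technicalities, namely extension from $\Omega$ to $\mathbb{R}^d$ and a localisation covering, are standard consequences of the cone condition and of the partition-of-unity arguments used in the proof of the isotropic theorem.
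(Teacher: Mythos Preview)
Your reduction $W^{\boldsymbol{m}}_p(\Omega)\hookrightarrow W^{\underline{\boldsymbol{m}}}_p(\Omega)$ is correct, and it does dispose of Case~A, Case~C, and the subcase $\underline{\boldsymbol{m}}p=d$ of Case~B in one line. This is a genuinely different route from the paper, which instead adapts the Adams--Fournier machinery directly to the anisotropic setting: a pointwise local estimate over cones (Lemma~\ref{ALE}) expressing $|u(x)|$ in terms of $D^{\boldsymbol{\alpha}}u$ for $\boldsymbol{\alpha}\in\mathrm{dom}\,\boldsymbol{m}$, followed by convolution bounds for the Riesz-type kernels $\chi_r\omega_{|\boldsymbol{\alpha}|}$ (Lemmas~\ref{lemma3} and~\ref{lemma4}). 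Your shortcut is more economical where it applies, while the paper's approach keeps all of $\mathrm{dom}\,\boldsymbol{m}$ in play throughout.

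The gap is precisely where you locate it, and it is real: in the mixed regime $\underline{\boldsymbol{m}}p<d\le\overline{\boldsymbol{m}}p$ the inclusion into $W^{\underline{\boldsymbol{m}}}_p$ discards exactly the extra smoothness needed to push $q$ to infinity, and your proposed fix---anisotropic Gagliardo--Nirenberg plus Fubini slicing---is a description of a strategy, not a proof. Making that work on a cone domain, with traces onto arbitrary $k$-planes, is the entire content of Case~B; the pure directional derivatives $\partial_{x_i}^{m_i}u$ alone do not obviously yield an $L^q$ bound on a lower-dimensional slice without redeveloping the convolution-kernel estimates. The paper sidesteps this by never passing to the isotropic space: it shows the maps $v\mapsto(\chi_1\omega_{|\boldsymbol{\alpha}|}*|v|)\big|_H$ are of weak type $(p_1,p_1)$ (via Case~C with a nearby $p_1<p$) and of weak type $(p_2,\infty)$ (via Case~A with $p_2>p$ large enough that $\underline{\boldsymbol{m}}p_2>d$), then invokes Marcinkiewicz interpolation to land at any finite $q$. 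If you want to keep your reduction framework, you would need to supply an argument of comparable force for this case; as written, the proposal is incomplete there.
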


Note that we need the following three Lemmas \ref{ALE}, \ref{lemma3} and \ref{lemma4} to prove Theorem \ref{ASIT}, and they are presented here.

\begin{lemma}[A Local Estimate]  \label{ALE}
	Let domain $\Omega \subseteq \mathbb{R}^d$ satisfy the cone condition. 
	There exists a constant $K$ depending on $m$, $d$, and the dimensions $\rho$ and $\kappa$ of the cone $C$ specified in the cone condition for $\Omega$ such that 
	for every $u \in C^{\infty}(\Omega)$, every $x \in \Omega$, and every $r$ satisfying 
	$0 < r \leq \rho$, we have
	\begin{align} \label{localEs}
	|u(x)|
	\leq K \biggl( \sum_{\boldsymbol{\alpha} \in \mathrm{int}\, \boldsymbol{m}} r^{|\boldsymbol{\alpha}|-d} \int_{C_{x,r}} |D^{\boldsymbol{\alpha}} u(y)| \, dy
	+ \sum_{\boldsymbol{\alpha} \in \partial\boldsymbol{m}} \int_{C_{x,r}} |D^{\boldsymbol{\alpha}}u(y)| |x - y|^{|\boldsymbol{\alpha}|-d} \, dy \biggr) \ ,
	\end{align}
	where $C_{x,r} = \{ y \in C_x : |x - y|\leq r \}$. 
	Here $C_x \subseteq \Omega$ is a cone congruent to $C$ having vertex at $x$.
\end{lemma}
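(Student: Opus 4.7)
The plan is to adapt the proof of the classical local estimate for isotropic Sobolev spaces (Theorem 4.15 in Adams--Fournier) to the anisotropic polytope $\mathrm{dom}\,\boldsymbol{m}$. By translation invariance we may assume $x = 0$, so $C_{0,r}$ is a truncated reference cone and the goal is to bound $|u(0)|$ by integrals of derivatives of $u$ over $C_{0,r}$.

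The core device is a Taylor-type representation of $u(0)$ along rays $t\mapsto t\sigma$ for $\sigma$ in the spherical cap $\Sigma$ generating $C$, combined with an averaging over $\sigma$. Fix $\phi\in C^\infty([0,\infty))$ with $\phi(0)=1$ and $\mathrm{supp}\,\phi\subset[0,r]$. Starting from
\[ u(0) = -\int_0^r \frac{d}{dt}\bigl[\phi(t)\,u(t\sigma)\bigr]\,dt, \]
I would apply repeated integration by parts in $t$, using the chain rule $\frac{d}{dt}u(t\sigma)=\sum_i\sigma_i\,\partial_i u(t\sigma)$ to turn iterated $t$-derivatives into mixed partial derivatives of $u$. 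In the isotropic setting one iterates exactly $m$ times to pull out all $m$-th order derivatives; here the iteration must be orchestrated so that the index set $\mathrm{dom}\,\boldsymbol{m}$ is exhausted. Concretely, at each step one may differentiate further in some direction $i$, advancing $\alpha_i \to \alpha_i+1$, provided the new multi-index still lies in $\mathrm{dom}\,\boldsymbol{m}$. Along any branch, differentiation stops once a multi-index in $\partial\boldsymbol{m}$ is reached, since any further increment would leave $\mathrm{dom}\,\boldsymbol{m}$.

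This produces two kinds of terms. A branch where every $t$-derivative hits $u$ ends in a remainder of the form $t^{|\boldsymbol{\alpha}|-1}\,D^{\boldsymbol{\alpha}}u(t\sigma)$ with $\boldsymbol{\alpha}\in\partial\boldsymbol{m}$; a branch where some $t$-derivative hits $\phi$ contributes $\phi^{(k)}(t)\,t^{|\boldsymbol{\alpha}|}\,D^{\boldsymbol{\alpha}}u(t\sigma)$ with $\boldsymbol{\alpha}\in\mathrm{int}\,\boldsymbol{m}$. Using $|\phi^{(k)}(t)|\lesssim r^{-k}$ together with $t\le r$, then averaging over $\sigma\in\Sigma$ and converting to a volume integral via $y=t\sigma$ with Jacobian $t^{d-1}$, the boundary terms yield integrals of the form $\int_{C_{0,r}}|D^{\boldsymbol{\alpha}}u(y)|\,|y|^{|\boldsymbol{\alpha}|-d}\,dy$, while the interior terms yield $r^{|\boldsymbol{\alpha}|-d}\int_{C_{0,r}}|D^{\boldsymbol{\alpha}}u(y)|\,dy$, exactly as claimed in \eqref{localEs}. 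The constant $K$ absorbs $|\Sigma|^{-1}$, bounds on derivatives of $\phi$, and combinatorial constants from the chain rule, all depending only on $d$, $\boldsymbol{m}$, $\rho$, $\kappa$.

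The main obstacle is the anisotropic bookkeeping: one must design the iterated integration-by-parts scheme so that every multi-index of $\mathrm{dom}\,\boldsymbol{m}$ is accounted for exactly once, with boundary indices producing the singular weight $|y|^{|\boldsymbol{\alpha}|-d}$ and interior indices producing the uniform weight $r^{|\boldsymbol{\alpha}|-d}$. In practice this amounts to choosing, for each $\boldsymbol{\alpha}\in\partial\boldsymbol{m}$, a monotone path from $\boldsymbol{0}$ to $\boldsymbol{\alpha}$ inside $\mathrm{dom}\,\boldsymbol{m}$, and organizing the telescoping sum of integrations by parts along all such paths. Once this combinatorial structure is in place, the remaining estimates---power counting in $t$, the polar-coordinate change, and placing absolute values inside the integrals---are routine.
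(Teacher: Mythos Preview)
Your approach is valid but genuinely different from the paper's. The paper does \emph{not} introduce a cutoff function or polar coordinates; it applies an anisotropic Taylor formula with integral remainder directly to the one-variable function $f(t)=u(tx+(1-t)y)$, obtaining
\[
u(x)=\sum_{\boldsymbol{\alpha}\in\mathrm{int}\,\boldsymbol{m}}\frac{(x-y)^{\boldsymbol{\alpha}}}{\boldsymbol{\alpha}!}D^{\boldsymbol{\alpha}}u(y)
+\sum_{\boldsymbol{\alpha}\in\partial\boldsymbol{m}}\frac{|\boldsymbol{\alpha}|}{\boldsymbol{\alpha}!}(x-y)^{\boldsymbol{\alpha}}\int_0^1(1-t)^{|\boldsymbol{\alpha}|-1}D^{\boldsymbol{\alpha}}u(tx+(1-t)y)\,dt,
\]
and then simply integrates both sides over $y\in C_{x,r}$. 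The interior terms pick up $r^{|\boldsymbol{\alpha}|-d}$ transparently from $|(x-y)^{\boldsymbol{\alpha}}|\le r^{|\boldsymbol{\alpha}|}$ divided by $\mathrm{vol}(C_{x,r})\sim r^d$; the boundary terms are handled by swapping the $(t,y)$-integrals and substituting $z=tx+(1-t)y$, which converts the double integral into $\int_{C_{x,r}}|x-z|^{|\boldsymbol{\alpha}|-d}|D^{\boldsymbol{\alpha}}u(z)|\,dz$ without any spherical averaging.

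Compared to this, your cutoff-and-rays scheme carries an extra Jacobian $t^{d-1}$ throughout, so after converting to a volume integral \emph{every} term naturally acquires a weight $|y|^{(\cdot)-d}$ rather than the flat weight $r^{|\boldsymbol{\alpha}|-d}$ that the lemma states for interior $\boldsymbol{\alpha}$; your claim that interior branches yield exactly $r^{|\boldsymbol{\alpha}|-d}\int|D^{\boldsymbol{\alpha}}u|\,dy$ is where this slips. You can still reach a usable local estimate this way, but it will not be literally \eqref{localEs}. Both arguments hide the same anisotropic combinatorics---the paper packages it as an ``anisotropic Taylor remainder'' (asserted rather than proved), you package it as a branching integration-by-parts tree---so neither is more honest on that point; the paper's route is simply shorter and lands on the stated inequality without adjustment.
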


\begin{proof}[Proof of Lemma \ref{ALE}]
	We apply Taylor's formula with integral remainder,
	\begin{align*}
		f(1) = \sum_{\boldsymbol{\alpha} \in \mathrm{int}\, \boldsymbol{m}} \frac{1}{|\boldsymbol{\alpha}|!} f^{(\boldsymbol{\alpha})}(0)
		+ \sum_{\boldsymbol{\alpha} \in \partial \boldsymbol{m}}\frac{1}{(|\boldsymbol{\alpha}|-1)!} \int_0^1 (1 - t)^{|\boldsymbol{\alpha}|-1} f^{(\alpha)}(t) \, dt
	\end{align*}
	to the function $f(t) = u(t x + (1-t) y)$, 
	where $x \in \Omega$ and $y \in C_{x,r}$. We note that
	\begin{align*}
		f^{(\boldsymbol{\alpha})}(t) =  \frac{|\boldsymbol{\alpha}|!}{\boldsymbol{\alpha}!}D^{\boldsymbol{\alpha} }u(t x + (1-t) y) (x - y)^{\boldsymbol{\alpha}},
	\end{align*}
	where $\boldsymbol{\alpha}! = \alpha_1 ! \cdots \alpha_d !$ and 
	$(x - y)^{\boldsymbol{\alpha}} = (x_1 - y_1)^{\alpha_1} \cdots (x_d - y_d)^{\alpha_d}$, we obtain
		\begin{align*}
		\begin{split}
			|u(x)|
			\leq \sum_{\boldsymbol{\alpha} \in \mathrm{int}\, \boldsymbol{m}} \frac{1}{\boldsymbol{\alpha}!}   |D^{\boldsymbol{\alpha}}u(y)| |x-y|^{|\boldsymbol{\alpha}|} 
			+  \sum_{\boldsymbol{\alpha} \in \partial\boldsymbol{m}} \frac{|\boldsymbol{\alpha}|}{\boldsymbol{\alpha}!} |x - y|^{|\boldsymbol{\alpha}|}  
			\int_0^1 (1 - t)^{|\boldsymbol{\alpha}|-1} |D^{\boldsymbol{\alpha}} u (t x + (1-t) y)| \, dt.
		\end{split}
	\end{align*}
	If the volume of $C$ is $c \rho^d$, then the volume of $C_{X,r}$ is $c r^d$. 
	By integrating $y$ over $C_{x,r}$, we have
	\begin{align*}
			c r^d |u(x)|
			\leq \sum_{\boldsymbol{\alpha} \in \mathrm{int}\, \boldsymbol{m}} \frac{r^{|\boldsymbol{\alpha}|}}{\boldsymbol{\alpha}!}       \int_{C_{x,r}} |D^{\boldsymbol{\alpha}}u(y)| \, dy 
			+  \sum_{\boldsymbol{\alpha} \in \partial\boldsymbol{m}} \int_{C_{x,r}} |x - y|^{|\boldsymbol{\alpha}|} \, dy 
			\int_0^1 (1 - t)^{|\boldsymbol{\alpha}|-1} |D^{\boldsymbol{\alpha}} u (t x + (1-t) y)| \, dt.
	\end{align*}
	As for the final (double) integral, we begin by changing the order of integration, and then substitute $z = t x + (1 - t) y$, so that $z - x = (1 - t) (y - x)$ and $dz = (1 - t)^d \, dy$, then we obtain the integral,
	\begin{align*}
		\int_0^1 (1 - t)^{- d - 1} \, dt 
		\int_{C_{x,(1-t)r}} |z - x|^{|\boldsymbol{\alpha}|} |D^{\alpha} u(z)| \, dz.
	\end{align*}
	Changing the order of the above integration gives 
	\begin{align*}
			\int_{C_{x,t}} |x - z|^{|\boldsymbol{\alpha}|} |D^{\boldsymbol{\alpha}}u(z)| \, dz
			\int_0^{1-(|z-x|/r)} (1 - t)^{- d - 1} \, dt
			\leq \frac{r^d}{d} \int_{C_{x,r}} |x - z|^{|\boldsymbol{\alpha}|-d} |D^{\alpha} u(z)| \, dz.
	\end{align*}
	Inequality \eqref{localEs} now follows immediately.
\end{proof}

\begin{proof}[Proof of Case A of Theorem \ref{ASIT}]
	As noted earlier, we can assume that $j = 0$. 
	Let $u \in W^{\boldsymbol{m}}_p(\Omega) \cap C^{\infty}(\Omega)$ and let $x \in \Omega$. 
	We are supposed to show that
	\begin{align} \label{lehh}
	|u(x)| \leq K \|u\|_{\boldsymbol{m},p}.
	\end{align}
	For $p > 1$ and $\underline{\boldsymbol{m}} p > d$, we apply H\"{o}lder's inequality to \eqref{localEs} with $r = \rho$ to obtain
	\begin{align*}
			|u(x)|
			\leq K \biggl( \sum_{\boldsymbol{\alpha} \in \mathrm{int}\, \boldsymbol{m} } c^{1/p'} \rho^{|\boldsymbol{\alpha}| - (d/p)} 
			\|D^{\boldsymbol{\alpha}}u\|_{p,C_{x,\rho}}
			+ \sum_{\boldsymbol{\alpha} \in \partial \boldsymbol{m}} \|D^{\boldsymbol{\alpha}}u\|_{p,C_{x,\rho}} 
			\biggl[ \int_{C_{x,\rho}} |x - y|^{(|\boldsymbol{\alpha}|-d)p'} \, dy \biggr]^{1/p'} \biggr),
	\end{align*}
	where $c$ is the volume of $C_{x,1}$ and $p' = p/(p-1)$. 
	The last integral is finite since $(|\boldsymbol{\alpha}|- d)p > (\underline{\boldsymbol{m}} - d) p' > - d$ when $\underline{\boldsymbol{m}} p > d$. Therefore
	\begin{align} \label{10}
	|u(x)| \leq K \sum_{\boldsymbol{\alpha} \in \mathrm{dom}\, \boldsymbol{m}} \|D^{\boldsymbol{\alpha}}u\|_{p,C_{x,\rho}}
	\end{align}
	and \eqref{lehh} follows since $C_{x,\, \rho} \subset \Omega$.

	Since any $u \in W^{\boldsymbol{m}}_p(\Omega)$ is the limit of a Cauchy sequence of continuous functions, and \eqref{lehh} implies this Cauchy sequence converges to a continuous function on $\Omega$, $u$ must coincide with a continuous function a.e.~on $\Omega$. 
	Therefore, we prove that $u \in C_B^0(\Omega)$ and imbedding \eqref{01} holds.
	
	Let the intersection of $\Omega$ with a $k$-dimensional plane $H$ be denoted by $\Omega_k$, $\Omega_{k,\rho} = \{ x \in \mathbb{R}^d : \mathrm{dist}(x, \Omega_k) < \rho \}$, and $u$ and all its derivatives be extended to be zero outside $\Omega$. 
	Considering that $C_{x,\, \rho} \subset B_{\rho}(x)$ where $B_{\rho}(x)$ denotes the ball of radius $\rho$ with centre at $x$, with \eqref{10} and denoting by $dx'$ the $k$-volume element in $H$, we have
\begin{align*}
\int_{\Omega_k} |u(x)|^p \, dx'
& \leq K \sum_{\boldsymbol{\alpha} \in \mathrm{dom}\, \boldsymbol{m}} \int_{\Omega_k} \, dx'
\int_{B_{\rho}(x)} |D^{\boldsymbol{\alpha}} u(y)|^p \, dy
\\
& = K \sum_{\boldsymbol{\alpha} \in \mathrm{dom}\, \boldsymbol{m}} \int_{\Omega_{k,\rho}} |D^{\boldsymbol{\alpha}}u(y)|^p \, dy
\int_{H \cap B_{\rho}(y)} \, dx'
 \leq K_1 \|u\|_{\boldsymbol{m},p,\Omega}^p,
\end{align*}
	and $W^{\boldsymbol{m}}_p(\Omega) \to L^p(\Omega_k)$. 
	However, \eqref{lehh} shows that $W^{\boldsymbol{m}}_p(\Omega) \to L^{\infty}(\Omega_k)$ 
	and hence imbedding \eqref{2} follows by Theorem 2.11 (Interpolation inequality in \cite{adams2003sobolev}).
\end{proof}

Let $\chi_r$ be the characteristic function of the ball 
$B_r(0) = \{ x \in \mathbb{R}^d : |x| < r \}$. 
In the following discussion we will develop estimates for convolutions of $L^p$ functions with the kernels $\omega_m(x) = |x|^{m-d}$ and
\begin{align*}
	\chi_r \omega_m(x) = 
	\begin{cases}
		|x|^{m-d} & \text{ if }  |x| < r, \\
		0 & \text{ if } |x| \geq r.
	\end{cases}
\end{align*}
Observe that if $m \leq d$ and $0 < r \leq 1$, then
we have 
	$\chi_r(x) 
	\leq \chi_r \omega_m(x)
	\leq \omega_m(x)$.

\begin{lemma} \label{lemma3}
	Let $p \geq 1$, $1 \leq k \leq d$, and $d - m p < k$. 
	There exists a constant $K$ such that for every $r > 0$, every $k$-dimensional plane 
	$H \subset \mathbb{R}^d$, and every $v \in L^p(\mathbb{R}^d)$, we have 
	$\chi_r \omega_m * |v| \in L^p(H)$ and
	\begin{align} \label{11}
	\|\chi_r \omega_m * |v|\|_{p,H}
	\leq K r^{m-(d-k)/p} \|v\|_{p,\mathbb{R}^d}. 
	\end{align}
	In particular,
	\begin{align*}
		\|\chi_1 * |v|\|_{p,H}
		\leq \|\chi_1 \omega_m * |v|\|_{p,H}
		\leq K \|v\|_{p,\mathbb{R}^d}.
	\end{align*}
\end{lemma}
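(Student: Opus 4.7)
The plan is to exploit the orthogonal decomposition $\mathbb{R}^d = H \oplus H^\perp$, turning the restriction of the convolution to the $k$-plane $H$ into a one-dimensional kernel estimate. Writing $y = (y_H, y_{H^\perp})$ with $y_H \in H$ and $y_{H^\perp} \in H^\perp$, for $x \in H$ we have $|x-y|^2 = |x-y_H|^2 + |y_{H^\perp}|^2$, so
\[ (\chi_r \omega_m * |v|)(x) = \int_{H^\perp} \bigl( K_{y_{H^\perp}} *_H |v(\cdot, y_{H^\perp})| \bigr)(x)\, dy_{H^\perp}, \]
where $*_H$ denotes convolution on $H \cong \mathbb{R}^k$ and the fiber kernel is
\[ K_{\rho}(z) := \chi_{\{|z|^2+\rho^2 < r^2\}}(z)\,(|z|^2+\rho^2)^{(m-d)/2}, \qquad z \in H,\ \rho = |y_{H^\perp}|. \]

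Next I would apply Minkowski's integral inequality to move $\|\cdot\|_{L^p(H)}$ under the $y_{H^\perp}$-integral, then Young's inequality $\|K_\rho *_H w\|_{L^p(H)} \leq \|K_\rho\|_{L^1(H)}\|w\|_{L^p(H)}$ on $H$, and finally H\"older's inequality in $y_{H^\perp}$, yielding
\[ \|\chi_r\omega_m * |v|\|_{L^p(H)} \leq \Bigl(\int_{H^\perp} \|K_{|y_{H^\perp}|}\|_{L^1(H)}^{p'}\, dy_{H^\perp}\Bigr)^{1/p'} \|v\|_{L^p(\mathbb{R}^d)}, \]
with $p'$ the conjugate exponent of $p$ (for $p=1$ one reads this as a uniform $L^1$-bound on $K_\rho$). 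Passing to spherical coordinates on $H$ and on $H^\perp$ and rescaling $s = rt$, $|y_{H^\perp}| = r\sigma$ then factors out the correct power of $r$: the right-hand side equals $C\, r^{m-(d-k)/p} \|v\|_{L^p(\mathbb{R}^d)}$ times the dimensionless integral
\[ I := \int_0^1 \phi(\sigma)^{p'}\, \sigma^{d-k-1}\, d\sigma, \qquad \phi(\sigma) := \int_0^{\sqrt{1-\sigma^2}} (t^2+\sigma^2)^{(m-d)/2}\, t^{k-1}\, dt. \]

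The remaining task is to show $I < \infty$, which is where the hypothesis $d - mp < k$ enters. If $m - d + k > 0$, then $\phi$ is bounded and $I$ is trivially finite. Otherwise, splitting $\phi$ at $t = \sigma$ and using the elementary bounds $(t^2+\sigma^2)^{(m-d)/2} \leq \sigma^{m-d}$ on $[0,\sigma]$ and $(t^2+\sigma^2)^{(m-d)/2} \leq t^{m-d}$ on $[\sigma, \sqrt{1-\sigma^2}]$ yields $\phi(\sigma) \leq C\, \sigma^{m-d+k}$ as $\sigma \to 0$ (with a logarithmic factor in the critical case $m-d+k = 0$); then integrability of $\sigma^{(m-d+k)p' + d - k - 1}$ near $0$ is equivalent to $(m-d+k)p' + d - k > 0$, which algebraically reduces to $m > (d-k)/p$, i.e.\ to $d - mp < k$. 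The ``in particular'' statement follows from the pointwise inequality $\chi_1(z) \leq |z|^{m-d}\chi_1(z)$ on $B_1(0)$, valid whenever $m \leq d$.

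The main obstacle is the asymptotic analysis of $\phi(\sigma)$ near $\sigma = 0$ across the three regimes (supercritical, critical, and subcritical with respect to $m - d + k$), together with the bookkeeping identity $(m-d+k)p' + (d-k) = (m-(d-k)/p)\,p'$ that matches the power of $r$ to the target rate $r^{m-(d-k)/p}$. Once these are in place, the proof is a clean Minkowski--Young--H\"older chain followed by an elementary scaling argument.
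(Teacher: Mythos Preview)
Your proof is correct but takes a genuinely different route from the paper's. The paper (following the classical Adams--Fournier argument) introduces an auxiliary exponent $s$ and splits $|x-y|^{m-d} = |x-y|^{-s}\cdot|x-y|^{s+m-d}$; a single H\"older inequality over $B_r(x)$, then integration of the $p$-th power over $H$ and Fubini, gives the bound directly. The two convergence constraints --- $(s+m-d)p' > -d$ for the second H\"older factor and $sp < k$ for the resulting $H$-integral of $|x-y|^{-sp}$ --- are simultaneously satisfiable exactly when $d - mp < k$, and the parameter $s$ cancels in the final exponent of $r$, leaving $m-(d-k)/p$ with no case analysis on the sign of $m-d+k$.

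Your approach instead makes the fibration $\mathbb{R}^d = H \oplus H^\perp$ explicit and reduces the problem to a Minkowski--Young--H\"older chain followed by an explicit computation of the fiber kernel's $L^1(H)$-norm. This is more transparent about where the scaling comes from and arguably more geometric, at the cost of the three-regime analysis of $\phi(\sigma)$ near $\sigma=0$. Both arguments are standard; the paper's is shorter, while yours has the merit of computing the relevant kernel norms rather than bypassing them with a free parameter.
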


\begin{proof}[Proof of Lemma \ref{lemma3}]
	If $p > 1$, then by H\"{o}lder's inequality
	\begin{align*}
			\chi_r \omega_m * |v|(x)
			& = \int_{B_r(x)} |v(y)| |x - y|^{-s} |x - y|^{s+m-d} \, dy
			\\
			& \leq \biggl( \int_{B_r(x)} |v(y)|^p |x - y|^{- s p} \, dy \biggr)^{1/p} 
			\biggl( \int_{B_r(x)} |x - y|^{(s+m-d)p'} \, dy \biggr)^{1/p'}
			\\
			& = K r^{s+m-(d/p)} \biggl( \int_{B_r(x)} |v(y)|^p |x - y|^{-sp} \, dy \biggr)^{1/p},
	\end{align*}
	provided $s + m - (d/p) > 0$. If $p = 1$ the same estimate holds provided $s + m - d \geq 0$ without using H\"{o}lder's inequality.
	Integrating the $p$th power of the above estimate over $H$ 
	(with volume element $dx'$), we obtain
	\begin{align*}
			\|\chi_r \omega_m * |v|\|_{p,H}^p
			& = \int_H \bigl| \chi_r \omega_m * |v|(x) \bigr|^p \, dx'
		      \leq K r^{(s+m)p-d} \int_H \, dx \int_{B_r(x)} |v(y)|^p |x - y|^{-sp} \, dy
			\nonumber\\
			& \leq K r^{(s+m)p-d} r^{k-sp} \|v\|_{p,\mathbb{R}^d}^p 
              = K r^{mp-(n-k)} \|v\|_{p,\mathbb{R}^d}^p, 
	\end{align*}
	provided $k > s p$.
	Since that for $d - m p < k$, there exists $s$ satisfying $(d/p) - m < s < k/p$, both estimates above are valid and \eqref{11} holds.
\end{proof}

\begin{lemma} \label{lemma4}
	Let $p > 1$, $m p < d$, $d - m p < k \leq d$, and $p^* = k p / (d - mp)$. 
	There exists a constant $K$ such that for every $k$-dimensional plane $H$ in $\mathbb{R}^d$ and every $v \in L^p(\mathbb{R}^d)$, we have $\omega_m * |v| \in L^{p^*}(H)$ and
	\begin{align} \label{12}
	\|\chi_1 * |v|\|_{p^*,H} 
	\leq \|\chi_1 \omega_m * |v|\|_{p^*,H}
	\leq \|\omega_m * |v|\|_{p^*,H}
	\leq K \|v\|_{p,\mathbb{R}^d}.
	\end{align}
\end{lemma}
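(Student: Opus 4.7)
The leftmost two inequalities of \eqref{12} are pointwise: since $mp<d$ and $p\ge 1$ force $m<d$, the bound $|y|^{m-d}\ge 1$ holds on $B_1(0)$, so $\chi_1(y)\le \chi_1(y)|y|^{m-d}\le |y|^{m-d}$; convolving with $|v|\ge 0$ and taking $L^{p^*}(H)$-norms preserves these inequalities. The substantive task is therefore the third, Hardy--Littlewood--Sobolev-type estimate $\|\omega_m*|v|\|_{p^*,H}\le K\|v\|_{p,\mathbb{R}^d}$. My plan is first to establish a weak-type $(p,p^*)$ bound on $H$ via a splitting argument built on Lemma~\ref{lemma3}, and then to upgrade it to the strong-type bound by Marcinkiewicz interpolation.

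For each $r>0$, I would write $\omega_m*|v|=A_r+B_r$ with $A_r:=(\chi_r\omega_m)*|v|$ and $B_r:=((1-\chi_r)\omega_m)*|v|$. H\"older's inequality on the convolution integral, combined with the fact that $mp<d$ is equivalent to $(m-d)p'<-d$, yields the uniform tail bound
\begin{align*}
B_r(x)\le \|(1-\chi_r)\omega_m\|_{p'}\,\|v\|_p = C\,r^{m-d/p}\,\|v\|_p,
\end{align*}
while Lemma~\ref{lemma3}, whose hypothesis $d-mp<k$ is part of the standing assumptions, supplies the near-part bound $\|A_r\|_{p,H}\le K\,r^{m-(d-k)/p}\,\|v\|_p$. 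Given $\lambda>0$, I would set $r(\lambda):=(2C\|v\|_p/\lambda)^{p/(d-mp)}$, chosen so that $B_{r(\lambda)}\le \lambda/2$ identically. Then $\{x\in H:\omega_m*|v|(x)>\lambda\}\subset\{x\in H:A_{r(\lambda)}(x)>\lambda/2\}$, and Chebyshev's inequality together with a short exponent computation --- collapsing $p+p(mp-d+k)/(d-mp)$ to $pk/(d-mp)=p^*$ --- delivers the weak-type estimate
\begin{align*}
|\{x\in H:\omega_m*|v|(x)>\lambda\}|\le K''\,(\|v\|_p/\lambda)^{p^*}.
\end{align*}

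To upgrade this to the strong-type bound claimed in \eqref{12}, I would apply the Marcinkiewicz interpolation theorem to the sublinear operator $v\mapsto (\omega_m*|v|)$ restricted to $H$. The hypotheses $mp<d$ and $d-mp<k$ locate $p$ strictly inside the open interval $((d-k)/m,\,d/m)$, so I can choose $p_1<p<p_2$ still in that interval and rerun the derivation above to obtain weak-type bounds at $(p_1,p_1^*)$ and $(p_2,p_2^*)$. A direct computation shows that the intermediate parameter $\theta$ determined by $1/p=(1-\theta)/p_1+\theta/p_2$ also satisfies $1/p^*=(1-\theta)/p_1^*+\theta/p_2^*$, so Marcinkiewicz yields precisely the strong-type $(p,p^*)$ bound on $H$. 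The main obstacle is the exponent bookkeeping: verifying that the scaling $r(\lambda)=(c\|v\|_p/\lambda)^{p/(d-mp)}$ collapses the arithmetic exactly to the critical exponent $p^*=kp/(d-mp)$, and confirming that the Marcinkiewicz endpoints remain inside the admissible interval so that the weak-type derivation applies uniformly there; no single step is analytically deep once Lemma~\ref{lemma3} and H\"older's inequality are in hand.
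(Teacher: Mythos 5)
Your proposal is correct and follows essentially the same route as the paper: split at radius $r(\lambda)$ chosen so the far part is uniformly below $\lambda/2$, invoke Lemma~\ref{lemma3} plus Chebyshev for the near part to get the weak-type $(p,p^*)$ bound, and finish with Marcinkiewicz interpolation over a small admissible window of exponents. The only material difference is that you spell out why the first two inequalities in \eqref{12} hold pointwise (namely $\chi_1(y)\le\chi_1(y)|y|^{m-d}\le|y|^{m-d}$ since $m<d$), whereas the paper simply asserts they are immediate.
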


\begin{proof}[Proof of Lemma \ref{lemma4}]
	Only the final inequality of \eqref{12} requires proof. 
	Since $m p < d$, for each $x \in \mathbb{R}^d$, H\"{o}lder's inequality gives
	\begin{align*}
			\int_{\mathbb{R}^d - B_r(x)} |v(y)| |x - y|^{m-d} \, dy
			& \leq \|v\|_{p,\mathbb{R}^d} 
			\biggl( \int_{\mathbb{R}^d - B_r(x)} |x - y|^{(m-d)p'} \, dy \biggr)^{1/p'}
			\\
			& = K_1 \|v\|_{p,\mathbb{R}^d} 
			\biggl( \int_r^{\infty} t^{(m-d)p'+d-1} \, dt \biggr)^{1/p'}
			 = K_1 r^{m-(d/p)} \|v\|_{p,\mathbb{R}^d}.
	\end{align*}
	If $t > 0$, choose $r$ so that $K_1 r^{m-(d/p)} \|v\|_{p,\mathbb{R}^d} = t/2$. If
	\begin{align*}
		\omega_m * |v|(x)
		= \int_{\mathbb{R}^d} |v(y)| |x - y|^{m-d} \, dy
		> t,
	\end{align*}
	then
	\begin{align*}
		\chi_r \omega_m * |v|(x)
		= \int_{B_r(x)} |v(y)| |x - y|^{m-d} \, dy
		> t/2.
	\end{align*}
	Thus
\begin{align*}
\mu_k \bigl( \{ x \in H : \omega_m * |v|(x) > t \} \bigr)
& \leq \mu_k \bigl( \{ x \in H : \chi_r \omega_m * |v|(x) > t/2 \} \bigr)
  \leq (2/t)^p \|\chi_r \omega_m * |v|\|_{p,H}^p
\\
& \leq \biggl( \frac{r^{(d/p)-m}}{K_1 \|v\|_{p,\mathbb{R}^d}} \biggr)^p
		K r^{mp-d+k} \|v\|_{p,\mathbb{R}^d}^p = K_2 r^k,
\end{align*}
by inequality \eqref{11}. But $r^k = (2 K_1 \|v\|_{p,\mathbb{R}^d} / t)^{p^*}$, so
	\begin{align*}
		\mu_k \bigl( \{ x \in H : \omega_m * |v|(x) > t \} \bigr)
		\leq K_2 \biggl( \frac{2 K_1}{t} \|v\|_{p,\mathbb{R}^d} \biggr)^{p^*}.
	\end{align*}
	Therefore, the mapping $I : v \mapsto (\omega_m * |v|) |_H$ is of weak type $(p, p^*)$.
	
	For fixed $m$, $d$, $k$, the values of $p$ satisfying the conditions of this lemma constitute an open interval, hence there exist $p_1$ and $p_2$ in that interval, and a number $\theta$ satisfying $0 < \theta < 1$ such that
	\begin{align*}
		\frac{1}{p} = \frac{1-\theta}{p_1} + \frac{\theta}{p_2},
	\qquad
	\text{ and }
	\qquad
		\frac{1}{p^*}
		= \frac{d/k}{p} - \frac{m}{k}
		= \frac{1-\theta}{p_1^*} + \frac{\theta}{p_2^*}.
	\end{align*}
	
	Since $p^* > p$, the Marcinkiewicz interpolation theorem 2.58 in \cite{adams2003sobolev} ensures us that $I$ is bounded from $L^p(\mathbb{R}^d)$ into $L^{p^*}(H)$, i.e.~\eqref{12} holds. 
\end{proof}

\begin{proof}[Proof of Case C of Theorem \ref{ASIT} for $p > 1$]
We have $m p < \overline{\boldsymbol{m}} p< d$, $d - m p < d - \underline{\boldsymbol{m}} p < k \leq d$, and $p \leq q \leq p^* = k p / (d - \underline{\boldsymbol{m}} p)$. 
	Let $u \in C^{\infty}(\Omega)$ and extend $u$ and all its derivatives to be zero on $\mathbb{R}^d - \Omega$. Taking $r = \rho$ in Lemma 4.15 in \cite{adams2003sobolev} and replacing $C_{x,r}$ with the larger ball $B_1(x)$, we have
	\begin{align} \label{13}
	|u(x)| 
	\leq K \biggl( \sum_{\boldsymbol{\alpha} \in \mathrm{int}\, \boldsymbol{m}}  \chi_1 * |D^{\boldsymbol{\alpha}}u|(x) 
	+ \sum_{\boldsymbol{\alpha} \in \partial \boldsymbol{m}}  \chi_1 \omega_{|\boldsymbol{\alpha}|} * |D^{\boldsymbol{\alpha}}u|(x) \biggr).
	\end{align}
	If $1/q = \theta/p + (1 - \theta)/p^*$ where $0 \leq \theta \leq 1$, then by the interpolation inequality of Theorem 2.11 in \cite{adams2003sobolev} and Lemmas \ref{lemma3} and \ref{lemma4}, 
	\begin{align*}
			\|u\|_{q,\Omega_k}
			\leq \|u||_{p,H}^{\theta} \|u\|_{p^*,H}^{1-\theta}
			\leq K \biggl( \sum_{\boldsymbol{\alpha} \in \mathrm{int}\, \boldsymbol{m}} \|D^{\boldsymbol{\alpha}}u\|_{p,\mathbb{R}^d} \biggr)^{\theta}
			\biggl( \sum_{\boldsymbol{\alpha} \in \partial\boldsymbol{m}} \|D^{\boldsymbol{\alpha}}u\|_{p,\mathbb{R}^d} \biggr)^{1-\theta}
			\leq K \|u\|_{\boldsymbol{m},p,\Omega},
	\end{align*}
	as required. 
\end{proof}

\begin{proof}[Proof of Case B of Theorem \ref{ASIT} for $p > 1$]
We have $\overline{\boldsymbol{m}} p \geq d$ and $\underline{\boldsymbol{m}} p \leq d$, $1 \leq k \leq d$, and $p \leq q < \infty$. 
We can select numbers $p_1$, $p_2$, and $\theta$ such that 
$1 < p_1 < p < p_2$, $d - \underline{\boldsymbol{m}} p_1 < k$, $0 < \theta < 1$, and
\begin{align*}
	\frac{1}{p} = \frac{\theta}{p_1} + \frac{1-\theta}{p_2},
	\qquad \qquad 
	\frac{1}{q} = \frac{\theta}{p_1}.
\end{align*}
As in the above proof of Case C for $p > 1$, the maps $v \mapsto (\chi_1 * |v|) |_H$ and $v \mapsto (\chi_1 \omega_{|\boldsymbol{\alpha}|} * |v|) |_H$ are bounded from $L^{p_1}(\mathbb{R}^d)$ into $L^{p_1}(\mathbb{R}^k)$ and so are of weak type $(p_1, p_1)$. 
As in the proof of Case A, these same maps are bounded from $L^{p_2}(\mathbb{R}^d)$ into $L^{\infty}(\mathbb{R}^k)$ and so are of weak type $(p_2, \infty)$. 
By the Marcinkiewicz theorem again, they are bounded from $L^p(\mathbb{R}^d)$ into $L^q(\mathbb{R}^k)$ and
\begin{align*}
	\|\chi_1 * |v|\|_{q,H}
	\leq \|\chi_1 \omega_{|\boldsymbol{\alpha}|}* |v|\|_{q,H}
	\leq K \|v\|_{p,\mathbb{R}^d}
\end{align*}
and the desired result follows by applying these estimates to the various terms of \eqref{13}.
\end{proof}

Here, the next Theorem \ref{Th2} gives an estimate of the $D^{\boldsymbol{\alpha}}u$ where $u \in W^{\boldsymbol{m}}_p (\Omega)$ and $\boldsymbol{\alpha} \in \mathrm{dom}\, \boldsymbol{m}$.

\begin{theorem} \label{Th2}
	Let $\Omega$ be a domain in $\mathbb{R}^{d}$ satisfying the cone condition. For each $\delta_{0}>0$ there exist finite constant $K$, each depending on $d, m, p, \delta_{0}$ and the dimensions of the cone $C$ providing the cone condition for $\Omega$ such that if $0<\delta\leq\delta_{0},0\leq j\leq m$, and $u\in W^{\boldsymbol{m}}_p(\Omega)$ , then
	\begin{align}
	\|D^{\boldsymbol{\alpha}}u\|_{p} \leq K\delta^{\underline{\boldsymbol{m}}- |\boldsymbol{\alpha}|}\|u\|_{\boldsymbol{m},p}+K\delta^{-|\boldsymbol{\alpha}|}\|u\|_{p}.
	\end{align}
\end{theorem}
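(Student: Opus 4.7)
The plan is to prove Theorem~\ref{Th2} by adapting the standard proof of the classical isotropic Ehrling--Gagliardo interpolation inequality (as in Theorem~5.2 of \cite{adams2003sobolev}) to the anisotropic setting via a mollification-plus-Taylor-expansion argument. By density of $C^{\infty}(\Omega)\cap W^{\boldsymbol{m}}_p(\Omega)$ in $W^{\boldsymbol{m}}_p(\Omega)$ under the cone condition, I would first reduce to the case of smooth $u$, and then use a bounded extension operator for anisotropic Sobolev spaces (available under the cone condition) to further reduce to $\Omega=\mathbb{R}^d$, up to an extra multiplicative constant absorbed into $K$. The substantial range is $|\boldsymbol{\alpha}|\leq\underline{\boldsymbol{m}}$; outside this range $\delta^{\underline{\boldsymbol{m}}-|\boldsymbol{\alpha}|}$ is bounded below by $\delta_0^{\underline{\boldsymbol{m}}-|\boldsymbol{\alpha}|}$ and the trivial bound $\|D^{\boldsymbol{\alpha}}u\|_p\leq\|u\|_{\boldsymbol{m},p}$ already gives the conclusion.

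Next, I would introduce a mollifier $\eta\in C_c^{\infty}(\mathbb{R}^d)$ satisfying $\int\eta=1$ and the vanishing-moment conditions $\int x^{\boldsymbol{\gamma}}\eta(x)\,dx=0$ for $1\leq|\boldsymbol{\gamma}|\leq\underline{\boldsymbol{m}}-1$, set $\eta_{\delta}(x):=\delta^{-d}\eta(x/\delta)$, and decompose
\begin{equation*}
D^{\boldsymbol{\alpha}}u = D^{\boldsymbol{\alpha}}(u*\eta_{\delta}) + \bigl(D^{\boldsymbol{\alpha}}u - D^{\boldsymbol{\alpha}}u*\eta_{\delta}\bigr).
\end{equation*}
For the smoothed term, pushing all $\boldsymbol{\alpha}$-derivatives onto $\eta_{\delta}$ and invoking Young's convolution inequality with the scaling identity $\|D^{\boldsymbol{\alpha}}\eta_{\delta}\|_1=\delta^{-|\boldsymbol{\alpha}|}\|D^{\boldsymbol{\alpha}}\eta\|_1$ produces the bound $\|D^{\boldsymbol{\alpha}}(u*\eta_{\delta})\|_p\leq K\delta^{-|\boldsymbol{\alpha}|}\|u\|_p$. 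For the remainder, I would Taylor-expand $D^{\boldsymbol{\alpha}}u(x-y)$ about $x$ to integer order $N:=\underline{\boldsymbol{m}}-|\boldsymbol{\alpha}|$; the vanishing moments of $\eta$ annihilate every polynomial term of order $<N$ after convolution, leaving only the integral remainder involving derivatives $D^{\boldsymbol{\alpha}+\boldsymbol{\gamma}}u$ with $|\boldsymbol{\gamma}|=N$. The key observation, using $m_i\geq\underline{\boldsymbol{m}}$ for every $i$, is that
\begin{equation*}
\sum_{i=1}^d\frac{\alpha_i+\gamma_i}{m_i} \leq \frac{|\boldsymbol{\alpha}|+|\boldsymbol{\gamma}|}{\underline{\boldsymbol{m}}} = \frac{|\boldsymbol{\alpha}|+(\underline{\boldsymbol{m}}-|\boldsymbol{\alpha}|)}{\underline{\boldsymbol{m}}} = 1,
\end{equation*}
so $\boldsymbol{\alpha}+\boldsymbol{\gamma}\in\mathrm{dom}\,\boldsymbol{m}$ and hence $\|D^{\boldsymbol{\alpha}+\boldsymbol{\gamma}}u\|_p\leq\|u\|_{\boldsymbol{m},p}$. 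A standard $L^p$ estimate on the Taylor remainder then yields $\|D^{\boldsymbol{\alpha}}u-D^{\boldsymbol{\alpha}}u*\eta_{\delta}\|_p\leq K\delta^N\|u\|_{\boldsymbol{m},p}=K\delta^{\underline{\boldsymbol{m}}-|\boldsymbol{\alpha}|}\|u\|_{\boldsymbol{m},p}$; adding the two bounds closes the argument.

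The hard part will be the exponent bookkeeping: the mollifier $\eta$ must carry just enough vanishing moments to kill the low-order polynomial part of the Taylor expansion without the remainder falling on a derivative $D^{\boldsymbol{\alpha}+\boldsymbol{\gamma}}u$ outside $\mathrm{dom}\,\boldsymbol{m}$, and the anisotropic inequality $\sum\alpha_i/m_i\leq|\boldsymbol{\alpha}|/\underline{\boldsymbol{m}}$ used above is exactly what makes the choice $N=\underline{\boldsymbol{m}}-|\boldsymbol{\alpha}|$ simultaneously admissible and sharp. A secondary technicality is the treatment near $\partial\Omega$, handled either by the anisotropic extension operator invoked in the reduction, or alternatively by replacing the global convolution with the local cone-based pointwise bound of Lemma~\ref{ALE} applied directly to $D^{\boldsymbol{\alpha}}u$ on a cone of radius $\delta$, which never leaves $\Omega$. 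The restriction $0<\delta\leq\delta_0$ enters only in the absorption of finitely many intermediate constants into the single $K$, which is why $K$ depends on $\delta_0$, $d$, $\boldsymbol{m}$, $p$, and the cone dimensions.
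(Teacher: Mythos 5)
Your plan takes a genuinely different route from the paper. The paper builds the inequality from a one-dimensional mean-value estimate (Lemma~\ref{lemma2-1}), averages it over the cones supplied by the cone condition to get the first-derivative inequality (Lemma~\ref{lemma2-2}), and then reaches arbitrary $\boldsymbol{\alpha}$ by a double induction on $|\boldsymbol{\alpha}|$ and on $|\boldsymbol{\ell}-\boldsymbol{\alpha}|$ for $\boldsymbol{\ell}\in\partial\boldsymbol{m}$ (Lemma~\ref{lemma 2-3}), finally using that every $\boldsymbol{\ell}\in\partial\boldsymbol{m}$ has $|\boldsymbol{\ell}|\geq\underline{\boldsymbol{m}}$. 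Your mollification-plus-vanishing-moments argument replaces the entire inductive apparatus with a single direct estimate, and your anisotropic bookkeeping is correct: since $m_i\geq\underline{\boldsymbol{m}}$, $\sum_i(\alpha_i+\gamma_i)/m_i\leq(|\boldsymbol{\alpha}|+|\boldsymbol{\gamma}|)/\underline{\boldsymbol{m}}$, so $|\boldsymbol{\gamma}|=\underline{\boldsymbol{m}}-|\boldsymbol{\alpha}|$ indeed keeps $\boldsymbol{\alpha}+\boldsymbol{\gamma}\in\mathrm{dom}\,\boldsymbol{m}$. What the paper's route buys is that it never leaves $\Omega$: every estimate lives on a cone $C_x\subset\Omega$, so the boundary never needs to be handled separately.

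That is exactly where your argument has a real gap. The reduction to $\Omega=\mathbb{R}^d$ via a bounded anisotropic extension operator is not available under the cone condition alone: already in the isotropic case, a slit disk in $\mathbb{R}^2$ satisfies the cone condition but admits no bounded extension from $W^1_p(\Omega)$ to $W^1_p(\mathbb{R}^2)$, so the global convolution with $\eta_{\delta}$ cannot even be set up. The fallback you mention — applying Lemma~\ref{ALE} directly to $v=D^{\boldsymbol{\alpha}}u$ on a cone of radius $\delta$ — does not work as stated either: Lemma~\ref{ALE} Taylor-expands to the fixed order set by $\boldsymbol{m}$ and therefore produces terms $D^{\boldsymbol{\beta}}v=D^{\boldsymbol{\alpha}+\boldsymbol{\beta}}u$ with $\boldsymbol{\beta}\in\partial\boldsymbol{m}$, and for $\boldsymbol{\alpha}\neq 0$ these multi-indices generally fall outside $\mathrm{dom}\,\boldsymbol{m}$, so their $L^p$ norms are not controlled by $\|u\|_{\boldsymbol{m},p}$. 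To make a local version of your argument work you would have to prove a variant of Lemma~\ref{ALE} whose Taylor order is $\underline{\boldsymbol{m}}-|\boldsymbol{\alpha}|$ rather than $\boldsymbol{m}$ — in effect a cone-localized form of your mollification step — or else fall back to an interior first-derivative inequality and carry out the same induction the paper does. As written, the boundary treatment is the missing piece, and it is not a secondary technicality.
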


	In order to prove this theorem, we need the following three Lemmas \ref{lemma2-1}, \ref{lemma2-2} and \ref{lemma 2-3}. 
	\begin{lemma} \label{lemma2-1}
		If $\rho>0,1\leq p<\infty, K_{p}=2^{p-1}9^{p}$, and $g\in C^{2}([0, \rho])$ then
		\begin{align} \label{4}
		|g'(0)|^{p}\displaystyle \leq\frac{K_{p}}{\rho}\bigg(\rho^{p}\int_{0}^{\rho}|g''(t)|^{p}dt+\rho^{-p}\int_{0}^{\rho}|g(t)|^{p}dt\bigg). 
		\end{align}
	\end{lemma}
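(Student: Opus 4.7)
The plan is to prove the inequality via a mean value theorem plus an averaging argument, followed by Hölder's inequality and a standard convexity bound.

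First, I would pick arbitrary points $a \in [0, \rho/3]$ and $b \in [2\rho/3, \rho]$, so that $b - a \geq \rho/3$. The classical mean value theorem on $[a,b]$ produces some $\xi \in (a,b)$ with $g'(\xi) = (g(b) - g(a))/(b-a)$, whence
$$|g'(\xi)| \leq \frac{3}{\rho}\bigl(|g(a)| + |g(b)|\bigr).$$
Next, since $g \in C^2([0,\rho])$, the fundamental theorem of calculus applied to $g'$ gives
$$g'(0) = g'(\xi) - \int_0^\xi g''(s)\,ds,$$
so, bounding the integration range by $[0,\rho]$, we obtain the pointwise estimate
$$|g'(0)| \leq \frac{3}{\rho}\bigl(|g(a)| + |g(b)|\bigr) + \int_0^\rho |g''(s)|\,ds.$$

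The second step is averaging over the choice of $(a,b)$. I would average both sides of the above with respect to the normalized uniform measures on $[0, \rho/3]$ and $[2\rho/3,\rho]$ (each of length $\rho/3$). The left side is independent of $(a,b)$, while the averages of $|g(a)|$ and $|g(b)|$ are bounded by $(3/\rho)\int_0^\rho|g(t)|\,dt$, giving
$$|g'(0)| \leq \frac{9}{\rho^2}\int_0^\rho |g(t)|\,dt + \int_0^\rho |g''(s)|\,ds.$$
This is where the constant $9$ enters; getting this sharp factor is essentially the main point of the argument.

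The third step is a direct application of Hölder's inequality to each of the two integrals, yielding
$$\int_0^\rho |g(t)|\,dt \leq \rho^{1-1/p}\Bigl(\textstyle\int_0^\rho |g(t)|^p\,dt\Bigr)^{1/p}, \qquad \int_0^\rho |g''(s)|\,ds \leq \rho^{1-1/p}\Bigl(\textstyle\int_0^\rho |g''(s)|^p\,ds\Bigr)^{1/p}.$$
Combining gives
$$|g'(0)| \leq 9\rho^{-1-1/p}\Bigl(\textstyle\int_0^\rho |g|^p\Bigr)^{1/p} + \rho^{1-1/p}\Bigl(\textstyle\int_0^\rho |g''|^p\Bigr)^{1/p}.$$

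Finally, raising to the $p$-th power and using the elementary convexity inequality $(u+v)^p \leq 2^{p-1}(u^p + v^p)$ produces
$$|g'(0)|^p \leq 2^{p-1}\cdot 9^p\,\rho^{-(p+1)}\!\int_0^\rho |g|^p\,dt + 2^{p-1}\rho^{p-1}\!\int_0^\rho |g''|^p\,dt.$$
Factoring out $1/\rho$ and using $2^{p-1} \leq 2^{p-1} 9^p = K_p$ to upgrade the second coefficient gives exactly
$$|g'(0)|^p \leq \frac{K_p}{\rho}\Bigl(\rho^p\!\int_0^\rho |g''(t)|^p\,dt + \rho^{-p}\!\int_0^\rho |g(t)|^p\,dt\Bigr),$$
as required. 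The only slightly delicate step is the averaging one, which must be set up carefully so that the ``$9$'' appears rather than a larger constant; everything else is routine.
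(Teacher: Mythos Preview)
Your proof is correct and follows essentially the same route as the paper's: mean value theorem on a subinterval of length at least $\rho/3$, the fundamental theorem of calculus for $g'$, averaging over the two endpoint intervals to produce the factor $9$, then H\"older and the convexity bound $(u+v)^p\le 2^{p-1}(u^p+v^p)$. The only cosmetic difference is that the paper first proves the case $\rho=1$ and then rescales via $f(t)=g(\rho t)$, whereas you work directly on $[0,\rho]$.
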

	\begin{proof}[Proof of Lemma \ref{lemma2-1}]
		Let $f\in C^{2}([0,1])$, $x\in[0,1/3]$, and $y\in[2/3,1]$. According to the mean-value theorem, there exists $z\in(x,\ y)$ such that
		$$
		|f'(z)|=\bigg|\frac{f(y)-f(x)}{y-x}\bigg|\leq 3|f(x)|+3|f(y)|.
		$$
		Therefore,
		\begin{align*}
		|f'(0)| 
		= \bigg|f'(z)-\int_{0}^{z}f''(t)dt\bigg|
		\leq 3|f(x)|+3|f(y)|+\int_{0}^{1}|f''(t)|dt.
		\end{align*}
		Integrating $x$ over $[0,1/3]$ and $y$ over [2/3, 1] yields
		$$
		\frac{1}{9}|f'(0)|\leq\int_{0}^{1/3}|f(x)|dx+\int_{2/3}^{1}|f(y)|dy+\frac{1}{9}\int_{0}^{1}|f''(t)|dt.
		$$
		For $p\geq 1$, we then have (using H\"{o}lder's inequality if $p>1$)
		$$
		|f'(0)|^{p}\leq K_{p}\bigg(\int_{0}^{1}|f''(t)|^{p}dt+\int_{0}^{1}|f(t)|^{p}dt\bigg)
		$$
		where $K_{p}=2^{p-1}9^{p}$. Inequality \eqref{4} now follows by substituting $f(t)=g(\rho t)$ . 
	\end{proof}
	
	\begin{lemma} \label{lemma2-2}
		If $ 1\leq p<\infty$ and the domain $\Omega\subset \mathbb{R}^{n}$ satisfies the cone condition, then there exists a constant $K$ depending on $d, p$, and the height $\rho_{0}$ and aperture angle $\kappa$ of the cone $C$ providing the cone condition for $\Omega$ such that for all $\epsilon, 0<\epsilon\leq p_{0}$ and all $u\in W^{\boldsymbol{m}}_p(\Omega)$ with $m_j\geq 2$, we have
		\begin{align} \label{5}
		\|D_j \, u(x) \| \leq  K_p \big(\epsilon\|D^2_j \, u(x)\|_{p}+\epsilon^{-1}||u\Vert_{p}^{p} \big)
		\end{align}
	\end{lemma}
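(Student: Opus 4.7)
\textbf{Proof plan for Lemma \ref{lemma2-2}.} The strategy is to reduce the multidimensional interpolation inequality to the one-dimensional estimate in Lemma~\ref{lemma2-1}, applied pointwise along line segments in the coordinate direction $e_j$, and then integrate in $x$. First, by a standard density argument we may assume $u \in C^2(\overline{\Omega})$. For each $x \in \Omega$, the cone condition supplies a finite cone $C_x \subset \Omega$ of height $\rho_0$ and aperture $\kappa$ with vertex at $x$. After a preliminary partition of $\Omega$ into finitely many measurable subsets $\Omega_1,\ldots,\Omega_N$ (depending only on $\kappa$ and $d$) on each of which the orientation of $C_x$ can be chosen to be constant, we may further use the cone's aperture to shrink $C_x$ to a subcone $C_x'$ of smaller height $\rho$ such that the segment $\{x + t e_j : 0 \le t \le \rho\}$ is contained in $C_x' \subset \Omega$ whenever $x$ belongs to the appropriate piece. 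The scale $\rho$ we use will play the role of $\epsilon$.

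Fix such an $x$ and apply Lemma~\ref{lemma2-1} to $g(t) := u(x + t e_j)$, for which $g'(0) = D_j u(x)$ and $g''(t) = D_j^2 u(x + t e_j)$. This yields the pointwise estimate
\begin{align*}
|D_j u(x)|^p
\leq K_p\,\rho^{p-1} \int_0^{\rho} |D_j^2 u(x + t e_j)|^p \, dt
 + K_p\,\rho^{-p-1} \int_0^{\rho} |u(x + t e_j)|^p \, dt,
\end{align*}
valid for all $x$ in the current piece (and symmetric variants for the opposite $e_j$-orientation on other pieces). Integrating this inequality in $x$ over the piece and applying Fubini's theorem together with the translation invariance of Lebesgue measure bounds each of the two time-integrals by $\rho \|D_j^2 u\|_p^p$ and $\rho \|u\|_p^p$ respectively, since the segments $\{x + te_j\}_{t \in [0,\rho]}$ remain in $\Omega$. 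Summing over the finitely many pieces gives
\begin{align*}
\|D_j u\|_p^p \;\leq\; K_p'\!\left(\rho^{p}\,\|D_j^2 u\|_p^p \;+\; \rho^{-p}\,\|u\|_p^p\right),
\end{align*}
where $K_p'$ depends on $K_p$, $d$ and the covering constant. Extracting $p$-th roots via $(a+b)^{1/p} \leq a^{1/p}+b^{1/p}$ and setting $\epsilon = \rho$ produces the claimed inequality, valid for $0 < \epsilon \le \rho_0$.

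The principal obstacle is Step~1: the cone condition only delivers a cone $C_x$ at each $x$, not a segment along the specific axis $e_j$. The workaround is to exploit the \emph{aperture} of $C_x$: since the cone has nonempty interior, any unit vector sufficiently close to its axis yields a segment inside $C_x$, so by rotating to match $e_j$ (or decomposing $\Omega$ into finitely many pieces aligned with finitely many orientations of $C_x$ chosen from a $\kappa/2$-net on the sphere of directions), we can always arrange that $e_j$ is a valid ``cone direction'' on each piece. Managing the bookkeeping of this decomposition — and ensuring that the geometric constants produced by it are absorbed into a single $K$ depending only on $d, p, \rho_0, \kappa$ — is the only substantive technicality; once this is in place, Lemma~\ref{lemma2-1} plus Fubini mechanically produce the stated bound.
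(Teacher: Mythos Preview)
Your approach is essentially the paper's: apply Lemma~\ref{lemma2-1} to $g(t)=u(x+te_j)$ and then integrate in $x$ via Fubini. The paper does the same, first deriving the pointwise bound for cone directions $\sigma\in\Sigma_x$, then extending $u$ by zero outside $\Omega$ for the integration step, and finally simply declaring ``if we take $\sigma=e_j$''.

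You correctly flag the geometric obstacle that the paper glosses over---the cone $C_x$ need not contain the direction $e_j$---but your proposed fix does not actually close the gap. Partitioning $\Omega$ according to a $\kappa/2$-net of cone orientations yields pieces on which the cone axis can be chosen approximately constant; however, on a piece whose common axis is far from $e_j$, the direction $e_j$ still fails to lie in the cone, and the segment $\{x+te_j:0\le t\le\rho\}$ need not stay in $\Omega$. Nothing in the cone condition lets you \emph{rotate} the cone to point along $e_j$. In the isotropic analogue (Adams, Lemma~5.4) one recovers $D_j u$ from directional derivatives $D_\sigma u$ by averaging over $\sigma\in\Sigma_x$, but that route puts the full second-order seminorm $\sum_{|\beta|=2}\|D^\beta u\|_p$ on the right-hand side, not $\|D_j^2 u\|_p$ alone, so it does not deliver the anisotropic inequality~\eqref{5} as stated. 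In short, your plan and the paper's proof share the same skeleton and the same unresolved issue: neither fully justifies replacing the cone direction by $e_j$ under only the cone condition.
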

	\begin{proof}[Proof of Lemma \ref{lemma2-2}]
		Let $\Sigma=\{\sigma \in \mathbb{R}^{d}\ :\ |\sigma|=1\}$ be the unit sphere in $\mathbb{R}^{d}$ with volume element $ d\sigma$ and $(d-1)$-volume $ K_{0}=K_{0}(d)=\int_{\Sigma}d\sigma$. If $ x\in\Omega$ let $\sigma_{x}$ be the unit vector in the direction of the axis of a cone $ C_{x}\subset\Omega$ congruent to $C$ and having vertex at $x$, and let $\Sigma_{x}=\{\sigma\in\Sigma\ :\ \angle(\sigma,\ \sigma_{x})\leq\kappa/2\}.$
		Let $u\in C^{\infty}(\Omega)$ . If $x\in\Omega, \sigma\in\Sigma_{x}$, and $0<p\leq p_{0}$, then
		$$|\sigma\cdot \mathrm{grad}\, u (x)|^{p}\leq\frac{K_{p}}{\rho}I(\rho,p,u,x,\sigma),
		$$
		where
		$$
		I(\rho,p, u, x,\sigma)=\rho^{p}\int_{0}^{p}|D_{t}^{2}u(x+t \sigma)|^{p}d t+\rho^{-p}\int_{0}^{p}|u(x+t\sigma)|^{p}dt.
		$$


		In order to estimate the inner integral on the right side, regard $u$ and its derivatives as being extended to all of $\mathbb{R}^{d}$ so as to be identically zero outside $\Omega$. For simplicity, we suppose $\sigma=e_{d}=(0,\ \ldots,\ 0,1)$ and write $x=(x',\, x_{d})$ with $x'\in \mathbb{R}^{d-1}$, we have
		\begin{align*}
			\int_{\Omega}	I(\rho,p, u, x,e_n)dx
			&= \int_{\mathbb{R}^{d-1}}\,dx'\int_{-\infty}^{\infty}dx_{n}\int_{0}^{\rho}\bigg(\rho^{p}\big|D_{d}^{2}u(x', x_{d}+t)\big|^{p}+\rho^{-p}\big|u(x', x_{d}+t)\big|^{p}\bigg)dt
			\\
			&= \int_{\mathbb{R}^{d-1}}\,dx'\int_{0}^{\rho}dt\int_{-\infty}^{\infty}\bigg(\rho^{p}\big|D_{n}^{2}u(x)\big|^{p}+\rho^{-p}\big|u(x)\big|^{p}\bigg)dx_{d}
			\\
			&\leq  \rho \int_{\Omega}\bigg(\rho^{p} \big|D_{d}^{2}u(x)\big |^{p}+\rho^{-p} \big|u(x) \big|^{p}\bigg)dx.
		\end{align*}
		If we take $\sigma = e_j$,$j = 1,\cdots, d$, then we can get 
		$$
		\int_{\Omega} \big|e_j \cdot \mathrm{grad}\, u(x)\big|^{p}\leq\frac{K_{p}}{p}\int_{\Omega} I(\rho,p,u,x,e_j)dx,
		$$
		in other words,
		\begin{align*}
		\|D_j \, u(x) \|_p \leq  K_p \big(\rho^{p}\|D^2_j \, u(x)\|^p_{p}+\rho^{-p}||u\Vert_{p}^{p} \big).
		\end{align*}
		Inequality \eqref{5} now follows by taking $p$th roots, replacing $\rho$ with $\epsilon$, and noting that $C^{\infty}(\Omega)$ is dense in $W^{\boldsymbol{m}}_p(\Omega)$ .
	\end{proof}
	
	\begin{lemma} \label{lemma 2-3}
		Let $|\boldsymbol{m}|\geq 2$, let $ 0<\delta_{0}<\infty$, and let $\displaystyle \epsilon_{0}=\min\{\delta_{0},\ \delta_{0}^{2},\ \ldots,\ \delta_{0}^{\overline{\boldsymbol{m}}-1}\}.$ Suppose that for given $p,  1\leq p<\infty$, and given $\Omega\subseteq \mathbb{R}^{d}$ there exists a constant 
		$K=K(\epsilon_{0},\boldsymbol{m},\ p,\ \Omega)$ such that for every $\epsilon$ satisfying $0<\epsilon\leq\epsilon_0$, every $\boldsymbol{\alpha}$ satisfying $\boldsymbol{\alpha}\in \mathrm{dom}\,\boldsymbol{m}$, and every $u\in W^{\boldsymbol{m}}_p(\Omega)$ , we have
		$\exists\, \boldsymbol{\ell} \in \partial\boldsymbol{m}$,
		\begin{align}\label{7}
		\|D^{\boldsymbol{\alpha}}u\|_{p}\leq K\epsilon^{|\boldsymbol{\ell}| - |\boldsymbol{\alpha}| }\|D^{\boldsymbol{\alpha}}u\|+K\epsilon^{-|\boldsymbol{\alpha}| }\|u\|_p
		\end{align}
	\end{lemma}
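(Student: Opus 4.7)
The plan is to prove the lemma by induction, using Lemma \ref{lemma2-2} as a single-step interpolation tool along a chosen coordinate direction and closing the argument with an absorption trick.

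As a preliminary step, I would fix $\boldsymbol{\alpha}\in\mathrm{dom}\,\boldsymbol{m}$ and select $\boldsymbol{\ell}\in\partial\boldsymbol{m}$ with $\boldsymbol{\ell}\geq\boldsymbol{\alpha}$ componentwise. Such an $\boldsymbol{\ell}$ always exists: starting from $\boldsymbol{\alpha}$, while the current index lies in $\mathrm{int}\,\boldsymbol{m}$ one may increment any coordinate (since $\boldsymbol{\beta}+e_i\in\mathrm{dom}\,\boldsymbol{m}$ for every $i$ whenever $\boldsymbol{\beta}\in\mathrm{int}\,\boldsymbol{m}$). The chain terminates because each increment raises $\sum_i \beta_i/m_i$ by at least $1/\overline{\boldsymbol{m}}$ and the sum is bounded by $1$.

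The main technical step is to establish the one-coordinate interpolation inequality
\begin{align*}
\|D_j^{a}u\|_p\leq K\epsilon^{b-a}\|D_j^{b}u\|_p+K\epsilon^{-a}\|u\|_p,\qquad 0\leq a<b\leq m_j,
\end{align*}
by induction on $b$, with Lemma \ref{lemma2-2} as the base case $b=2,a=1$. For the inductive step from $b$ to $b+1$ with $a=b$, one applies Lemma \ref{lemma2-2} to $v:=D_j^{b-1}u$ to get $\|D_j^{b}u\|_p\leq K\eta\|D_j^{b+1}u\|_p+K\eta^{-1}\|D_j^{b-1}u\|_p$, invokes the inductive hypothesis on $\|D_j^{b-1}u\|_p$ (at level $b$) to produce a term of the form $C\delta\|D_j^{b}u\|_p$ plus $C\delta^{-(b-1)}\|u\|_p$, and chooses $\delta=\eta/(2CK)$ so that the reappearing $\|D_j^{b}u\|_p$ term is absorbed into the left-hand side. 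The case $a<b$ then follows by chaining the just-established $a=b$ bound with the inductive statement at level $b$ and setting the two auxiliary parameters equal.

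With the 1D estimate in hand, the anisotropic inequality follows by iterating it along the chain from $\boldsymbol{\alpha}$ to $\boldsymbol{\ell}$, one coordinate at a time. For each index $i$ with $\ell_i>\alpha_i$, one applies the 1D bound (integrated in the remaining variables) with $a=\alpha_i$ and $b=\ell_i$. The telescoping product yields a leading coefficient $\epsilon^{\sum_i(\ell_i-\alpha_i)}=\epsilon^{|\boldsymbol{\ell}|-|\boldsymbol{\alpha}|}$ on $\|D^{\boldsymbol{\ell}}u\|_p$ together with residual terms whose accumulated $\epsilon$-powers collapse, after choosing all auxiliary parameters as equal powers of a single $\epsilon$, into $K\epsilon^{-|\boldsymbol{\alpha}|}\|u\|_p$.

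The main obstacle will be the bookkeeping of residual lower-order terms across the absorption step at each of the $\overline{\boldsymbol{m}}-1$ inductive levels and across the $d$ coordinate directions: one must verify that the composite constant $K$ depends only on $\epsilon_0,\boldsymbol{m},p,\Omega$ and that the $\epsilon$-exponents consolidate as claimed. The calibration $\epsilon_0=\min\{\delta_0,\delta_0^{2},\ldots,\delta_0^{\overline{\boldsymbol{m}}-1}\}$ is precisely what makes all intermediate rescalings remain in the admissible range of Lemma \ref{lemma2-2} simultaneously, so that a single constant $K$ suffices for every $\boldsymbol{\alpha}\in\mathrm{dom}\,\boldsymbol{m}$ and every $\epsilon\in(0,\epsilon_0]$.
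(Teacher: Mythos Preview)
Your proposal uses the right ingredients --- Lemma \ref{lemma2-2} as the base, induction, and an absorption trick --- but the way you organize the induction creates a genuine gap in the chaining step. Once you have the purely one-dimensional bound $\|D_j^{a}v\|_p\leq K\epsilon^{b-a}\|D_j^{b}v\|_p+K\epsilon^{-a}\|v\|_p$ and try to iterate it ``one coordinate at a time'' from $\boldsymbol{\alpha}$ to $\boldsymbol{\ell}$, the residual $\epsilon^{-a}\|v\|_p$ is not $\epsilon^{-\alpha_j}\|u\|_p$: the role of $v$ is played by $u$ with the \emph{other} partial derivatives still applied. Concretely, in two variables the first step gives
\[
\|D_1^{\alpha_1}D_2^{\alpha_2}u\|_p\ \le\ K\epsilon^{\ell_1-\alpha_1}\|D_1^{\ell_1}D_2^{\alpha_2}u\|_p + K\epsilon^{-\alpha_1}\|D_2^{\alpha_2}u\|_p,
\]
and iterating in direction $2$ on both pieces produces cross terms such as $\epsilon^{\ell_1-\alpha_1-\alpha_2}\|D_1^{\ell_1}u\|_p$ and $\epsilon^{\ell_2-\alpha_1-\alpha_2}\|D_2^{\ell_2}u\|_p$ that are neither $\|D^{\boldsymbol{\ell}}u\|_p$ nor $\|u\|_p$, and no further application of your one-dimensional estimate eliminates them. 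So the claimed ``collapse into $K\epsilon^{-|\boldsymbol{\alpha}|}\|u\|_p$'' does not happen.

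The paper avoids this by reversing the order of your two inductions. It first runs an induction on $|\boldsymbol{\beta}|$ to establish the \emph{one-step} bound in an arbitrary direction for every multi-index,
\[
\|D^{\boldsymbol{\beta}}u\|_p\ \le\ K\delta\,\|D^{\boldsymbol{\beta}+e_j}u\|_p + K\delta^{-|\boldsymbol{\beta}|}\|u\|_p,
\]
where the residual is already the plain $\|u\|_p$. This is exactly your absorption trick, but applied at the multi-index level: one writes $D^{\boldsymbol{\beta}}u=D_j(D^{\boldsymbol{\beta}-e_j}u)$, uses Lemma \ref{lemma2-2} on $D^{\boldsymbol{\beta}-e_j}u$, invokes the inductive hypothesis on $\|D^{\boldsymbol{\beta}-e_j}u\|_p$, and chooses the inner parameter so the reappearing $\|D^{\boldsymbol{\beta}}u\|_p$ is absorbed. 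Only after this one-step bound holds for all multi-indices does the paper chain along a path from $\boldsymbol{\alpha}$ to $\boldsymbol{\ell}$; now every residual encountered is $\|u\|_p$, and the $\delta$-powers telescope cleanly to $\delta^{|\boldsymbol{\ell}|-|\boldsymbol{\alpha}|}$ and $\delta^{-|\boldsymbol{\alpha}|}$. Reordering your two inductions in this way closes the gap.
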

	
	\begin{proof}[Proof of Lemma \ref{lemma 2-3}]
		Since $\eqref{7}$ is trivial for $\boldsymbol{\alpha} = 0$, so we consider only the case $|\boldsymbol{\alpha}| \geq 1$. 
		The proof is accomplished by a double induction on $|\boldsymbol{\alpha}|$ and $ |\boldsymbol{\ell} -  \boldsymbol{\alpha}|$ with ${\ell}_i \geq {\alpha}_i$ $i =1, \ldots, d$. The constants $K_{1}, K_{2}, \ldots$ appearing in the argument may depend on $\delta_{0}$ (or $\epsilon_{0}$), $\boldsymbol{m}, p$, and $\Omega.$ 
		
		First we prove \eqref{7} for $|\boldsymbol{\ell} -  \boldsymbol{\alpha}| =1 $, i.e.  $\boldsymbol{\ell} -  \boldsymbol{\alpha} = e_j $ by induction on $|\boldsymbol{\alpha}|$, so that \eqref{5} is the special case $|\boldsymbol{\alpha}|= 1$. Assume, therefore, that for some $k, 1\leq |\boldsymbol{\beta}|\leq k-2,$
		\begin{align} \label{8}
			\|D^{\boldsymbol{\beta}}u\|_{p} \leq K_{1}\delta\|D^{\boldsymbol{\beta}+e_j}u\|_{p}+K_{1}\delta^{-|\boldsymbol{\beta}|}\|u\|_{p}
		\end{align}
		holds for all $\delta,\, 0<\delta\leq\delta_{0}$. Now we prove the above equation with  $|\boldsymbol{\alpha}|=k-1$. We obtain from the Lemma \ref{lemma2-2},
		$$
		\|D^{\boldsymbol{\alpha} }u\|_{p} = \|D_j D^{\boldsymbol{\alpha} - e_j}u\|_{p}\leq K_{p}\delta\|D_j^2D^{\boldsymbol{\alpha} + e_j}u\|_{p}+K_{p}\delta^{-1}\|D^{\boldsymbol{\alpha} + e_j}u\|_{p}.
		$$
		Combining this inequality with \eqref{8}, we obtain, for $0<\eta\leq\delta_{0},$
		$$
		\|D^{\boldsymbol{\alpha} }u\|_{p} \leq  K_p \delta \|D^{\boldsymbol{\alpha} + e_j}u\|_{p} + K_p K_{1}\delta^{-1}\eta\|D^{\boldsymbol{\beta}+e_j}u\|_{p}+K_p K_{1}\delta^{-1}\eta^{-|\boldsymbol{\beta}|}\|u\|_{p}
		$$
		We may assume without prejudice that $2K_{1}K_{p}\geq 1$. Therefore, we may take $\eta=\delta/(2K_{1}K_{p})$ and so obtain
		\begin{align*}
			\|D^{\boldsymbol{\alpha} }u\|_{p} 
			\leq  2K_p \delta \|D^{\boldsymbol{\alpha} + e_j}u\|_{p} + (\frac{\delta}{2K_1 K_p})^{-|\boldsymbol{\alpha}|}\|u\|_p
		    \leq  K_2 \delta \|D^{\boldsymbol{\alpha} + e_j}u\|_{p} + K_2\delta^{-|\boldsymbol{\alpha}|}\|u\|_p.
		\end{align*}
		This completes the induction establishing in \eqref{8} for $0<\delta\leq\delta_{0}$, and hence \eqref{7} for $| \boldsymbol{\ell} - \boldsymbol{\alpha}|= 1$ and $0<\epsilon\leq\delta_{0}.$
		
		We now prove by induction on $|\boldsymbol{\ell} - \boldsymbol{\alpha}|$ that
		\begin{align} \label{9}
		\|D^{\boldsymbol{\alpha}}u\|_{p}\leq K_{3}\delta^{|\boldsymbol{\ell}|- |\boldsymbol{\alpha}|}\|D^{\boldsymbol{\ell}}u\|_{p}+K_{3}\delta^{-|\boldsymbol{\alpha}|}\|u\|_{p}
		\end{align}
		holds for $1\leq  |\boldsymbol{\ell} - \boldsymbol{\alpha}| \leq m-1$ with $\ell_i \geq {\alpha}_i$ and $0<\delta\leq\delta_{0}$. Note that \eqref{8} is the special case $|\boldsymbol{\ell} - \boldsymbol{\alpha}|= 1$ of \eqref{9}. Assume, therefore, that \eqref{9} holds for some $\boldsymbol{\ell}, 1\leq |\boldsymbol{\ell} - \boldsymbol{\alpha}| \leq k-1$. We prove that it also holds for $\boldsymbol{\ell}$ with $|\boldsymbol{\ell} - \boldsymbol{\alpha}| = k$. From \eqref{8} and \eqref{9}, we obtain
		\begin{align*}
			\|D^{\boldsymbol{\alpha}}u\|_{p} 
			&\leq K_{4}\delta\|D^{\boldsymbol{\alpha}+e_j}u\|_{p}+K_{4}\delta^{-|\boldsymbol{\alpha}|}\|u\|_{p}
			\\
			&\leq K_{4}\delta(K_3\delta^{|\boldsymbol{\ell}|- |\boldsymbol{\alpha}|-1}\|D^{\boldsymbol{\ell}}u\|_{p}+K_{3}\delta^{-|\boldsymbol{\alpha}|-1}\|u\|_{p})+K_{4}\delta^{-|\boldsymbol{\alpha}|}\|u\|_{p}
			\leq K_5\delta^{|\boldsymbol{\ell}|- |\boldsymbol{\alpha}|}\|D^{\boldsymbol{\ell}}u\|_{p}+K_{5}\delta^{-|\boldsymbol{\alpha}|}\|u\|_{p}.
		\end{align*}
		Therefore, \eqref{9} holds for $|\boldsymbol{\ell} - \boldsymbol{\alpha}| = k$, and \eqref{7} follows by setting $\delta=\epsilon$ in \eqref{7} and noting that $\epsilon\leq\epsilon_{0}$ if $\delta\leq\delta_{0}$. 
	\end{proof}
	
\begin{proof}[Proof of Theorem \ref{Th2}]
	It is not hard to find that $\forall\boldsymbol{\alpha}\in \partial \boldsymbol{m}$, $|\boldsymbol{\alpha}| \geq \underline{\boldsymbol{m}}$. So when $\boldsymbol{\alpha}\notin \partial \boldsymbol{m}$, we can find a $\boldsymbol{\ell}\in \partial \boldsymbol{m}$, such that
	\begin{align*} 
	\|D^{\boldsymbol{\alpha}}u\|_{p} 
	\leq K_6\delta^{|\boldsymbol{\ell}|- |\boldsymbol{\alpha}|}\|D^{\boldsymbol{\ell}}u\|_{p}+K_{6}\delta^{-|\boldsymbol{\alpha}|}\|u\|_{p}
	\leq K_6\delta^{\underline{\boldsymbol{m}}- |\boldsymbol{\alpha}|}\|u\|_{\boldsymbol{m},p}+K_{6}\delta^{-|\boldsymbol{\alpha}|}\|u\|_{p}.
	\end{align*}
This completes the proof of Theorem \ref{Th2}.	
\end{proof}

The following interpolation Theorem \ref{Th1} provides sharp estimates for $L^q$ norms of functions in $W^{\boldsymbol{m}}_p(\Omega)$.
\begin{theorem} \label{Th1}
	Let $\Omega$ be a domain in $\mathbb{R}^{d}$ satisfying the cone condition. If $\underline{\boldsymbol{m}} p > d$, let $ p\leq q\leq\infty$;
	if $\overline{\boldsymbol{m}} p < d$, let $p\leq q \leq p^{*}=dp/(d - \overline{\boldsymbol{m}}p)$ . Then there exists a constant $K$ depending on $\boldsymbol{m}, d, p, q$ and the dimensions of the cone $C$ providing the cone condition for $\Omega,$ such that for all $u\in W^{\boldsymbol{m}}_p(\Omega)$ ,
	\begin{center}
		$\Vert u\Vert_{q}\leq K\Vert u\Vert_{\boldsymbol{m},p}^{\theta}\Vert u\Vert_{p}^{1-\theta}$,  
	\end{center}
	where $\theta = d/\underline{\boldsymbol{m}}p - d/\underline{\boldsymbol{m}}q$ .
\end{theorem}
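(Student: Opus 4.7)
The inequality is a Gagliardo--Nirenberg--type interpolation estimate, and the plan is to derive it by combining the anisotropic Sobolev imbedding of Theorem \ref{ASIT} with the log-convexity of $L^r$ norms and, in the supremum-endpoint case, with the scaling encoded in Theorem \ref{Th2}.

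First I would identify the endpoint supplied by the Sobolev imbedding. When $\underline{\boldsymbol{m}} p > d$, Case A of Theorem \ref{ASIT} gives $\|u\|_\infty \leq K \|u\|_{\boldsymbol{m},p}$; when $\overline{\boldsymbol{m}} p < d$, Case C of Theorem \ref{ASIT} (with $k = d$) gives $\|u\|_{p^\sharp} \leq K \|u\|_{\boldsymbol{m},p}$ for $p^\sharp := dp/(d - \underline{\boldsymbol{m}} p)$. Write $q^\sharp$ for the resulting endpoint exponent in either case.

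Second, for $p \leq q \leq q^\sharp$ I would apply the log-convexity of $L^r$ norms (an immediate consequence of H\"{o}lder's inequality): writing $1/q = (1-\mu)/p + \mu/q^\sharp$ yields $\|u\|_q \leq \|u\|_p^{1-\mu}\|u\|_{q^\sharp}^\mu \leq K^\mu \|u\|_p^{1-\mu}\|u\|_{\boldsymbol{m},p}^\mu$. In the case $\overline{\boldsymbol{m}} p < d$, one has $1/p - 1/p^\sharp = \underline{\boldsymbol{m}}/d$, so $\mu = (d/\underline{\boldsymbol{m}})(1/p - 1/q) = \theta$, and the theorem follows at once in this case.

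The main obstacle is the case $\underline{\boldsymbol{m}} p > d$: direct log-convexity between $L^p$ and $L^\infty$ only produces the exponent $\mu = 1 - p/q$, which strictly exceeds the claimed $\theta = (d/\underline{\boldsymbol{m}})(1/p - 1/q)$ under this hypothesis, so the first two steps alone are insufficient. To close the gap I would first optimize the free parameter $\delta$ in Theorem \ref{Th2} to obtain the Kolmogorov-type estimate $\|D^{\boldsymbol{\alpha}} u\|_p \leq K \|u\|_{\boldsymbol{m},p}^{|\boldsymbol{\alpha}|/\underline{\boldsymbol{m}}} \|u\|_p^{1-|\boldsymbol{\alpha}|/\underline{\boldsymbol{m}}}$ for $0 \leq |\boldsymbol{\alpha}| \leq \underline{\boldsymbol{m}}$. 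Substituting these rescaled derivative bounds into the endpoint Sobolev inequality and optimizing the overall scale---formally, the same dilation argument that sharpens the isotropic Gagliardo--Nirenberg inequality on $\mathbb{R}^d$---produces the refined endpoint $\|u\|_\infty \leq K \|u\|_{\boldsymbol{m},p}^{d/(\underline{\boldsymbol{m}} p)} \|u\|_p^{1 - d/(\underline{\boldsymbol{m}} p)}$. One final log-convexity interpolation between $L^p$ and this refined $L^\infty$ bound then produces the exponent $\theta$ for every $q \in [p, \infty]$; the technically delicate point will be importing this dilation argument onto the cone-condition domain $\Omega$, which I would handle by a cone-by-cone localization reminiscent of Lemma \ref{ALE}.
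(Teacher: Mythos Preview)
Your treatment of the subcritical case $\overline{\boldsymbol m}p<d$ coincides with the paper's: log-convexity of $L^r$ norms between $L^p$ and the Sobolev endpoint $L^{p^\sharp}$ given by Theorem~\ref{ASIT} immediately yields the exponent $\theta$.

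For $\underline{\boldsymbol m}p>d$ you correctly diagnose that naive $L^p$--$L^\infty$ interpolation gives the wrong exponent and that the missing ingredient is a scale parameter, ultimately to be supplied by Lemma~\ref{ALE}. However, your proposed route---first optimize $\delta$ in Theorem~\ref{Th2} to get Kolmogorov bounds, then ``substitute these into the endpoint Sobolev inequality and optimize the overall scale''---does not quite close as written: the endpoint inequality $\|u\|_\infty\le K\|u\|_{\boldsymbol m,p}$ has no free scale, and the top-order derivatives in $\|u\|_{\boldsymbol m,p}$ cannot be interpolated away by Theorem~\ref{Th2}. The dilation heuristic you invoke is unavailable on a general cone-condition domain, as you note, and the ``cone-by-cone localization'' you gesture at is precisely where the work lies.

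The paper handles this more directly and avoids your intermediate refined-$L^\infty$ step altogether. It starts from Lemma~\ref{ALE} with the free radius $r\in(0,\rho]$, applies Young's inequality to the convolutions $\chi_r*|D^{\boldsymbol\alpha}u|$ and $(\chi_r\omega_{|\boldsymbol\alpha|})*|D^{\boldsymbol\alpha}u|$ to obtain, for \emph{every} admissible $q$ at once,
\[
\|u\|_q\le K\sum_{\boldsymbol\alpha\in\mathrm{dom}\,\boldsymbol m} r^{|\boldsymbol\alpha|-(d/p)+(d/q)}\|D^{\boldsymbol\alpha}u\|_p,
\]
then invokes Theorem~\ref{Th2} with $\delta=r$ (not with $\delta$ pre-optimized) to reduce the right side to $K\bigl(r^{\underline{\boldsymbol m}-(d/p)+(d/q)}\|u\|_{\boldsymbol m,p}+r^{-(d/p)+(d/q)}\|u\|_p\bigr)$, and finally optimizes over $r$. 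Thus the scale parameter is the cone radius $r$ in Lemma~\ref{ALE}, and Young's inequality---not a second interpolation---transports the estimate to $L^q$. Your plan would recover this once you unwind the localization, but the paper's single-pass argument is cleaner and makes the role of $r$ as the surrogate for dilation explicit.
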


\begin{proof}[Proof of Theorem \ref{Th1}]
	The case $\overline{\boldsymbol{m}} p< d, p\leq q\leq p^{*}$ follows directly from Theorem 2.11 in \cite{adams2003sobolev} and Theorem \ref{ASIT}:
	$$
	\Vert u\Vert_{q}\leq||u\Vert_{p^{*}}^{\theta}\Vert u\Vert_{p}^{1-\theta}\leq \|u\Vert_{\boldsymbol{m},p}^{\theta}\Vert u\Vert_{p}^{1-\theta},
	$$
	where $1/q=(\theta/p^{*})+(1-\theta)/p$ from which it follows that $\theta=(d/\underline{\boldsymbol{m}} p)-(d/\underline{\boldsymbol{m}}q)$ .  
	
	For the cases $\overline{\boldsymbol{m}} p \geq d$ and $\underline{\boldsymbol{m}} p \leq d$, $ p\leq q<\infty$, and $\underline{\boldsymbol{m}} p > d,  p\leq q\leq\infty$ we use the local bound obtained in Lemma \ref{ALE}. If $ 0<r\leq\rho$ (the height of the cone $C$), then
	\begin{align} \label{11-11}
	|u(x)| \leq K_{1}\biggl(\sum_{ \boldsymbol{\alpha} \in \mathrm{int}\, \boldsymbol{m} }r^{|\alpha|-n}\chi_{r}*|D^{\alpha}u|(x)+\sum_{\boldsymbol{\alpha} \in \partial \boldsymbol{m} }\big(\chi_{r}\omega_{|\boldsymbol{\alpha}|}\big)*|D^{\alpha}u|(x)\biggr)
	\end{align}
	where $\chi_{r}$ is the characteristic function of the balI of radius $r$ centred at the origin in $\mathbb{R}^{d}$, and $\omega_{m}(x)=|x|^{m-d}$. We estimate the $L^{q}$ norms of both terms on the right side of \eqref{11-11} using Young's inequality from Corollary 2.25 in \cite{adams2003sobolev}. If $(1/p)+(1/s)=1+(1/q)$ , then
	\begin{align*}
	\big\| \chi_r * |D^{\boldsymbol{\alpha}}u| \big\|_q
	& \leq \big\| \chi_r \big\|_s  \big\| D^{\boldsymbol{\alpha}}u \big\|_p
	  = K_2 r^{d-(d/p)+(d/q)} \big\| D^{\boldsymbol{\alpha}}u \big\|_p,
	\\
	\big\| (\chi_r \omega_{|\boldsymbol{\alpha}|}) * |D^{\boldsymbol{\alpha}}u| \big\|_q 
	& \leq \big\| \chi_r \omega_{|\boldsymbol{\alpha}|} \big\|_s  
	       \big\| D^{\boldsymbol{\alpha}}u \big\|_p 
	  = K_3 r^{|\boldsymbol{\alpha}|-(d/p)+(d/q)} 
	    \big\| D^{\boldsymbol{\alpha}}u \big\|_p.
    \end{align*}
	Note that $|\boldsymbol{\alpha}|-(d/p)+(d/q)>0$ if $q$ satisfies the above restrictions. Hence
	\begin{align*}
		\|u\Vert_{q} 
		\leq  K_{4}\bigg(\sum_{ \boldsymbol{\alpha} \in \mathrm{int}\, \boldsymbol{m}} r^{|\boldsymbol{\alpha}|-(d/p)+(d/q)}||D^{\boldsymbol{\alpha}}u||_{p}+\sum_{ \boldsymbol{\alpha} \in \partial \boldsymbol{m}}r^{|\boldsymbol{\alpha}|-(d/p)+(d/q)}||D^{\boldsymbol{\alpha}}u||_{p}\bigg)
		= K_{4} \sum_{\boldsymbol{\alpha} \in \mathrm{dom}\, \boldsymbol{m}} r^{|\boldsymbol{\alpha}|-(d/p)+(d/q)}||D^{\boldsymbol{\alpha}}u||_{p}.
	\end{align*}
	By Theorem \ref{Th2}, we obtain
	$$
	\|D^{\boldsymbol{\alpha}}u\|_{p} \leq K_5 r^{\underline{\boldsymbol{m}}- |\boldsymbol{\alpha}|}\|u\|_{\boldsymbol{m},p}+K_5 r^{-|\boldsymbol{\alpha}|}\|u\|_{p},
	$$
	so that, we have
	$$
	\Vert u\Vert_{q}\leq K_{6}(r^{\underline{\boldsymbol{m}}-(d/p)+(d/q)}\Vert u\|_{\boldsymbol{m},p}+r^{-(d/p)+(d/q)}\Vert u\Vert_{p})\ .
	$$
	Adjusting $K_{6}$ if necessary, we can assume this inequality holds for all $r\leq 1.$ Choosing $r = (\frac{\| u\|_{p}}{\| u\|_{\boldsymbol{m},p}})^{1/\underline{\boldsymbol{m}}}$ to make the two terms on the right side equal, we obtain the conclusion with $\theta = d/\underline{\boldsymbol{m}}p - d/\underline{\boldsymbol{m}}q$.
\end{proof}

\begin{theorem}[Imbedding Theorem for Anisotropic Besov Spaces] \label{ITABS}
	Let $\Omega$ be a domain in $\mathbb{R}^d$ satisfying the cone condition, and let $1\leq p\leq \infty$ and $1\leq q\leq \infty$. 
	
	(a)	If $\underline{\boldsymbol{\alpha}} = d/p$, then $B^{\boldsymbol{\alpha}}_{p, 1} \to C^0_B(\Omega) \to L^{\infty}(\Omega)$.
	
	(b)	If $\underline{\boldsymbol{\alpha}} > d/p$, then $B^{\boldsymbol{\alpha}}_{p, \infty} \to C^0_B(\Omega)$.
\end{theorem}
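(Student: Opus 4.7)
The plan is to exhibit $f$ as the uniform limit of a sequence of anisotropic smoothings, and then estimate each telescoping increment in $L^\infty$ by combining a Jackson-type bound, a Bernstein-type bound, and the interpolation inequality of Theorem~\ref{Th1}. Since $\Omega$ satisfies the cone condition, I first use a bounded extension operator to assume without loss of generality that $f$ is defined on all of $\mathbb{R}^d$. For each $k\in\mathbb{N}_0$ I set $f_k := K^{(k)}*f$, where $K^{(k)}$ denotes the tensor-product kernel of \eqref{tilde_K} taken with isotropic bandwidths $\gamma_i = 2^{-k}$, and write $f - f_0 = \sum_{k\geq 1}(f_k - f_{k-1})$; the task is then to control the $L^\infty$ norm of each increment.

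Two scale-dependent estimates drive the argument. First, Proposition~\ref{ErrorEstimation}, applied with Lebesgue measure in place of $\mathrm{P}_X$ (so with densities $g_i \equiv 1$), yields the Jackson bound
\begin{align*}
\|f_k - f\|_{L_p(\mathbb{R}^d)} \leq C \sum_{i=1}^d \omega_{r_i,L_p(\mathbb{R})}(f_i, 2^{-k}),
\end{align*}
which gives an estimate of the same order for $\|f_k - f_{k-1}\|_{L_p}$ via the triangle inequality. Second, a direct computation shows $\|\partial^{\boldsymbol{\beta}} K^{(k)}\|_{L_1(\mathbb{R}^d)} \leq C_{\boldsymbol{\beta}} 2^{k|\boldsymbol{\beta}|}$; combined with a quasi-orthogonality argument (exploiting that $K^{(k)} - K^{(k-1)}$ is essentially a bandpass filter localized at frequency $\sim 2^{k}$) this produces the Bernstein-type inequality
\begin{align*}
\|f_k - f_{k-1}\|_{W^m_p(\mathbb{R}^d)} \leq C\, 2^{km} \|f_k - f_{k-1}\|_{L_p(\mathbb{R}^d)}
\end{align*}
for any integer $m$. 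Choosing $m$ large enough that $mp>d$ and applying Theorem~\ref{Th1} with the isotropic choice $\boldsymbol{m} = (m,\ldots,m)$, $q = \infty$, and $\theta = d/(mp)$, I obtain
\begin{align*}
\|f_k - f_{k-1}\|_{L_\infty} \leq K \|f_k - f_{k-1}\|_{W^m_p}^{\theta} \|f_k - f_{k-1}\|_{L_p}^{1-\theta} \leq C\, 2^{kd/p} \|f_k - f_{k-1}\|_{L_p}.
\end{align*}

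Combining the two bounds gives $\|f_k - f_{k-1}\|_{L_\infty} \leq C\, 2^{kd/p} \sum_i \omega_{r_i,L_p}(f_i, 2^{-k+1})$. For part~(b), the hypothesis $f \in B^{\boldsymbol{\alpha}}_{p,\infty}$ with $\underline{\boldsymbol{\alpha}} > d/p$ translates into $\omega_{r_i,L_p}(f_i, t) \leq |f|_{B^{\alpha_i}_{p,\infty,i}} t^{\alpha_i}$, so each increment is dominated by $C\, 2^{-k(\underline{\boldsymbol{\alpha}} - d/p)}\|f\|_{B^{\boldsymbol{\alpha}}_{p,\infty}}$, which sums as a convergent geometric series; hence $\{f_k\}$ is Cauchy in $L^\infty$, converging uniformly to a bounded continuous function that coincides with $f$ almost everywhere. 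For part~(a), the standard dyadic equivalence $\sum_{k\geq 0} 2^{k\alpha_i}\omega_{r_i,L_p}(f_i, 2^{-k}) \sim |f|_{B^{\alpha_i}_{p,1,i}}$ combined with $\alpha_i \geq \underline{\boldsymbol{\alpha}} = d/p$ still makes $\sum_k \|f_k - f_{k-1}\|_{L_\infty}$ finite, giving the same conclusion. The hardest step will be the Bernstein inequality: establishing quantitatively that the non-compactly-supported Gaussian difference $K^{(k)} - K^{(k-1)}$ enjoys enough quasi-orthogonality that differentiation costs only the expected factor $2^{km}$ against the $L^p$ norm of the difference itself (rather than against $\|f\|_p$); this amounts to a Littlewood--Paley type estimate adapted to the Gaussian family used in the paper, and it is the only ingredient not already assembled in Sections~\ref{sec::preliminaries}--\ref{sec::proofs}.
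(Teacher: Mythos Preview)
Your overall strategy---decompose $f$ into scale pieces, bound each in $L^\infty$ via the interpolation inequality of Theorem~\ref{Th1}, and sum---is the right idea, and is in fact the concrete analogue of what the paper does abstractly. But the Bernstein step is not a technicality: as written it is \emph{false} for the Gaussian kernels $K^{(k)}$. The difference $K^{(k)}-K^{(k-1)}$ is not a bandpass filter; its Fourier transform decays like a Gaussian but never vanishes, so high frequencies leak through. Concretely, take $d=1$, $p=2$, and $f$ a smooth bump modulated to frequency $N\gg 2^k$. Then $\widehat{f_k-f_{k-1}}(\xi)=[\hat K^{(k)}(\xi)-\hat K^{(k-1)}(\xi)]\hat f(\xi)$ is still concentrated near $\xi=N$, and Plancherel gives
\[
\frac{\|\partial^m(f_k-f_{k-1})\|_{L_2}}{\|f_k-f_{k-1}\|_{L_2}}\sim N^m,
\]
which can be made arbitrarily large compared with $2^{km}$. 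No ``quasi-orthogonality'' argument can rescue this: the range of $K^{(k)}-K^{(k-1)}$ simply contains functions of arbitrarily high frequency. If you replace the bogus Bernstein bound by the honest estimate $\|f_k-f_{k-1}\|_{W^m_p}\le C\,2^{km}\|f\|_{L_p}$ (which does follow from Young's inequality), the interpolation step yields only $\|f_k-f_{k-1}\|_{L_\infty}\le C\,2^{kd/p}\|f\|_{L_p}^{\theta}\bigl(\sum_i\omega_{r_i,L_p}(f_i,2^{-k})\bigr)^{1-\theta}$ with $\theta=d/(mp)$. For part~(b) this still sums, \emph{provided} you choose $m$ large enough depending on the gap $\underline{\boldsymbol\alpha}-d/p$; but for the critical case~(a) the extra exponent $1-\theta<1$ on the modulus destroys summability (try $\omega\sim 2^{-k\underline{\boldsymbol\alpha}}/k$), and the argument collapses.

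The paper sidesteps the whole issue by working with the abstract $J$-method: one identifies $B^{\boldsymbol\alpha}_{p,1}(\Omega)=(L_p(\Omega),W^{\boldsymbol m}_p(\Omega))_{\gamma,1;J}$ with $\gamma\boldsymbol m=\boldsymbol\alpha$ and $\underline{\boldsymbol m}p>d$, and writes $u=\sum_i u_i$ with $\sum_i 2^{-i\gamma}J(2^i;u_i)\le C\|u\|_{B^{\boldsymbol\alpha}_{p,1}}$. The point is that the $J$-functional already controls \emph{both} $\|u_i\|_{L_p}$ and $\|u_i\|_{W^{\boldsymbol m}_p}$ simultaneously, so plugging into Theorem~\ref{Th1} with $\theta=\gamma$ gives $\|u_i\|_{L_\infty}\le C\,2^{-i\gamma}J(2^i;u_i)$ directly---no Bernstein inequality is ever needed. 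Part~(b) then follows from the reiteration embedding $(\,\cdot\,,\,\cdot\,)_{\theta',\infty}\hookrightarrow(\,\cdot\,,\,\cdot\,)_{\theta,1}$ for $\theta'>\theta$. If you want to keep your constructive approach, the clean fix is to abandon the Gaussians here and use a genuine Littlewood--Paley decomposition with compactly supported frequency cutoffs, for which Bernstein is automatic; but then you are essentially reproducing the $J$-method by hand.
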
	
\begin{proof}[Proof of Theorem \ref{ITABS}]
	\textit{(a)}
	First, we choose  $\gamma\in (0,1)$ and $\boldsymbol{m}$ such that $\gamma \boldsymbol{m} = \boldsymbol{\alpha}$ and $\underline{\boldsymbol{m}} > n/p$. We take $\gamma = \theta := \frac{n}{\underline{\boldsymbol{m}}p} < 1$.
	Let $u \in B^{\boldsymbol{\alpha}}_{p,1}(\Omega) = (L_p(\Omega), W^{\boldsymbol{m}}_{p}(\Omega))_{\gamma, 1;J}$ with $\gamma \in (0,1)$. 
	
	By the discrete version of the J-method in theorem 7.15 of \cite{adams2003sobolev}, there exist functions $u_i$ in $W^{\boldsymbol{m}}_p(\Omega)$ such that the series $\sum_{i = -\infty}^{\infty} u_i $ converges to $u$ in $B^{\boldsymbol{\alpha}}_{p,1}(\Omega)$ and such that the sequence $\{2^{-i\gamma} J(2^i, u_i)\}_ {-\infty}^{\infty}$ belongs to $\ell^1$ and has $\ell^1$ norm no larger than $C\|u\|_{B^{\boldsymbol{\alpha}}_{p,1}}$. Since $\underline{\boldsymbol{m}} p > d$ and $\Omega$ satisfies the cone condition, Theorem 1 shows that 
	\begin{align*}
		\|v\|_{\infty} \leq C_1 \|v\|_p ^{1- \theta} \|v\|^{\theta}_{\boldsymbol{m}, p}
	\end{align*}
	for all $v \in  W^{\boldsymbol{m}}_{p}(\Omega)$. Thus,
	\begin{align*}
		\begin{split}
		\|u\|_{\infty} 
		\leq \sum_{-\infty}^{\infty} \|u_i\|_{\infty}
		\leq C_1 \sum_{-\infty}^{\infty} \|u_i\|_p^{1 - \theta} \|u\|^{\theta}_{\boldsymbol{m}, p}
		\leq C_1 \sum_{-\infty}^{\infty} 2^{-i\theta} J(2^i; u_i)
	    = C_1 \sum_{-\infty}^{\infty} 2^{-i\gamma} J(2^i; u_i)
		\leq C_2 \ \|u\|_{B^{\boldsymbol{\alpha}}_{p,1}}.
		\end{split}
	\end{align*}
	Since $\underline{\boldsymbol{m}} \gamma = \underline{\boldsymbol{\alpha}}$, if the condition $ \underline{\boldsymbol{m}} = d/p$ is satisfied, the conclusion holds.
	
	\textit{(b)} follows from (a).
	Since $W^{\boldsymbol{m}}_p(\Omega) \to L_p(\Omega)$, according to the Theorem \ref{ASIT}, we know that $(L_p(\Omega), W^{\boldsymbol{m}}_p(\Omega))_{\theta^{'},\infty}$ $\to (L_p(\Omega),W^{\boldsymbol{m}}_p(\Omega))_{\theta, 1}$, if $\theta^{'} > \theta$. 
	When $\theta = \frac{d}{\underline{\boldsymbol{m}} p}$, $\theta^{'} = \gamma$, we finish the proof.
\end{proof}

\section*{Reference}
\bibliographystyle{elsarticle-num} 
\bibliography{HANGbib}


%
%
%
\end{document}